\DeclareMathOperator{\diag}{\text{diag}}
\newcommand{\remove}[1]{{}}
\renewcommand{\norm}[1]{\lVert #1 \rVert}
\renewcommand{\c}{\mathcal}
\renewcommand{\b}{\mathbf}
\newcommand{\bR}{\mathbb{R}}
\renewcommand{\hat}{\widehat}
\renewcommand{\tilde}{\widetilde}
\renewcommand{\bar}{\overline}
\newcommand{\innerprod}[2]{\langle #1,#2 \rangle}
\newcommand{\mybox}{\hfill\(\Box\)}
\DeclareMathOperator*{\argmin}{\mathrm{argmin}}
\newcommand{\st}{\text{ s.t. }}
\newcommand{\Th}{\text{th}}
\newcommand{\ie}{\textit{i.e.,} }
\newcommand{\eg}{\textit{e.g.,} }
\newcommand{\tand}{\text{ and }}
\renewcommand{\u}[1][]{\ifthenelse{\equal{#1}{}}{\mathbf{w}}{\mathbf{w}^{(#1)}}}
\newcommand{\w}[1][]{\ifthenelse{\equal{#1}{}}{\boldsymbol{\beta}}{\boldsymbol{\beta}^{(#1)}}}
\newcommand{\z}[1][]{\ifthenelse{\equal{#1}{}}{\mathbf{z}}{\mathbf{z}^{(#1)}}}
\newcommand{\h}[1][]{\ifthenelse{\equal{#1}{}}{{\mathbf{h}}}{{\mathbf{h}^{(#1)}}}}
\newcommand{\bdelta}[1][]{\ifthenelse{\equal{#1}{}}{{\boldsymbol{\delta}}}{\boldsymbol{\delta}^{(#1)}}}
\newcommand{\brho}[1][]{\ifthenelse{\equal{#1}{}}{{\boldsymbol{\rho}}}{\boldsymbol{\rho}^{(#1)}}}
\newcommand{\x}{\mathbf{x}}
\renewcommand{\P}{\mathcal{P}}
\newcommand{\ci}{\mathbf{\mathrm{{i}}}}
\newcommand{\e}{\mathrm{e}}
\newtheorem{claim}{Claim}
\newtheorem*{assumption*}{Assumptions}
\newtheorem*{definition*}{Definition}
\newtheorem{theorem}{Theorem}
\newtheorem{lemma}[theorem]{Lemma}
\title{Implicit Bias of Gradient Descent on Linear Convolutional Networks}
{{\centering\author{Suriya Gunasekar \\ TTI at Chicago, USA \\ \texttt{suriya@ttic.edu}   \And Jason D. Lee  \\ USC Los Angeles, USA\\ \texttt{jasonlee@marshall.usc.edu} \And Daniel Soudry \\ Technion, Israel \\ \texttt{daniel.soudry@gmail.com}  \And Nathan Srebro \\ TTI at Chicago, USA \\ \texttt{nati@ttic.edu}}
}
\begin{document}
\maketitle
\begin{abstract}
We show that gradient descent on full width linear convolutional networks of depth $L$ converges to a linear predictor related to the $\ell_{2/L}$ bridge penalty in the frequency domain.  This is in contrast to  fully connected linear networks, where regardless of depth, gradient descent converges to the $\ell_2$ maximum margin solution. 
\end{abstract}
\section{Introduction}
Implicit biases introduced by optimization algorithms  play an crucial role in learning deep neural networks \citep{neyshabur2015search,neyshabur2015path,hochreiter1997flat, keskar2016large, chaudhari2016entropy, dinh2017sharp,andrychowicz2016learning, neyshabur2017geometry,zhang2017understanding,wilson2017marginal,hoffer2017train,Smith2018}. Large scale neural networks used in practice are highly over-parameterized with far more trainable model parameters compared to the number of training examples. Consequently, optimization objectives for learning such high capacity models have many global minima that fit training data perfectly.
However,  minimizing the training loss using specific optimization algorithms take us to not just any global minima, but some special global minima, \eg global minima minimizing some regularizer $\c{R}(\w)$. In over-parameterized models, specially deep neural networks, much, if not most, of the inductive bias of the learned model comes from this implicit regularization from the optimization algorithm. Understanding the implicit bias, \eg via characterizing $\c{R}(\w)$, is thus essential for understanding how and what the model learns.

For example, in linear regression we understand how minimizing an under-determined model (with more parameters than samples) using gradient descent yields the minimum $\ell_2$ norm solution, and for linear logistic regression trained on linearly separable data, \citet{soudry2017implicit} recently showed that gradient descent converges in the direction of the hard margin support vector machine solution,  even though the norm or margin is not explicitly specified in the optimization problem.  Such minimum norm or maximum margin solutions are of course very special among all solutions or separators that fit the training data,  and in particular can ensure generalization \cite{bartlett2003rademacher,kakade2009complexity}.  

Changing the optimization algorithm, even without changing the model, changes this implicit bias, and consequently also changes generalization properties of the learned models \citep{neyshabur2015path,keskar2016large,wilson2017marginal,gunasekar2017implicit,gunasekar2018characterizing}. For example, for linear logistic regression,  using coordinate descent instead of gradient descent return a maximum $\ell_1$ margin solution instead of the hard margin support vector solution solution---an entirely different inductive bias \cite{telgarsky2013margins,gunasekar2018characterizing}. 

Similarly, and as we shall see in this paper, changing to a different parameterization of the same model class can also dramatically change the implicit bias \cite{gunasekar2017implicit}.  In particular,  we study the implicit bias of optimizing multi-layer   fully connected linear networks, and linear convolutional networks (multiple full width convolutional layers followed by a single fully connected layer) using gradient descent. 
Both of these types of models ultimately implement linear transformations, and can implement any linear transformation.  The model class defined by these networks is thus simply the class of all linear predictors, and these models can be seen as mere (over) parameterizations of the class of linear predictors.  Minimizing the training loss on these models is therefore entirely equivalent to minimizing the training loss for linear classification.  Nevertheless, as we shall see, optimizing these networks with gradient descent leads to very different solutions. 

In particular, we show that for fully connected networks with single output, optimizing the exponential loss over linearly separable data using gradient loss  again converges to the homogeneous hard margin support vector machine solution. This holds regardless of the depth of the  network, and hence, at least with a single output, gradient descent on fully connected networks has the same implicit bias as direct gradient descent on the parameters of the linear predictor.  In contrast, training a linear convolutional network with gradient descent biases us toward linear separators that are sparse in the frequency domain.  Furthermore, this bias changes with the depth of the network, and a network of depth $L$ (with $L-1$ convolutional layers), implicitly biases towards  minimizing the $\|{\hat{\w}}\|_{\nicefrac{2}{L}}$  bridge penalty with $\nicefrac{2}{L}\le 1$ of the Fourier transform $\hat{\w}$ of the learned linear predictor $\w$ subject to margin constraints (the gradient descent predictor reaches a stationary point of the $\|{\hat{\w}}\|_{\nicefrac{2}{L}}$ minimization problem).  This is a sparsity inducing regularizer, which induces sparsity more aggressively as  the depth increases.

Finally, in this paper we focus on characterizing \textit{which} global minimum does gradient descent on over-parameterized linear models converge to, while assuming that for appropriate choice of step sizes gradient descent iterates asymptotically minimize the optimization objective.  A related challenge in neural networks,  not addressed in this paper, is an answer to \textit{when} does gradient descent minimize the non-convex empirical loss objective  to reach \textit{a} global minimum. This problem while hard in worst case, has been studied for linear networks. Recent work have concluded that with  sufficient over-parameterization (as is the case with our settings),  loss landscape of linear models are well behaved and all local minima are global minima making the problem tractable  \cite{burer2003nonlinear,journee2010low,kawaguchi2016deep,nguyen2017loss,lee2016gradient}.

\paragraph{Notation} 
We typeface vectors with bold characters \eg $\u,\w,\x$. Individual entries of a vector $\z\in\bR^D$ are indexed using  $0$ based indexing as $\z{[d]}$ for $d=0,1,\ldots,D-1$. 
Complex numbers are represented  in the polar form as $z=|z|\e^{\ci\phi_z}$ with $|z|\in\bR_+$ denoting the magnitude of $z$ and $\phi_z\in[0,2\pi)$ denoting the phase. $z^*=|z|\e^{-\ci\phi_z}$ denotes the complex conjugate of $z$. The complex inner product between $\z,\w\in\mathbb{C}^{D}$ is given by $\innerprod{\z}{\w}=\sum_{d=1}^D \z{[d]} \w^*{[d]}=\z^\top{\w^*}$. The $D^\Th$ complex root of $1$ is denoted by $\omega_D=\e^{-\frac{2\pi\ci}{D}}$. For  $\z\in\bR^D$ we use the notation $\hat{\z}\in\mathbb{C}^D$ to denote  the representation of $\z$ in the discrete Fourier basis given by, 
$\hat{\z}[d]=\frac{1}{\sqrt{D}}\sum_{p=0}^{D-1} \z{[p]}\omega^{pd}_D.$ For integers $D$ and $a$, we denote the modulo operator as $a\text{ mod }D=a-D\left\lfloor\frac{a}{D}\right\rfloor$. 
Finally, for multi-layer linear networks (formally defined in Section~\ref{sec:lnn}), we will use $\u\in\c{W}$ to denote parameters of the model in general domain $\c{W}$, and $\w_{\u}$ or simply $\w$ to denote the equivalent linear predictor.

\section{Multi-layer Linear  Networks}\label{sec:lnn}
We consider feed forward linear networks that map input features $\x\in\bR^D$ to a single real valued output $f_{\u}(\x)\in\bR$, where $\u$ denote the parameters of the network. Such networks can be thought of as  directed acyclic graphs where each edge is associated with a weight, and the value at each node/unit is the weighted sum of  values from the parent nodes. The input features form source nodes with no incoming edges and the output is a sink node with no outgoing edge. Every such network realizes a linear function $\x\to\innerprod{\x}{\w_{\u}}$, where $\w_{\u}\in\bR^D$ denotes the effective linear predictor. 

In multi-layer networks, the nodes are arranged in layers, so  an $L$--layer network represents a composition of $L$ linear maps. We use the convention that, the input  $\x\in\bR^D$ is indexed as the zeroth layer $l=0$, while the output forms the final layer with $l=L$. The outputs of  nodes in layer $l$ are denoted by $\h_l\in\bR^{D_l}$,  where $D_l$ is the number of nodes in layer $l$. We also use $\u_l$ to denote the parameters of the linear map between $\h_{l-1}$ and $\h_l$, and $\u=[\u_l]_{l=1}^L$ to denote the collective set of all parameters of the linear network. 

\textbf{Linear fully connected network}\quad 
In a  fully connected linear network, the nodes between successive layers $l-1$ and $l$ are densely connected with edge weights $\u_l\in\bR^{D_{l-1}\times D_l}$, and all the weights are independent parameters. This model class is parameterized by $\u=[\u_l]_{l=1}^L\in\prod_{l=1}^L \bR^{D_{l-1}\times D_l}$ and the computation for intermediate nodes $\h_l$ and the composite linear map $f_{\u}(\x)$ is given by,
\begin{equation}
\h_l=\u_l^\top\h_{l-1}\quad \tand \quad f_{\u}(\x)=\h_L=\u_L^\top\u_{L-1}^\top\ldots\u_1^\top\x.
\label{eq:fcn}
\end{equation}

\textbf{Linear convolutional network}\quad We consider one-dimensional convolutional network architectures where each non-output layer has exactly $D$ units (same as the input dimensionality) and the linear transformations from layer ${l-1}$ to layer $l$ are given by the following circular convolutional operation\footnote{We follow the convention used in neural networks literature that refer to the operation in \eqref{eq:conv} as convolution, while in the signal processing terminology, \eqref{eq:conv} is known as the discrete  circular cross-correlation operator}  parameterized by full width  filters with weights $[\u_l\in\bR^D]_{l=1}^{L-1}$.  For $l=1,2,\ldots,L-1$, 
\begin{equation}
\quad\h_l[d]=\frac{1}{\sqrt{D}}\sum_{k=0}^{D-1}\u_l[k]\,\h_{l-1}\left[(d+k)\text{ mod } D\right]:=\left(\h_{l-1}\star \u_l\right)[d].
\label{eq:conv}
\end{equation}

The output layer is fully connected and parameterized by weights $\u_L \in \bR^D$. The  parameters of the model class therefor consists of $L$ vectors of size $D$ collectively denoted by ${\u=[\u_l]_{l=1}^{L}\in\prod_{l=1}^L \bR^{D}}$, and the composite linear map $f_{\u}(\x)$  is given by:
\begin{equation}
f_{\u}(\x)=\left((((\x\star\u_1)\star\u_2)\ldots)\star\u_{L-1}\right)^\top \u_L.
\label{eq:cn}
\end{equation}
\textit{Remark: }We use circular convolution with a scaling of $\nicefrac{1}{\sqrt{D}}$ to make the analysis cleaner. For convolutions with zero-padding, we expect a similar behavior. Secondly, since our goal here to study implicit bias in sufficiently over-parameterized models, we only study full dimensional convolutional filters. In practice it is common to have filters of width $K$  smaller than the number of input features, which can change the implicit  bias.

The  fully connected and convolutional linear networks described above can both be represented in terms of a mapping $\P:\c{W}\to\bR^D$ that maps the input parameters $\u\in\c{W}$ to a linear predictor in $\bR^D$, such that the output of the network is given by $f_{\u}({\x})=\innerprod{\x}{\P(\u)}$.  
For fully connected networks, the mapping  is given by $\P_{full}(\u)=\u_1\u_{2}\ldots\u_L$, and for convolutional networks, $\P_{conv}(\u)=\left(\big((\u_L^{\downarrow}\star\u_{L-1})\star\u_{L-2}\big)\ldots\star\u_{1}\right)^{\downarrow}$, where $\u^{\downarrow}$ denotes the flipped vector corresponding to $\u$, given by ${\u}^{\downarrow}[k]=\u{[D-k-1]}$ for $k=0,1,\ldots,D-1$.  

\textbf{Separable linear classification}\quad Consider a binary classification dataset $\{(\x_{n},y_{n}):n=1,2,\ldots N\}$ with  $\x_n\in\bR^D$ and  $y_n\in\{-1,1\}$. The empirical risk minimization objective for training a linear network parameterized as $\P(\u)$ is  given as follows,
\begin{equation}
\min_{\u\in\c{W}}\;\c{L}_{\P}(\u) := \sum_{n=1}^{N} \ell(\innerprod{\x_n}{\P(\u)},y_n),
\label{eq:lmu}
\end{equation}
where $\ell:\bR\times \{-1,1\}\to\bR_+$ is some surrogate loss for classification accuracy, \eg logistic loss  $\ell(\hat{y},y)=\log(1+\exp(-\hat{y}y))$ and exponential loss $\ell(\hat{y},y)= \exp(-\hat{y}y)$.

It is easy to see that both fully connected and convolutional networks of any depth $L$ can realize any linear predictor $\w\in\bR^D$.  The model class expressed by both networks is therefore simply the unconstrained class of linear predictors, and the two architectures are merely different (over) parameterizations of this class
\[\{\P_{full}(\u):\u=[\u_l\in\bR^{D_{l-1}\times D_l}]_{l=1}^L\}=\{\P_{conv}(\u):\u=[\u_l\in\bR^{D}]_{l=1}^L\}=\bR^D.\] 
Thus, the empirical risk minimization problem in \eqref{eq:lmu} is equivalent to the following optimization over the linear predictors $\w=\P(\u)$: 
\begin{equation}
\min_{\w\in\bR^D}\;\c{L}(\w):=\sum_{n=1}^{N} \ell(\innerprod{\x_n}{\w},y_n).
\label{eq:lm}
\end{equation}
Although the optimization problems \eqref{eq:lmu} and \eqref{eq:lm} are exactly equivalent in terms of the set of global minima, in this paper, we show that optimizing \eqref{eq:lmu} with different parameterizations  leads  to very different classifiers compared to optimizing \eqref{eq:lm} directly.  

In particular, consider problems \eqref{eq:lmu}/\eqref{eq:lm} on a linearly separable dataset $\{\x_{n},y_{n}\}_{n=1}^N$ and using the logistic loss (the two class version of the cross entropy loss typically used in deep learning). The global infimum of $\c{L}(\w)$ is $0$, but this is not attainable by any finite $\w$. Instead, the loss can be minimized by scaling the norm of any linear predictor that separates the data to infinity.  Thus, any sequence of predictors $\w[t]$ (say, from an optimization algorithm) that asymptotically minimizes the loss in eq. \eqref{eq:lm} necessarily separates the data and diverges in norm,  $\|{\w[t]}\|\to\infty$.  
In general there are many linear separators that correctly label the training data, each corresponding to a direction in which we can minimize \eqref{eq:lm}.  Which of these separators will we converge to when optimizing \eqref{eq:lmu}/\eqref{eq:lm}?  In other words, what is the direction $\bar\w^\infty=\lim\limits_{t\to\infty}\frac{\w[t]}{\|{\w[t]}\|}$ the iterates of our optimization algorithm will diverge in?  If this limit exist we say that $\w[t]$ {\em converges in direction} to the {\em limit direction} $\bar\w^\infty$.

\citet{soudry2017implicit} studied this implicit bias of gradient descent on \eqref{eq:lm} over the direct parameterization of $\w$.  They showed that for any linearly separable dataset and any initialization, gradient descent w.r.t.~$\w$ converges in direction to hard margin support vector machine solution:
\begin{equation}
\bar\w^\infty=\frac{\w^*_{\ell_2}}{\norm{\w^*_{\ell_2}}}\text{, where }\w^*_{\ell_2}=\argmin_{\w\in\bR^D}\norm{\w}_2^2\;\st\; \forall n, y_n\innerprod{\w}{\x_n}\ge1.
\label{eq:gd-logistic}
\end{equation}

In this paper we study the behavior of gradient descent on the problem \eqref{eq:lmu} w.r.t~different parameterizations of the model class of linear predictors. For  initialization $\u[0]$ and sequence of step sizes $\{\eta_t\}$,  gradient descent updates for  \eqref{eq:lmu} are given by, 
\begin{equation}
\u[t+1]=\u[t]-\eta_t\nabla_{\u}\c{L}_{\P}(\u[t])=\u[t]-\eta_t\nabla_{\u}{\P}(\u[t])\nabla_{\w}\c{L}(\P(\u(t))),
\label{eq:gd}
\end{equation}
where $\nabla_{\u}{\P}(.)$ denotes the Jacobian of $\P:\c{W}\to\bR^D$ with respect to the parameters $\u$, and $\nabla_{\w}\c{L}(.)$ is the gradient of the loss function in \eqref{eq:lm}.

For separable datasets, if $\u[t]$ minimizes \eqref{eq:lmu} for linear fully connected or convolutional networks, then we will again have $\norm{\u[t]}\to\infty$, and the question we ask is: what is the limit direction $\bar\w^\infty= \lim\limits_{t\to\infty}\frac{\P(\u[t])}{\|\P(\u[t])\|}$ of the predictors $\P(\u[t])$ along the optimization path?

The result in \citet{soudry2017implicit}  holds for  any loss function $\ell(u,y)$ that is strictly monotone in $uy$  with specific tail behavior, name the tightly exponential tail, which is satisfied by popular classification losses like logistic and exponential loss. In the rest of the paper, for simplicity we exclusively focus on the exponential loss function $\ell(u,y)=\exp(-uy)$,  which has the same tail behavior as that of the logistic loss. Along the lines of \citet{soudry2017implicit}, our results should also extend for any strictly monotonic loss function with a tight exponential tail, including logistic loss.  

 \begin{figure}[t!]
    \begin{center}
    \begin{subfigure}[b]{0.6\textwidth}
        \centering
        \includegraphics[width=\textwidth]{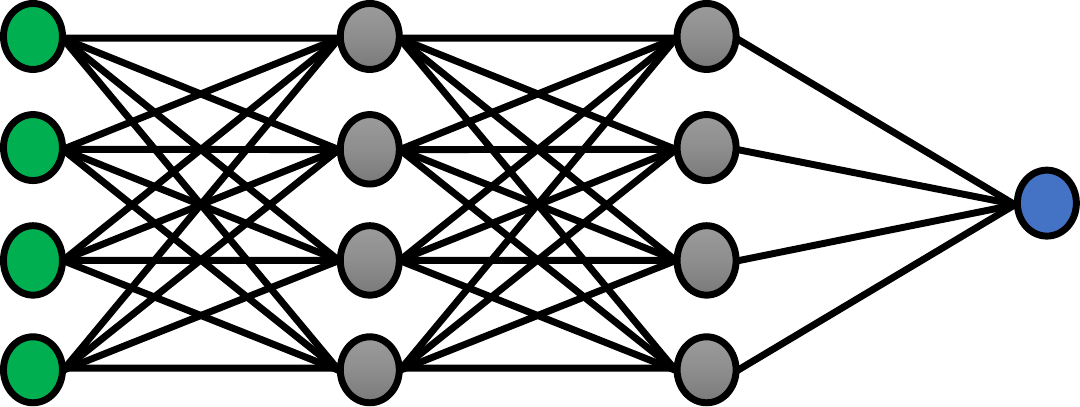}
        \captionsetup{justification=centering}
        \caption{Fully connected network of depth $L$\newline       
       $\bar\w^\infty\propto \argmin\limits_{\forall n,\,y_n\innerprod{\x_n}{\w}\ge1}\norm{\w}_2$ (independent of $L$) \label{fig:fcn}}
    \end{subfigure}
   \\  
    \begin{subfigure}[b]{0.6\textwidth}
        \centering
        \includegraphics[width=\textwidth]{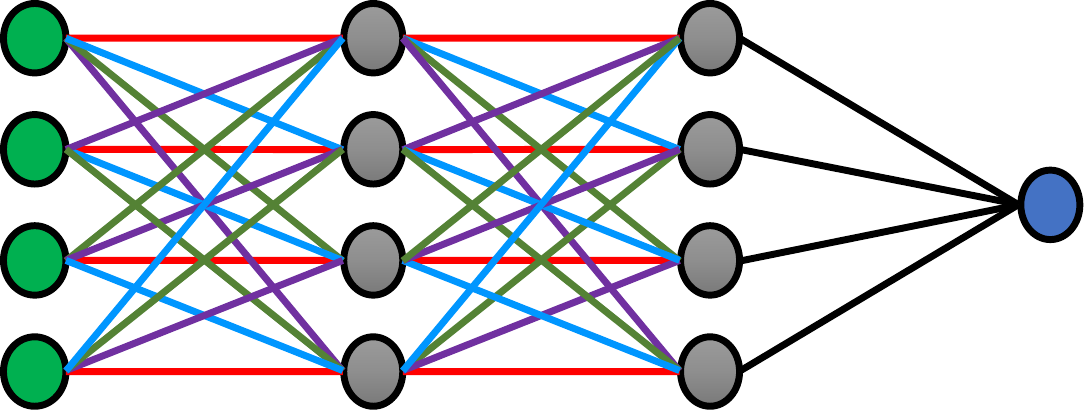}
        \captionsetup{justification=centering}
        \caption{Convolutional network  of depth $L$ \newline $\bar\w^\infty\propto$ first order stationary point of $\argmin\limits_{\forall n,\,y_n\innerprod{\x_n}{\w}\ge1}\norm{\hat{\w}}_{\nicefrac{2}{L}}$\label{fig:cn}}
    \end{subfigure}\\
    \begin{subfigure}[b]{0.6\textwidth}
    \includegraphics[width=\textwidth]{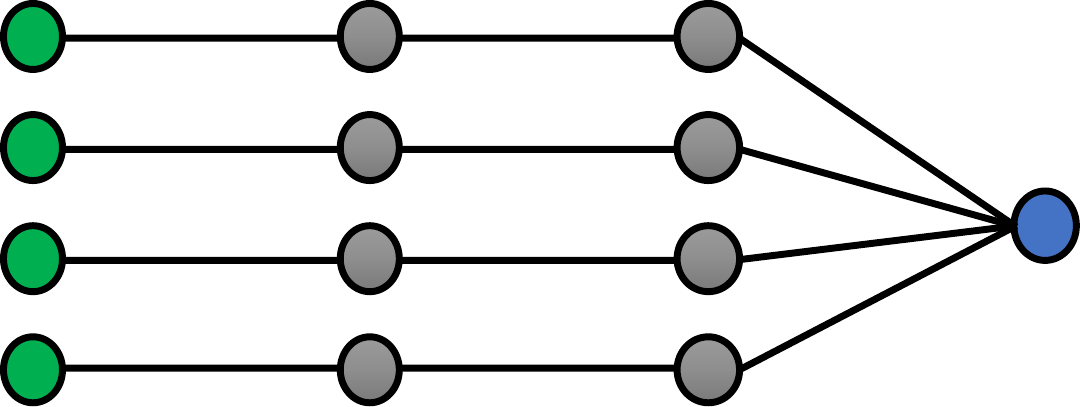}
    \captionsetup{justification=centering}
    \caption{Diagonal network  of depth $L$\newline $\bar\w^\infty\propto$ first order stationary point of$\argmin\limits_{\forall n,\,y_n\innerprod{\x_n}{\w}\ge1}\norm{\w}_{\nicefrac{2}{L}}$\label{fig:dn}}
    \end{subfigure}
    \end{center}
    \caption{Implicit bias of gradient descent for different linear network architectures. 
    }
\end{figure}
\section{Main Results}\label{sec:main}
Our main results characterize the implicit bias of gradient descent for multi-layer fully connected and convolutional networks with linear activations. For the gradient descent iterates $\u[t]$ in eq. \eqref{eq:gd}, we henceforth denote the induced linear predictor as $\w[t]=\c{P}(\u[t])$. 
\begin{assumption*} In the following theorems, we characterize the limiting predictor $\bar\w^\infty= \lim\limits_{t\to\infty}\frac{\w[t]}{\|\w[t]\|}$ under the following assumptions:
\begin{compactenum}
\item $\u[t]$  minimize the objective, \ie $\c{L}_{\P}(\u[t])\to0$.
\item $\u[t]$, and consequently $\w[t]=\P(\u[t])$, converge in direction to yield a separator $\bar\w^\infty= \lim\limits_{t\to\infty}\frac{\w[t]}{\|\w[t]\|}$ with positive margin, \ie $\min_ny_n\innerprod{\x_n}{\bar\w^\infty}>0$.
\item Gradients with respect to linear predictors $\nabla_{\w}\c{L}(\w[t])$ converge in direction.
\end{compactenum}
\end{assumption*}
These assumptions allow us to focus on the question of \textit{which specific linear predictor do gradient descent iterates converge to} by separating it from the related optimization questions of {when  gradient descent iterates minimize the non-convex objective in eq. \eqref{eq:lm} and  nicely converge in direction}.

\begin{restatable} [Linear fully connected networks]{theorem}{thmfcn} \label{thm:fcn}
For any depth $L$, almost all linearly separable datasets $\{\x_n,y_n\}_{n=1}^N$, almost all  initializations  $\u[0]$, and any bounded sequence of step sizes $\{\eta_t\}_t$, consider the sequence gradient descent iterates $\u[t]$ in eq. \eqref{eq:gd} for minimizing   $\c{L}_{\P_{full}}(\u)$ in eq. \eqref{eq:lmu} with exponential loss $\ell(\hat{y},y)=\exp(-\hat{y}y)$ over  $L$--layer fully connected linear networks. 

If \begin{inparaenum}[(a)]\item the iterates $\u[t]$  minimize the objective, \ie $\c{L}_{\P_{full}}(\u[t])\to0$, \item $\u[t]$, and consequently $\w[t]=\P_{full}(\u[t])$, converge in direction to yield a separator with positive margin, and \item gradients with respect to linear predictors $\nabla_{\w}\c{L}(\w[t])$ converge in direction, \end{inparaenum}  then the limit direction is given by,
\begin{equation}
\bar\w^\infty=\lim\limits_{t\to\infty}\frac{\P_{full}(\u[t])}{\norm{\P_{full}(\u[t])}}=\frac{\w^*_{\ell_2}}{\norm{\w^*_{\ell_2}}}\text{, where  }{\w}^*_{\ell_2}:=\argmin_{w}\norm{\w}_2^2\;\st\; \forall n, y_n\innerprod{\x_n}{\w}\ge1. 
\label{eq:gd-fcn}
\end{equation}
\end{restatable}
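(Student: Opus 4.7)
The plan is to verify that $\bar\w^\infty$ satisfies the KKT conditions of the hard-margin SVM $\min_\w\norm{\w}_2^2$ subject to $y_n\innerprod{\x_n}{\w}\ge 1$ for all $n$. Assumption (b) already provides (rescaled) primal feasibility: $\bar\w^\infty$ achieves a positive margin $\gamma:=\min_n y_n\innerprod{\x_n}{\bar\w^\infty}>0$. Writing $S:=\{n:y_n\innerprod{\x_n}{\bar\w^\infty}=\gamma\}$ for the support set of the limit direction, the nontrivial content is the stationarity/complementary-slackness condition $\bar\w^\infty\propto\sum_{n\in S}\alpha_n\, y_n\x_n$ for some $\alpha_n\ge 0$. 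For almost all linearly separable datasets the max-margin solution is unique, so verifying this identifies $\bar\w^\infty=\w^*_{\ell_2}/\norm{\w^*_{\ell_2}}$.

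The central computation is to integrate the gradient-descent update for the first layer explicitly. Let $V_t:=\u_2\u_3\cdots\u_L\in\bR^{D_1}$ abbreviate the product of the deeper layers at iterate $t$, and let $\r_t:=\nabla_\w\c{L}(\w[t])=-\sum_n e^{-y_n\innerprod{\x_n}{\w[t]}}y_n\x_n$. The chain rule gives $\nabla_{\u_1}\c{L}_{\P_{full}}=\r_t\,V_t^\top$, so telescoping the GD recursion for the first layer and right-multiplying by $V_t$ yields the key decomposition
\[
\w[t]=\u_1^{(0)}V_t+\sum_n\alpha_n^{(t)}\,y_n\x_n,\qquad \alpha_n^{(t)}:=\sum_{s=0}^{t-1}\eta_s\,\innerprod{V_s}{V_t}\,e^{-y_n\innerprod{\x_n}{\w[s]}}.
\]
This splits $\w[t]$ into an initialization piece (in the column space of $\u_1^{(0)}$) and a data-spanned piece.

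Next I would use the tight exponential tail of the loss together with assumptions (a), (b) to compare the $\alpha_n^{(t)}$ across $n$. Assumption (a) forces $y_n\innerprod{\x_n}{\w[s]}\to\infty$ for every $n$; assumption (b) sharpens this to $y_n\innerprod{\x_n}{\w[s]}=(y_n\innerprod{\x_n}{\bar\w^\infty})\norm{\w[s]}(1+o(1))$. For $n\notin S$, the strict inequality $y_n\innerprod{\x_n}{\bar\w^\infty}>\gamma$ makes $e^{-y_n\innerprod{\x_n}{\w[s]}}$ decay faster by a constant exponential rate than for any $n^*\in S$, so $\alpha_n^{(t)}/\alpha_{n^*}^{(t)}\to 0$. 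Moreover, $V_t/\norm{V_t}$ converges in direction (inherited from convergence in direction of $\u[t]$), so $\innerprod{V_s}{V_t}$ is eventually positive and of order $\norm{V_s}\norm{V_t}$, forcing $\alpha_n^{(t)}\ge 0$ in the limit.

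The main obstacle is to show that the initialization term $\u_1^{(0)}V_t$ is asymptotically negligible compared to $\sum_{n\in S}\alpha_n^{(t)}y_n\x_n$. Intuitively $\norm{\u_1^{(0)}V_t}=O(\norm{V_t})$, while $\alpha_{n^*}^{(t)}$ for $n^*\in S$ is of order $\norm{V_t}\sum_s\eta_s\norm{V_s}e^{-\gamma\norm{\w[s]}}$, a super-linearly larger quantity because $\norm{V_s}$ also diverges along the optimization path (a consequence of (a) together with the multi-layer parameterization). For ``almost all'' datasets and initializations, $\u_1^{(0)}V_t$ cannot align adversarially with $\mathrm{span}\{y_n\x_n:n\in S\}$ in a way that would perturb the limit direction; verifying this is the only genuinely measure-theoretic step. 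Once this is established, normalizing and letting $t\to\infty$ gives $\bar\w^\infty\propto\sum_{n\in S}\alpha_n^\infty y_n\x_n$ with $\alpha_n^\infty\ge 0$, which is precisely the KKT condition for the $\ell_2$ max-margin problem and yields $\bar\w^\infty=\w^*_{\ell_2}/\norm{\w^*_{\ell_2}}$.
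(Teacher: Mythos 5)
Your proposal is correct in outline and takes a genuinely different route from the paper. The paper's Appendix B proof is essentially two lines once it has invoked the abstract Theorem~\ref{thm:metathm}: from the parameter-space stationarity $\bar\u^\infty\propto\nabla_\u\P(\bar\u^\infty)\bar\z^\infty$ (established via a Stolz--Cesaro argument on the full gradient-descent telescope in $\u$-space), one specializes to the first layer, uses $\nabla_{\u_1}\P(\u)\z=\z\,\u_{2:L}^\top$, and multiplies back by $\bar\u^\infty_{2:L}$ to conclude $\bar\w^\infty\propto\bar\z^\infty$, which together with Lemma~\ref{lem:grad-conv} gives the KKT conditions. You instead telescope the first-layer dynamics directly, obtaining the explicit decomposition $\w[t]=\u_1^{(0)}V_t+\sum_n\alpha_n^{(t)}y_n\x_n$ and then arguing in the predictor space---this is much closer in spirit to the depth-one argument of \citet{soudry2017implicit} and, in effect, re-derives the content of Lemma~\ref{lem:grad-conv} inside the $\alpha_n^{(t)}$ weights rather than quoting it. Your route is more concrete and arguably more transparent for fully connected networks; the paper's route is more modular, since Theorem~\ref{thm:metathm} does the hard asymptotic bookkeeping once and for all for arbitrary homogeneous $\P$.

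The one place where you wave your hands in a way that would not survive a careful referee is the claim that $\alpha_{n^*}^{(t)}$ for $n^*\in S$ is ``super-linearly larger'' than $\norm{V_t}$ \emph{because} $\norm{V_s}$ diverges. Divergence of $\norm{V_s}$ alone does not imply $\sum_{s}\eta_s\norm{V_s}e^{-\gamma\norm{\w[s]}}\to\infty$---the exponential factor collapses far faster than $\norm{V_s}$ grows, so the series could converge a priori. The correct argument is by contradiction, exactly as in the paper's Claim inside the proof of Theorem~\ref{thm:metathm}: from your own telescope, $\norm{\u_1^{(t)}-\u_1^{(0)}}\le\sum_{s<t}\eta_s\norm{\r_s}\norm{V_s}$, and $\norm{\u_1^{(t)}}\to\infty$ (since $\u[t]/\norm{\u[t]}\to\bar\u^\infty$ with $\bar\u_1^\infty\neq0$, which follows from the positive margin forcing $\P_{full}(\bar\u^\infty)\neq0$), so the series must diverge. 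You should also note explicitly that the finite-$s$ portion of the sum, where $\innerprod{V_s}{V_t}$ need not be positive, contributes only $O(\norm{V_t})$ and is therefore absorbed with the initialization term---same mechanism, same rate. Finally, the ``almost all datasets'' assumption is not about adversarial alignment of $\u_1^{(0)}V_t$ (that term is negligible by magnitude, not by direction), and uniqueness of the $\ell_2$ max-margin solution holds for every separable dataset by strict convexity; the genericity assumption is really what lets Lemma~\ref{lem:grad-conv}-type reasoning identify the gradient's limit direction with a conic combination of support vectors. With these patches, your plan goes through.
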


 For fully connected networks with single output, Theorem~\ref{thm:fcn} shows that there is no effect of depth on the  implicit bias of gradient descent. Regardless of the depth of the network, the asymptotic classifier is always the hard margin support vector machine classifier, which is also the limit direction of gradient descent for linear logistic regression with the direct parameterization of $\w=\u$.
 
 In contrast, next we show that for convolutional networks we get very different biases. Let us first look at a $2$--layer linear convolutional network, \ie a network with single convolutional layer  followed by a fully connected final layer.

Recall that $\hat{\w}\in\b{C}^D$ denote the  Fourier coefficients of ${\w}$, \ie 
$\hat{\w}[d]=\frac{1}{\sqrt{D}}\sum\limits_{p=0}^{D-1} \w{[p]}\exp(-\frac{2\pi\ci pd}{D})$, and  that any non-zero $z\in\b{C}$ is denoted in polar form as $z=|z|\e^{\ci\phi_{z}}$ for $\phi_z\in[0,2\pi)$.  
Linear predictors induced by gradient descent iterates $\u[t]$ for convolutional networks are denoted by $\w[t]=\P_{conv}(\u[t])$. 
It is evident that if $\w[t]$ converges in direction to $\bar{\w}^\infty$, then its Fourier transformation $\hat{\w}^{(t)}$ converges in direction to $\hat{\bar{\w}}^\infty$. 
In the following theorems, in addition to the earlier assumptions, we further assume a technical condition that the phase of the Fourier coefficients $\e^{\ci\phi_{\hat{\w}^{(t)}}}$ converge coordinate-wise. For coordinates $d$ with $\hat{\bar{\w}}^\infty[d]\neq0$ this follows from convergence in direction of $\u[t]$, in which case $\e^{\ci\phi_{\hat{\w}^{(t)}[d]}}\to\e^{\ci\phi_{\hat{\bar\w}^{\infty}[d]}}$. We assume such a $\phi_{\hat{\bar\w}^{\infty}[d]}$ also exists when $\hat{\bar{\w}}^\infty[d]=0$. 
\begin{restatable}[Linear convolutional networks of depth two]{theorem}{thmcn}\label{thm:conv-net}
For  almost all linearly separable datasets $\{\x_n,y_n\}_{n=1}^N$, almost all  initializations  $\u[0]$, and any  sequence of step sizes $\{\eta_t\}_t$ with $\eta_t$ smaller than the local Lipschitz at $\u[t]$, consider the sequence gradient descent iterates $\u[t]$  in eq. \eqref{eq:gd} for minimizing  $\c{L}_{\P_{conv}}(\u)$ in eq. \eqref{eq:lmu} with exponential loss over  $2$--layer linear convolutional networks.

If \begin{inparaenum}[(a)]\item the iterates $\u[t]$  minimize the objective, \ie $\c{L}_{\P_{conv}}(\u[t])\to0$, \item $\u[t]$ converge in direction to yield a separator  ${\bar\w}^{\infty}$ with positive margin, \item the phase of the Fourier coefficients $\hat{\w}^{(t)}$ of the linear predictors $\w[t]$ converge coordinate-wise, \ie $\forall d$, $\e^{\ci\phi_{\hat{\w}^{(t)}[d]}}\to\e^{\ci\phi_{\hat{\hat{\w}}^{\infty}[d]}}$, and \item the gradients $\nabla_{\w}{\c{L}}(\w[t])$ converge in direction, \end{inparaenum} then  the limit direction ${\bar\w}^{\infty}$ is given by,
\begin{equation}
\bar\w^\infty=\frac{\w^*_{\c{F},1}}{\norm{\w^*_{\c{F},1}}}\text{, where  }{\w}^*_{\c{F},1}:=\argmin_{\w} \norm{\hat{\w}}_{1}\;\st\; \forall n,\, y_n\innerprod{\w}{\x_n}\ge1.
\label{eq:gd-cn}
\end{equation} 
\end{restatable}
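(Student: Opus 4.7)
}

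The plan is to pass to the Fourier domain, where the convolutional parameterization decouples across frequencies into $D$ independent rank--one products, and then to extract the KKT conditions for the $\ell_1$ minimization problem from the asymptotic gradient descent dynamics. First I would verify that the discrete Fourier transform is unitary, so $\innerprod{\x}{\w} = \mathrm{Re}\,\innerprod{\hat\x}{\hat\w}$ and the loss $\c{L}(\w)$ equals $\c{L}(\hat\w)$ as a function of $\hat\w$. By the convolution theorem, the conv-net parameterization $\P_{conv}(\u_1,\u_2)=(\u_2^{\downarrow}\star\u_1)^{\downarrow}$ becomes, in the Fourier basis, a coordinate-wise product of the form $\hat{\w}[d] = \hat{\u}_1[d]\,\hat{\u}_2[d]$ up to complex conjugation introduced by the flips $(\cdot)^\downarrow$; the flips in time correspond to conjugation in frequency, which is what makes the real inner product $\mathrm{Re}\,\innerprod{\hat\x}{\hat\w}$ line up cleanly.

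Next I would compute the gradient descent updates for $\u_1,\u_2$ in the Fourier basis. Because the Fourier transform is unitary, the update rule in eq.~\eqref{eq:gd} becomes the analogous update on $\hat{\u}_1[d],\hat{\u}_2[d]$, driven by the Fourier coefficient $\hat{\b{r}}[d]$ of the residual $\b{r}(t) := -\nabla_{\w}\c{L}(\w(t)) = \sum_{n} \exp(-y_n\innerprod{\x_n}{\w(t)})\,y_n\x_n$. Crucially, the updates decouple across frequencies: the pair $(\hat{\u}_1[d],\hat{\u}_2[d])\in\mathbb{C}^2$ evolves as a two-layer scalar complex network driven by the single complex signal $\hat{\b{r}}[d]$. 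For each frequency I would derive the conservation law $|\hat{\u}_1[d]|^2-|\hat{\u}_2[d]|^2 = \text{const}$ (the standard two-layer invariant) and write the continuous-time limit of $\log|\hat{\w}[d]|$ and of the phase $\phi_{\hat{\w}^{(t)}[d]}$. Under assumption (c), phases stabilize, so after an unimportant transient all the growth of $|\hat{\w}[d]|$ is driven by the modulus of an integrated complex signal.

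Then I would use the exponential-tail argument of \citet{soudry2017implicit}: since assumptions (a)--(b) imply that the margins $y_n\innerprod{\x_n}{\w(t)}$ grow without bound while $\min_n y_n\innerprod{\x_n}{\bar\w^\infty}>0$, the normalized residual $\b{r}(t)/\|\b{r}(t)\|$ converges in direction to a nonnegative combination $\sum_{n\in\c S}\alpha_n y_n\x_n$ supported on the margin-support vectors $\c S=\{n:y_n\innerprod{\x_n}{\bar\w^\infty}=\min_{n'}y_{n'}\innerprod{\x_{n'}}{\bar\w^\infty}\}$. Integrating the two-layer dynamics for each $d$ then shows that in the limit
\[
\hat{\bar\w}^{\infty}[d] \;\propto\; \hat{\b{g}}[d] \cdot \e^{\ci\phi_{\hat{\bar\w}^{\infty}[d]}}, \qquad \text{where } \hat{\b{g}}[d] := \sum_{n\in\c S}\alpha_n y_n \hat{\x}_n[d],
\]
so that $|\hat{\bar\w}^{\infty}[d]|$ is either zero or proportional to $|\hat{\b{g}}[d]|$ with the right phase alignment; equivalently $\mathrm{Re}(\hat{\b{g}}[d]^*\,\hat{\bar\w}^{\infty}[d]) = |\hat{\b{g}}[d]|\cdot|\hat{\bar\w}^{\infty}[d]|$. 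These are exactly the KKT conditions of the convex program $\min\|\hat\w\|_1$ subject to $y_n\innerprod{\x_n}{\w}\ge 1$ for all $n$, with $\alpha_n\ge 0$ as the dual variables: the primal stationarity condition reads $\hat{\b{g}}[d]\in\partial|\hat{\bar\w}^{\infty}[d]|$ (in the complex sense), which is precisely what the phase-alignment condition says. Complementary slackness is given by the fact that $\alpha_n$ is supported on $\c S$. Since almost all separable datasets have a unique $\ell_1$-margin maximizer in Fourier domain, the stationary point must coincide with $\w^*_{\c F,1}$, giving the claimed limit direction.

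The main obstacle I expect is the complex-valued/phase bookkeeping in the KKT step: showing that the coordinate-wise phase limits forced by assumption (c) align correctly with the phases of $\hat{\b{g}}[d]$ on the active coordinates (those with $\hat{\bar\w}^{\infty}[d]\ne 0$) and that, on the inactive coordinates, the magnitude of the asymptotic driving signal $|\hat{\b{g}}[d]|$ is bounded by the common modulus set by the active coordinates, \emph{i.e.}~$|\hat{\b{g}}[d]|\le 1$ in the properly normalized dual — this is the non-trivial half of the subdifferential inclusion and requires integrating the two-layer dynamics carefully rather than reading off a static fixed-point condition. The almost-everywhere hypothesis on datasets and initializations is used here to rule out degenerate ties (both in the $\ell_1$ optimum and in the active-set identification), mirroring the corresponding genericity assumption in \citet{soudry2017implicit}.
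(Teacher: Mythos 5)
Your plan follows the paper's strategy quite closely: pass to the Fourier domain where the convolution decouples per frequency (Lemma~\ref{lem:fft-conv} and Lemma~\ref{lem:fft-gd} in the appendix), invoke the exponential-tail lemma (Lemma~\ref{lem:grad-conv}) to get $\bar\z^\infty\propto\sum_{n\in S}\alpha_n y_n\x_n$, derive the phase-alignment and equal-magnitude conditions on the active frequencies, and treat the dual $\ell_\infty$ bound $|\hat{\bar\z}^\infty[d]|\le\text{const}$ on the inactive frequencies as the hard dynamical step. You have also correctly flagged that this last step is the non-trivial half of the subdifferential inclusion. (Minor nit: the display $\hat{\bar\w}^{\infty}[d]\propto\hat{\b{g}}[d]\e^{\ci\phi_{\hat{\bar\w}^{\infty}[d]}}$ is garbled -- the KKT condition says $\hat{\b g}[d]$ is a nonnegative multiple of $\e^{\ci\phi_{\hat{\bar\w}^\infty[d]}}$ on active coordinates -- but your restatement via $\mathrm{Re}(\hat{\b g}[d]^*\hat{\bar\w}^\infty[d])=|\hat{\b g}[d]||\hat{\bar\w}^\infty[d]|$ is correct.)

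There is, however, a genuine gap in the way you propose to execute the inactive-coordinate step. Your plan is to integrate the continuous-time limit of $\log|\hat\w[d]|$. In polar coordinates $\hat\w[d]=r\e^{\ci\theta}$, the two-layer dynamics give $\dot r\approx 2r\,|\hat\z[d]|\cos(\phi_{\hat\z[d]}-\theta)$, so the asymptotic growth rate of $\log|\hat\w^{(t)}[d]|$ is $|\hat{\bar\z}^\infty[d]|\cos\Delta_d$, where $\Delta_d:=\phi_{\hat{\bar\w}^\infty[d]}-\phi_{\hat{\bar\z}^\infty[d]}$. Comparing growth rates against an active coordinate $d'$ (where $\cos\Delta_{d'}=1$) therefore only yields $|\hat{\bar\z}^\infty[d]|\cos\Delta_d\le|\hat{\bar\z}^\infty[d']|$, which is strictly weaker than the needed $\ell_\infty$ dual bound $|\hat{\bar\z}^\infty[d]|\le|\hat{\bar\z}^\infty[d']|$ unless you independently pin $\cos\Delta_d$ down to $1$ on the inactive coordinates -- and nothing in assumptions (a)--(d) hands you that for free. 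The paper avoids this attenuation entirely by tracking the specially constructed quantity $\b{u}_d^{(t)}:=\hat\u_1^{(t)}[d]\,\e^{-\ci\phi_{\hat{\bar\z}^\infty[d]}}+\hat\u_2^{(t)*}[d]$, whose update factor is $1+\eta_t p(t)\bigl(|\hat{\bar\z}^\infty[d]|+o(1)\bigr)$ \emph{with no $\cos\Delta_d$}, yet whose normalized limit $\b{u}_d^{(t)}/g(t)$ still converges to a constant times $\sqrt{|\hat{\bar\w}^\infty[d]|}$; comparing the ratios $|\b{u}_d^{(t)}/\b{u}_{d'}^{(t)}|$ then directly delivers the dual bound (this is the content of Lemma~\ref{lem:cn-lemma}). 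If you want to stick with $\log|\hat\w[d]|$, you would additionally have to prove a phase-locking statement forcing $\Delta_d\in\{0,\pi\}$ and then $\cos\Delta_d\ge0$ on inactive coordinates (the paper does prove $\cos\Delta_d\ge 0$ in Step 2 of Lemma~\ref{lem:cn-lemma}, but combines it with the $\b{u}_d^{(t)}$ bookkeeping rather than with a bare log-growth comparison); that extra argument is exactly the piece your plan currently lacks.
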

We already see how introducing a single convolutional layer changes the implicit bias of gradient descent---even without any explicit regularization, gradient descent on the parameters of convolutional network architecture returns solutions that are biased to have sparsity in the frequency domain. 

Furthermore,   unlike fully connected networks, for convolutional networks we also see that the implicit bias changes with the depth of the network as shown by the following theorem.   
\begin{restatable} [Linear Convolutional Networks of any Depth]{subtheorem}{thmcnl} \label{thm:conv-net-l}
  For any depth $L$, under the conditions of Theorem~\ref{thm:conv-net}, 
the limit direction $\bar\w^\infty=\lim\limits_{t\to\infty}\frac{\P_{conv}(\u[t])}{\norm{\P_{conv}(\u[t])}}$ is a scaling of  a first order stationary point of the following optimization problem,
\begin{equation}
\min_{\w}\; \norm{\hat{\w}}_{\nicefrac{2}{L}}\st \forall n,\, y_n\innerprod{\w}{\x_n}\ge1,
\label{eq:gd-cn-l}
\end{equation}
where the $\ell_p$ penalty given by $\|z\|_p=\left(\sum_{i=1}^{D} |z[i]|^p\right)^{\nicefrac{1}{p}}$   (also called the bridge penalty)  is a norm for $p=1$ and a quasi-norm for $p<1$. 
\end{restatable}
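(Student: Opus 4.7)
The plan is to diagonalize the problem via the discrete Fourier transform and reduce it to $D$ independent depth-$L$ product parameterizations, one per Fourier coordinate, after which the argument parallels the implicit-bias analysis of a diagonal coordinatewise-product network (as sketched in Figure~1(c)). Since the circular cross-correlation $\h\star\u_l$ in the time domain is the pointwise conjugate product $\hat\h\cdot\hat\u_l^{*}$ in frequency, and the flip $(\cdot)^\downarrow$ corresponds to complex conjugation (up to a unit-modulus phase from the index shift), unwinding $\P_{conv}$ yields, coordinatewise,
\[
\hat{\w}^{(t)}[d] \;=\; \prod_{l=1}^{L} \bigl(\hat{\u}_l^{(t)}[d]\bigr)^{\epsilon_l},
\]
with $\epsilon_l\in\{+1,-1\}$ recording the parity of flips inherited by layer $l$. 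In particular $|\hat{\w}^{(t)}[d]| = \prod_l |\hat{\u}_l^{(t)}[d]|$, so the depth-$L$ convolutional network is, in the Fourier basis, precisely a depth-$L$ diagonal coordinatewise-product network.

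Because the DFT is unitary and $\c{L}_{\P_{conv}}(\u)$ depends on $\u$ only through $\w = \P_{conv}(\u)$, Parseval converts gradient descent on $\u$ into complex gradient descent (in the Wirtinger sense) on $\hat\u$ with respect to $\c{L}$ viewed as a function of $\hat\w$. Coordinate $d$ then decouples completely and I would write the update as
\[
\hat{\u}_l^{(t+1)}[d] \;=\; \hat{\u}_l^{(t)}[d] \;-\; \eta_t\, \hat{g}^{(t)}[d]\cdot \prod_{l'\neq l}\bigl(\hat{\u}_{l'}^{(t)}[d]\bigr)^{\epsilon_{l'}},
\]
where $\hat{g}^{(t)} := \widehat{\nabla_\w\c{L}(\w[t])}$. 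A standard conservation-law identity for product parameterizations, together with $\|\u[t]\|\to\infty$ forced by $\c{L}_{\P_{conv}}(\u[t])\to 0$, then pins down the \emph{equal growth rates} $|\hat{\u}_l^{(t)}[d]|\sim|\hat{\w}^{(t)}[d]|^{1/L}$ across all layers, for each $d$ with $\hat{\bar\w}^\infty[d]\neq 0$.

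I would then import the Soudry-style asymptotic analysis of the exponential loss on separable data: under the stated assumptions, the direction of $-\nabla_\w\c{L}(\w[t])$ converges to $\sum_{n\in S^*}\alpha_n y_n\x_n$ for some nonnegative $\alpha_n$ supported on the asymptotic support-vector set $S^*$, with complementary slackness against the margin at $\bar\w^\infty$. Accumulating the coordinatewise updates for $\hat{\u}_l^{(t)}[d]$ over large $t$, applying the equal-growth-rate identity to pull the common scaling through the product, and invoking the phase-convergence assumption to align the complex exponentials (including on coordinates where $\hat{\bar\w}^\infty[d]=0$), the contributions of the $L$ layers combine to give
\[
|\hat{\bar\w}^\infty[d]|^{\,1-2/L}\,\e^{\ci\phi_{\hat{\bar\w}^\infty[d]}} \;\propto\; \sum_{n\in S^*}\alpha_n y_n\,\hat{\x}_n[d].
\]
Since the Wirtinger gradient of $\tfrac{L}{2}\|\hat\w\|_{2/L}^{2/L}$ at a coordinate with $\hat\w[d]\neq 0$ is exactly $|\hat\w[d]|^{2/L-1}\e^{\ci\phi_{\hat\w[d]}}$, this is the KKT first-order stationarity condition for the bridge-penalty problem \eqref{eq:gd-cn-l}.

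The main obstacle will be the coordinates $d$ at which the limit $\hat{\bar\w}^\infty[d]$ vanishes: there, phase convergence does not follow from directional convergence of $\u[t]$ (exactly why the coordinatewise phase-convergence assumption is imposed), and one has to verify that the KKT identity degenerates correctly into the Clarke subdifferential characterization of first-order stationarity for the nonconvex quasi-norm $\|\cdot\|_{2/L}$. This nonconvexity for $L\ge 3$ is also the reason the conclusion weakens from the global optimality available in the $L=2$ case of Theorem~\ref{thm:conv-net} (which can be obtained via convex $\ell_1$ duality) to mere first-order stationarity. A secondary, bookkeeping-level difficulty is tracking the flip signs $\epsilon_l$ and their induced phase shifts through the Wirtinger calculus so that every complex conjugation lines up correctly in the final stationarity identity.
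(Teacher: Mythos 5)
Your high-level plan matches the paper's: diagonalize with the DFT (Lemma~\ref{lem:fft-conv}), reduce to a coordinatewise product parameterization, invoke the support-vector characterization of the limiting gradient direction (Lemma~\ref{lem:grad-conv}), and ``accumulate'' updates to obtain the stationarity identity in the Fourier-domain parameter space (this is the specialization of Theorem~\ref{thm:metathm} to $\P_{conv}$). But the KKT identity you land on is wrong. You write $|\hat{\bar\w}^\infty[d]|^{1-2/L}\,\e^{\ci\phi_{\hat{\bar\w}^\infty[d]}}\propto\sum_n\alpha_ny_n\hat\x_n[d]$ and in the very next sentence correctly observe that the Wirtinger gradient of $\tfrac{L}{2}\|\hat\w\|_{2/L}^{2/L}$ at a nonzero coordinate is $|\hat\w[d]|^{2/L-1}\e^{\ci\phi_{\hat\w[d]}}$. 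For $L>2$ these exponents have opposite signs: $2/L-1<0$ (the subgradient blows up as a coordinate tends to zero, which is exactly what a concave bridge penalty must do), whereas your $1-2/L>0$ vanishes. The two only coincide at $L=2$, where both equal $0$. The correct identity, obtained in the paper by multiplying $|\hat{\bar\u}^\infty_l[d]|^2=\bar\gamma\,\hat{\bar\w}^{\infty*}[d]\hat{\bar\z}^\infty[d]$ over $l=1,\dots,L$ and using that all $|\hat{\bar\u}^\infty_l[d]|$ are equal to $|\hat{\bar\w}^\infty[d]|^{1/L}$, is eq.~\eqref{eq:cn-slackness-all}: $\bar\gamma\,\hat{\bar\z}^\infty[d]=\e^{\ci\phi_{\hat{\bar\w}^\infty[d]}}\,|\hat{\tilde\w}^\infty[d]|^{2/L-1}$. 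So the exponent in your display is an error, not just loose notation, and it would not line up with the subdifferential in \eqref{eq:label1}.

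Two smaller but real issues. First, your parameterization $\hat\w^{(t)}[d]=\prod_l(\hat\u_l^{(t)}[d])^{\epsilon_l}$ with unresolved flip parities $\epsilon_l$, and the accompanying worry about ``tracking the flip signs and induced phase shifts,'' is work you'd still have to do: the paper's Lemma~\ref{lem:fft-conv} shows the two time-reversals and $L-1$ cross-correlation conjugations exactly cancel, leaving the clean $\hat\w[d]=\prod_l\hat\u_l[d]$ with no exponents and no residual phase. (The conjugations do reappear in the \emph{gradient} identity $\hat{\bar\u}_l^\infty=\bar\gamma(\odot_{l'\ne l}\hat{\bar\u}_{l'}^{\infty*})\odot\hat{\bar\z}^\infty$, but that is a separate fact and the paper handles it through Lemma~\ref{lem:fft-gd}.) Second, you flag the zero coordinates $\hat{\bar\w}^\infty[d]=0$ as ``the main obstacle'' for the quasi-norm case, but this is a misdiagnosis: for $p=2/L<1$ the Clarke subdifferential in \eqref{eq:label1} imposes \emph{no} constraint at coordinates where $\hat{\bar\w}^\infty[d]=0$, so once you have \eqref{eq:cn-slackness-all} on the nonzero coordinates you are done. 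The delicate zero-coordinate analysis (the $\kappa_{d,d'}$ growth argument, Lemma~\ref{lem:cn-lemma}, and the phase-convergence assumption) is needed only for $L=2$, where the $\ell_1$ subdifferential \eqref{eq:label2} requires $\bar\gamma|\hat{\bar\z}^\infty[d]|\le1$ even where $\hat{\bar\w}^\infty[d]=0$. Your writeup has the difficulty attribution reversed.
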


When $L>2$, and thus $p=\nicefrac{2}{L}<1$, problem \eqref{eq:gd-cn-l} is non-convex and intractable \cite{ge2011note}.  Hence, we cannot expect to ensure convergence to a global minimum.  Instead we show convergence to a first order stationary point of \eqref{eq:gd-cn-l} in the sense of sub-stationary points of \citet{rockafellar1979directionally} for optimization problems with non-smooth and non-convex objectives. These are  solutions where the \textit{local directional derivative} along the directions in the tangent cone of the constraints are all zero. 

The first order stationary points, or sub-stationary points, of \eqref{eq:gd-cn-l} are the set of  feasible predictors $\w$ such that $\exists \{\alpha_n\ge 0\}_{n=1}^N$ satisfying the following:  
$\forall n$, $y_n\innerprod{\x_n}{\w}>1\implies \alpha_n=0$, and
 \begin{equation}
 \sum_n\alpha_ny_n\hat{\x}_n\in \partial^{\circ} \norm{\hat{\w}}_p,
 \label{eq:label3}
 \end{equation} 
where $\hat{\x}_n$ is the Fourier transformation of $\x_n$, and $\partial^\circ$ denotes the local sub-differential (or Clarke's sub-differential) operator defined as 
$\partial^\circ f(\w)=\text{conv}\{\b{v}:\exists (\z_k)_k\st \z_k\to\w\tand \nabla f(\z_k)\to \b{v}\}.$

For $p=1$ and $\hat{\w}$ represented in polar form as $\hat{\w}=|\hat{\w}|\e^{\ci\boldsymbol{\phi}_{\hat{\w}}}\in\mathbb{C}^D$,  $\norm{\hat{\w}}_p$ is convex and  the local sub-differential is indeed the global sub-differential given by,
  \begin{equation}
\partial^\circ\norm{\hat{\w}}_1=\{\hat\z:\forall d,\; |\hat\z{[d]}|\le 1 \tand \hat{\w}{[d]}\neq0\implies \hat\z{[d]}= \e^{\ci\boldsymbol{\phi}_{\hat{\w}}[d]}\}.
\label{eq:label2}
\end{equation} 

For $p<1$, the local sub-differential of  $\norm{\hat{\w}}_p$ is given by, \begin{equation}
\forall p<1,\quad \partial^\circ\norm{\hat{\w}}_p=\{\hat\z:\hat{\w}{[d]}\neq0\implies \hat\z{[d]}=p\; \e^{\ci\boldsymbol{\phi}_{\hat{\w}}[d]}\;|{\hat{\w}}[d]|^{p-1}\}.
\label{eq:label1}
\end{equation} 
Figures~\ref{fig:fcn}--\ref{fig:cn} summarize the implications of the main results in the paper. 
The proof of this Theorem, exploits the following representation of $\P_{conv}(\w)$ in the Fourier domain. 
\begin{restatable}{lemma}{lemfft} \label{lem:fft-conv}For full-dimensional convolutions, $\w=\P_{conv}(\u)$ is equivalent to 
\[\hat{\w}=\text{diag}(\hat\u_1)\ldots\text{diag}(\hat\u_{L-1})\hat\u_L,\]
where for $l=1,2,\ldots,L$, $\hat\u_1\in\b{C}^D$ are the Fourier coefficients of the parameters $\u_l\in\bR^D$.
\end{restatable}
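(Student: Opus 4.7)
The plan is to compute the Fourier transform of each of the two elementary operations used in the definition of $\P_{conv}$---the circular ``convolution'' $\star$ (really cross-correlation, per the paper's remark) and the flip $(\cdot)^{\downarrow}$---and then compose them through the formula $\P_{conv}(\u)=\bigl((((\u_L^{\downarrow}\star\u_{L-1})\star\u_{L-2})\ldots)\star\u_1\bigr)^{\downarrow}$.

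First I would establish a convolution-theorem identity for $\star$. Starting from $(\x\star\u)[d]=\frac{1}{\sqrt{D}}\sum_k \u[k]\,\x[(d+k)\text{ mod }D]$ and the DFT convention $\hat{\z}[m]=\frac{1}{\sqrt{D}}\sum_p \z[p]\omega_D^{pm}$ with $\omega_D=\e^{-2\pi\ci/D}$, the change of variable $p=(d+k)\text{ mod }D$ separates the double sum and yields
\[
\widehat{\x\star\u}[m]\;=\;\hat{\x}[m]\cdot\overline{\hat{\u}[m]},
\]
where the conjugate on $\hat{\u}[m]$ (for real $\u$) appears precisely because $\star$ sums $\x[d{+}k]$ rather than $\x[d{-}k]$. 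Next I would compute the Fourier transform of a flip: using $\u^{\downarrow}[k]=\u[D{-}k{-}1]$ and the substitution $j=D{-}k{-}1$, together with $\omega_D^{Dm}=1$, I obtain
\[
\widehat{\u^{\downarrow}}[m]\;=\;\omega_D^{-m}\,\overline{\hat{\u}[m]},
\]
so flipping in the time domain conjugates the Fourier coefficients and multiplies by the pure phase $\omega_D^{-m}$.

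With these two identities in hand, the lemma reduces to tracing how the conjugates and phases accumulate and cancel along $\P_{conv}(\u)$. The inner flip gives $\omega_D^{-m}\overline{\hat{\u}_L[m]}$. Each of the $L{-}1$ successive cross-correlations with $\u_{L-1},\ldots,\u_1$ multiplies the current Fourier coefficient at index $m$ by one further factor $\overline{\hat{\u}_l[m]}$, producing, just before the outer flip, the value $\omega_D^{-m}\prod_{l=1}^{L}\overline{\hat{\u}_l[m]}$. The outer flip contributes one more global conjugation (which turns $\prod_l\overline{\hat{\u}_l[m]}$ back into $\prod_l\hat{\u}_l[m]$) and one more factor $\omega_D^{-m}$, leading to
\[
\hat{\w}[m]\;=\;\omega_D^{-m}\cdot\overline{\omega_D^{-m}}\cdot\prod_{l=1}^{L}\hat{\u}_l[m]\;=\;\prod_{l=1}^{L}\hat{\u}_l[m],
\]
since $\omega_D^{-m}\cdot\overline{\omega_D^{-m}}=\omega_D^{-m}\omega_D^{m}=1$. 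This is exactly the $m$-th entry of $\diag(\hat{\u}_1)\cdots\diag(\hat{\u}_{L-1})\hat{\u}_L$.

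There is no real obstacle beyond bookkeeping. The only subtlety---which is in fact the content of the lemma---is that the two flips are placed in precisely the right positions: the inner flip on $\u_L$ seeds a single phase $\omega_D^{-m}$ and a conjugate on $\hat{\u}_L$; the $L-1$ cross-correlations each add one more conjugate on one factor; and the outer flip's global conjugation simultaneously undoes \emph{all} of these conjugates while supplying the second phase factor that cancels the first. Absent either flip the formula would pick up extra conjugates or an unmatched phase, so the clean multiplicative form $\hat{\w}=\prod_l\hat{\u}_l$ is genuinely a statement about the specific parameterization $\P_{conv}$ rather than a generic convolution-theorem corollary.
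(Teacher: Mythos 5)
Your proof is correct, but it takes a genuinely different route from the paper's. You work directly with the explicit formula $\P_{conv}(\u)=\bigl((\u_L^{\downarrow}\star\u_{L-1})\star\cdots\star\u_1\bigr)^{\downarrow}$, deriving a flip identity $\widehat{\u^{\downarrow}}[m]=\omega_D^{-m}\,\overline{\hat\u[m]}$ and then propagating phases and conjugates through the two flips and $L-1$ cross-correlations to show they cancel perfectly. The paper instead avoids the flips altogether: it works with the network's forward pass $\hat y(\x)=\bigl(((\x\star\u_1)\star\cdots)\star\u_{L-1}\bigr)^\top\u_L$, iterates the single identity $\c{F}(\h\star\u)=\hat\h\odot\hat\u^*$ to push $\c{F}$ through the layers, and closes with the Parseval-type identity $\innerprod{\x}{\w}=\innerprod{\hat\x}{\hat\w}$ to read off $\hat\w$ from $\innerprod{\hat\x}{\c{F}\P_{conv}(\u)}=\innerprod{\hat\x}{\hat\u_1\odot\cdots\odot\hat\u_L}$ holding for all $\x$. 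Your approach has the merit of directly computing $\hat\w$ and making explicit \emph{why} the flip placement in the definition of $\P_{conv}$ is exactly right (each flip contributes a phase $\omega_D^{-m}$ and a conjugation, and the pair cancels); the paper's approach is shorter by sidestepping the flips entirely, at the cost of making the role of the flips implicit. One small point worth making explicit in your argument: your flip and cross-correlation identities carry a complex conjugate that is correct only for \emph{real} inputs, so you should note that every intermediate vector in the chain (being a circular cross-correlation of real vectors) is itself real, which is what licenses repeated application of both identities.
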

From above lemma (proved in Appendix $C$), we can see a connection of convolutional networks to a special network where the linear transformation between layers is restricted to diagonal entries (see depiction in Figure~\ref{fig:dn}), we refer to such networks as \textit{linear diagonal network}. 

The proof of Theorem~\ref{thm:fcn} and Theorem~\ref{thm:conv-net}-\ref{thm:conv-net-l} are provided in Appendix $B$ and $C$, respectively.

\section{Understanding Gradient Descent in the Parameter Space}\label{sec:l2u}
We can decompose the characterization of implicit bias of gradient descent on a parameterization $\P(\u)$ into two parts: \begin{inparaenum}[(a)]\item what is the  implicit bias of gradient descent in the space of parameters $\u$?, and \item what does this imply in term of the linear predictor $\w=\P(\u)$, \ie how does the bias in parameter space translate to the linear predictor learned from the model class? \end{inparaenum}  

We look at the first question for a broad class of linear models,  where the linear predictor is given by a homogeneous polynomial mapping of the parameters: $\w=\P(\u)$, where $\u\in\bR^P$ are the parameters of the model and $\P:\bR^P\to\bR^D$ satisfies definition below. This class covers the linear convolutional, fully connected networks, and diagonal networks discussed in Section~\ref{sec:main}.
\begin{definition*}[Homogeneous Polynomial] A multivariate polynomial function $\P:\bR^P\to \bR^D$ is said to be homogeneous, if for some finite integer $\nu<\infty$,  $\forall \alpha\in\bR,\mathbf{v}\in\bR^P$, $\P(\alpha \mathbf{v})=\alpha^\nu \P(\mathbf{v})$. 
\end{definition*}
\begin{restatable}[Homogeneous Polynomial Parameterization]{theorem}{metathm} \label{thm:metathm} For any homogeneous polynomial map $\P:\bR^P\to\bR^D$ from parameters $\u\in\bR^D$ to linear predictors,  almost all datasets $\{\x_n,y_n\}_{n=1}^N$ separable by $\c{B}:=\{\P(\u):\u\in\bR^P\}$, almost all initializations $\u[0]$, and any bounded sequence of  step sizes $\{\eta_t\}_t$, consider the sequence of gradient descent updates $\u[t]$ from eq.~\eqref{eq:gd} for minimizing the empirical risk objective $\c{L}_{\P}(\u)$ in \eqref{eq:lmu} with exponential loss $\ell(u,y)=\exp(-uy)$. 

If  \begin{inparaenum}[(a)] \item the iterates $\u[t]$ asymptotically minimize the objective, \ie $\c{L}_{\P}(\u[t])=\c{L}(\P(\u[t]))\to0$, \item $\u[t]$, and consequently $\w[t]=\P(\u[t])$, converge in direction to yield a separator with positive margin, and \item the gradients w.r.t. to the linear predictors, $\nabla_{\w}{\c{L}}(\w[t])$ converge in direction, \end{inparaenum} then  the  limit direction  of the parameters $\bar\u^\infty=\lim\limits_{t\to\infty}\frac{\u[t]}{\norm{\u[t]}_2}$  is a positive scaling of a first order stationary point of the following  optimization problem,
\begin{equation}
\min_{\u\in\bR^P} \norm{\u}_2^2\quad\st\quad \forall n,\,y_n\innerprod{\x_n}{\P(\u)}\ge 1.
\label{eq:l2bias}
\end{equation}
\end{restatable}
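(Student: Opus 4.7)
\textbf{Proof proposal for Theorem~\ref{thm:metathm}.} The plan is to generalize the argument of \citet{soudry2017implicit} from the direct parameterization $\w=\u$ to an arbitrary homogeneous polynomial map $\P:\bR^P\to\bR^D$, by extracting the KKT conditions of \eqref{eq:l2bias} from the asymptotic direction of the gradient descent iterates. The starting point is to write each update by the chain rule as
\begin{equation*}
\u[t+1]-\u[t]=-\eta_t\,\nabla_{\u}\P(\u[t])^\top\nabla_{\w}\c{L}(\w[t]),
\end{equation*}
and to analyze the two factors separately in the limit $t\to\infty$: the loss factor $\nabla_{\w}\c{L}(\w[t])$ in predictor space, and the parameterization factor $\nabla_{\u}\P(\u[t])$ whose direction is controlled by homogeneity.

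For the loss factor, since the loss is exponential and $\w[t]$ converges in direction to a separator with positive margin (assumption (b)) while $\c{L}(\w[t])\to 0$ (assumption (a)), standard exponential-tail arguments in the spirit of \citet{soudry2017implicit} show that $-\nabla_{\w}\c{L}(\w[t])=\sum_n\exp(-y_n\innerprod{\x_n}{\w[t]})\,y_n\x_n$ is asymptotically dominated by examples attaining the minimum margin. Combined with assumption (c) that this direction converges, the limit takes the form $\mathbf{g}^\infty\propto\sum_n\alpha_n y_n\x_n$ with $\alpha_n\ge 0$ and $\alpha_n=0$ whenever $y_n\innerprod{\x_n}{\bar\w^\infty}$ strictly exceeds the minimum margin. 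These $\alpha_n$ are precisely the Lagrange multipliers that will reappear in the KKT system of \eqref{eq:l2bias}.

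For the parameterization factor, homogeneity $\P(\alpha\u)=\alpha^\nu\P(\u)$ implies $\nabla_{\u}\P(\alpha\u)=\alpha^{\nu-1}\nabla_{\u}\P(\u)$, so once $\u[t]/\|\u[t]\|$ is close to $\bar\u^\infty$ the Jacobian $\nabla_{\u}\P(\u[t])$ has the same direction as $\nabla_{\u}\P(\bar\u^\infty)$ up to the positive scalar $\|\u[t]\|^{\nu-1}$. Combining the two factors, each increment $\u[t+1]-\u[t]$ is asymptotically aligned (as a direction) with $\nabla_{\u}\P(\bar\u^\infty)^\top\sum_n\alpha_n y_n\x_n$. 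Since $\|\u[t]\|\to\infty$ with direction $\bar\u^\infty$, telescoping $\u[t]=\u[0]+\sum_{s<t}(\u[s+1]-\u[s])$ and normalizing forces
\begin{equation*}
\bar\u^\infty\;\propto\;\nabla_{\u}\P(\bar\u^\infty)^\top\sum_n\alpha_n y_n\x_n.
\end{equation*}
Choosing $c>0$ so that $\u^\star=c\,\bar\u^\infty$ is primal feasible for \eqref{eq:l2bias} with active constraints exactly on the support vectors (possible by $\nu$-homogeneity of $\P$ and assumption (b)), and absorbing a positive rescaling into the $\alpha_n$, gives the full KKT system of \eqref{eq:l2bias}: stationarity $2\u^\star=\sum_n\alpha_n y_n\nabla_{\u}\innerprod{\x_n}{\P(\u^\star)}$, primal feasibility, non-negativity, and complementary slackness (the last inherited from the support-vector characterization of $\mathbf{g}^\infty$).

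The main technical obstacle is coupling these two limits rigorously: one must argue that the direction of $\nabla_{\u}\P(\u[t])^\top\nabla_{\w}\c{L}(\w[t])$ settles down fast enough that the telescoping sum inherits that direction, rather than having accumulated error from the earlier, pre-asymptotic iterations swamp the signal. The ``almost all datasets / almost all initializations'' hypotheses are the natural place to rule out degenerate pathologies such as $\nabla_{\u}\P(\bar\u^\infty)^\top\mathbf{g}^\infty=0$ or exact cancellation among the summands, and also to ensure a unique minimum-margin support set. A related subtlety is that because $\P$ is a degree-$\nu$ polynomial, \eqref{eq:l2bias} is a non-convex program, so the above argument can only certify a first-order stationary point rather than a global minimizer---which is exactly what the theorem claims.
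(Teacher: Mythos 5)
Your decomposition mirrors the paper's proof closely: write the update via the chain rule, show the loss factor $-\nabla_{\w}\c{L}(\w[t])$ converges in direction to a nonnegative combination $\sum_n\alpha_n y_n\x_n$ supported on minimum-margin examples (the paper invokes Lemma~8 of \citet{gunasekar2018characterizing}, which packages exactly the exponential-tail argument you sketch), use homogeneity to show $\nabla_{\u}\P(\u[t])$ is a positive multiple of $\nabla_{\u}\P(\bar\u^\infty)$ up to vanishing error, telescope, and then assemble the KKT system of \eqref{eq:l2bias} after a positive rescaling. So the skeleton is right, and you have correctly located the hard step.

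But you leave that hard step --- ``coupling the two limits rigorously'' so that the telescoped sum inherits the direction of the asymptotic increments --- as an identified obstacle rather than resolving it, which is where the proof actually lives. The paper closes it in two moves. First, the non-degeneracy $\nabla_{\u}\P(\bar\u^\infty)\bar\z^\infty\neq 0$ does \emph{not} come from the ``almost all datasets/initializations'' hypotheses as you suggest; it follows unconditionally from positive margin plus Euler's homogeneous function theorem: since $\nabla_{\u}\P(\u)^\top\u=\nu\P(\u)$, one gets $\bar\u^{\infty\top}\nabla_{\u}\P(\bar\u^\infty)\bar\z^\infty=\nu\sum_n\alpha_n y_n\innerprod{\x_n}{\P(\bar\u^\infty)}>0$. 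Second, writing $\u[t+1]-\u[t]=\eta_t p(t)g(t)^{\nu-1}\bigl(\nabla_{\u}\P(\bar\u^\infty)\bar\z^\infty+\bdelta[t]\bigr)$ with $\bdelta[t]\to 0$, $p(t)=\norm{\z[t]}$, $g(t)=\norm{\u[t]}$, the paper proves $b_t:=\sum_{u<t}\eta_u p(u)g(u)^{\nu-1}\to\infty$ (else $\norm{\u[t]}$ would stay bounded, contradicting divergence under the separability assumption) and then applies the Stolz--Cesaro theorem to $a_t:=\sum_{u<t}\bdelta[u]\eta_u p(u)g(u)^{\nu-1}$ and $b_t$, concluding $a_t/b_t\to 0$. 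That is precisely the quantitative statement that pre-asymptotic error cannot swamp the limiting direction; without it, ``telescoping forces $\bar\u^\infty\propto\nabla_{\u}\P(\bar\u^\infty)\bar\z^\infty$'' is an assertion, not a proof.
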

Theorem~\ref{thm:metathm} is proved in Appendix $A$. 
The proof of Theorem~\ref{thm:metathm} involves showing that the asymptotic direction of gradient descent iterates satisfies the KKT conditions for first order stationary points of \eqref{eq:l2bias}. This crucially relies on two properties. First,   the sequence of gradients  $\nabla_{\w}{\c{L}}(\w[t])$ converge in direction to a positive span of support vectors of $\bar\w^\infty=\lim\limits_{t\to\infty}\frac{\w[t]}{\norm{\w[t]}}$ (Lemma $8$ in \citet{gunasekar2018characterizing}), and this result relies on the loss function $\ell$ being exponential tailed. Secondly, if $\P$ is not homogeneous, then the optimization problems $\min_{\u} \norm{\u}_2^2 \st \forall n,\innerprod{\x_n}{y_n}\ge \gamma$ for different values of unnormalized margin $\gamma$ are not equivalent and lead to different separators. Thus, for general non-homogeneous $\P$, the unnormalized margin of one does not have a significance and the necessary conditions for the first order stationarity  of  \eqref{eq:l2bias} are not satisfied.

Finally, we also note that in many cases (including linear convolutional networks) the optimization problem  \eqref{eq:l2bias} is non-convex and intractable (see \eg \cite{ge2011note}). So we cannot expect $\bar\u^\infty$ to be always be a global minimizer of eq. \eqref{eq:l2bias}. We however suspect that it is possible to obtain a stronger result that $\bar\u^\infty$ reaches a higher order stationary point or even a local minimum of the explicitly regularized estimator in eq. \eqref{eq:l2bias}. 
 
 \paragraph{Implications of the implicit bias in predictor space} While eq. \eqref{eq:l2bias} characterizes the bias of gradient descent in the parameter space, what we really care about is the effective bias introduced in the space of functions learned by the network. In our case, this class of functions is the set of linear predictors $\{\w\in\bR^D\}$.  
 The $\ell_2$ norm penalized solution in eq. \eqref{eq:l2bias}, is equivalently given by,
\begin{equation}
\w^*_{\c{R}_{\P}}=\argmin_{\w} \c{R}_{\P}(\w) \st\forall n,\,y_n\innerprod{\w}{\x_n}\ge 1,\text{ where }\c{R}_{\P}(\w)=\inf_{\u:\P(\u)=\w} \norm{\u}_2^2.
\label{eq:Rwmargin}
\end{equation}
The problems in eq. \eqref{eq:l2bias} and eq. \eqref{eq:Rwmargin} have the same global minimizers, \ie  $\u^*$ is global minimizer of eq. \eqref{eq:l2bias} if and only if $\w^*=\P(\u^*)$ minimizes  eq. \eqref{eq:Rwmargin}. However, such an equivalence does not extend to the  stationary points of the two problems. Specifically, it is possible that a stationary point of eq. \eqref{eq:l2bias} is merely a feasible point for eq. \eqref{eq:Rwmargin} with no special significance.  
So instead of using Theorem~\ref{thm:metathm}, for the specific networks  in Section~\ref{sec:main}, we directly show (in Appendix) that  gradient descent updates converge in direction to  a first order stationary point of the problem in eq. \eqref{eq:Rwmargin}. 

\section{Understanding Gradient Descent in Predictor Space}\label{sec:Rw}
In the previous section, we saw that the implicit bias of gradient descent on a parameterization $\P(\u)$  can be described in terms of the optimization problem \eqref{eq:l2bias}  and the implied penalty function $\c{R}_{\P}(\w)=\min_{\u:\P(\u)=\w} \norm{\u}_2^2$.  We now turn to studying this implied penalty $\c{R}_{\P}(\w)$ and obtaining explicit forms for it, which will reveal the precise form of the implicit bias in terms of the learned linear predictor. The proofs of the lemmas in this section are provided in the Appendix $D$.

\begin{restatable}{lemma}{lemfcn} \label{prop:fcn} For fully connected networks of any depth $L>0$, 
\[\c{R}_{\P_{full}}(\w)=\min_{\u:\P_{full}(\u)=\w} \norm{\u}_2^2=L \norm{\w}^{\nicefrac{2}{L}}_2=\textrm{monotone}(\norm{\w}_2).\]
\end{restatable}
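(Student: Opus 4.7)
The plan is to prove both inequalities: a lower bound $\mathcal{R}_{\mathcal{P}_{full}}(\w) \geq L\|\w\|_2^{2/L}$ that holds for every admissible factorization, and a matching upper bound via explicit construction. This simultaneously identifies the minimum value and exhibits a minimizer, and the monotone--in--$\|\w\|_2$ claim follows immediately since $t \mapsto L t^{2/L}$ is strictly increasing on $\mathbb{R}_+$.

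For the lower bound, I will combine submultiplicativity of the Frobenius norm with the AM--GM inequality. Recall that for matrices, $\|AB\|_F \leq \|A\|_F \|B\|_F$, since $\|AB\|_F^2 = \sum_{i,j}\bigl(\sum_k A_{ik}B_{kj}\bigr)^2 \leq \sum_{i,j}\bigl(\sum_k A_{ik}^2\bigr)\bigl(\sum_k B_{kj}^2\bigr) = \|A\|_F^2\|B\|_F^2$ by Cauchy--Schwarz. Since $\w$ is a column vector with $\|\w\|_2 = \|\w\|_F$, applying this inductively to $\w = \u_1 \u_2 \cdots \u_L$ gives $\|\w\|_2 \leq \prod_{l=1}^L \|\u_l\|_F$. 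Combined with AM--GM applied to the nonnegative quantities $\|\u_l\|_F^2$,
\begin{equation*}
\sum_{l=1}^L \|\u_l\|_F^2 \;\geq\; L\,\Bigl(\prod_{l=1}^L \|\u_l\|_F^2\Bigr)^{1/L} \;\geq\; L \,\|\w\|_2^{2/L}.
\end{equation*}

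For the upper bound, I will construct a factorization that achieves equality. Pick unit basis vectors $e_1^{(l)} \in \bR^{D_l}$ for $l = 1, \ldots, L-1$ and set
\begin{equation*}
\u_1 = \|\w\|_2^{-(L-1)/L}\, \w\,(e_1^{(1)})^\top, \qquad \u_l = \|\w\|_2^{1/L}\, e_1^{(l-1)}(e_1^{(l)})^\top \ \text{for } 2\leq l \leq L-1, \qquad \u_L = \|\w\|_2^{1/L}\, e_1^{(L-1)}.
\end{equation*}
A direct telescoping check shows $\u_1 \u_2 \cdots \u_L = \w$, while $\|\u_l\|_F^2 = \|\w\|_2^{2/L}$ for each $l$, so $\sum_l \|\u_l\|_F^2 = L\|\w\|_2^{2/L}$. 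The edge case $\w = 0$ is handled by the zero factorization.

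There is no substantive obstacle here; the only thing to be careful about is ensuring the construction is dimension-consistent regardless of the hidden widths $D_1, \ldots, D_{L-1}$, which the rank-one construction above handles uniformly (it uses only a one-dimensional subspace of each intermediate layer). Together the two bounds give $\mathcal{R}_{\mathcal{P}_{full}}(\w) = L\|\w\|_2^{2/L}$, and since this is a strictly increasing function of $\|\w\|_2$, it is a monotone reparameterization of $\|\w\|_2$ as claimed.
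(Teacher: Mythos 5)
Your proof is correct and follows essentially the same approach as the paper: both use submultiplicativity of the Frobenius norm followed by AM--GM for the lower bound, and both construct a rank-one factorization through one-dimensional intermediate subspaces to achieve equality (your explicit basis vectors $e_1^{(l)}$ are a particular choice of the paper's arbitrary unit vectors $\mathbf{z}_l$). You additionally spell out the Cauchy--Schwarz proof of submultiplicativity and explicitly handle $\w = 0$, which the paper leaves implicit, but these are cosmetic rather than substantive differences.
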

We see that $\w^*_{\c{R}_{\P_{full}}}=\argmin_{\w}  \c{R}_{\P_{full}}(\w)\st \forall n, y_n\innerprod{\x_n}{\w}\ge 1$  in eq.~\eqref{eq:Rwmargin} for fully connected networks is independent of the depth of the network $L$. In Theorem~\ref{thm:fcn}, we indeed show that gradient descent for this class of networks converges in the direction of $\w^*_{\c{R}_{\P_{full}}}$. 

Next, we motivate the characterization of $\c{R}_{\P}(\w)$ for linear convolutional networks by first looking at the special  \textit{linear diagonal network} depicted in Figure~\ref{fig:dn}. 
The depth--$L$ diagonal network is parameterized by $\u=[\u_l\in\bR^D]_{l=1}^L$ and the mapping to a linear predictor is given by ${\c{P}_{diag}(\u)=\text{diag}(\u_1)\text{diag}(\u_2)\ldots\text{diag}(\u_{L-1})\u_L}$. 
\begin{restatable}{lemma}{lemdn} For a depth--$L$ diagonal network with parameters   $\u=[\u_l\in\bR^D]_{l-1}^{L}$, we have
\[\c{R}_{\P_{diag}}(\w)=\min_{\u:\P_{diag}(\u)=\w} \norm{\u}_2^2=L\norm{\w}_{\nicefrac{2}{L}}^{\nicefrac{2}{L}}=\textrm{monotone}(\norm{\w}_{\nicefrac{2}{L}}).\]
\end{restatable}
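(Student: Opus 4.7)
The plan is to reduce the minimization to $D$ independent scalar problems by exploiting the coordinate-wise decoupling induced by the diagonal parameterization, and then apply the arithmetic-geometric mean inequality in each coordinate.

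First I would unpack the definition of $\P_{diag}$: the $d$-th coordinate of $\P_{diag}(\u)$ is simply $\prod_{l=1}^L \u_l[d]$, so the constraint $\P_{diag}(\u) = \w$ splits into $D$ independent scalar constraints $\prod_{l=1}^L \u_l[d] = \w[d]$. The objective also splits, since $\norm{\u}_2^2 = \sum_{d=0}^{D-1}\sum_{l=1}^L \u_l[d]^2$. Hence
\begin{equation*}
\c{R}_{\P_{diag}}(\w) \;=\; \sum_{d=0}^{D-1} m(\w[d]), \quad\text{where}\quad m(w) := \inf\Bigl\{\textstyle\sum_{l=1}^L u_l^2 : (u_1,\ldots,u_L)\in\bR^L,\; \prod_l u_l = w\Bigr\}.
\end{equation*}

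Second, I would evaluate $m(w)$ by AM-GM. For any $(u_1,\ldots,u_L)\in\bR^L$ with $\prod_l u_l = w$, applying AM-GM to the nonnegative numbers $u_l^2$ yields
\begin{equation*}
\sum_{l=1}^L u_l^2 \;\geq\; L\Bigl(\prod_{l=1}^L u_l^2\Bigr)^{1/L} \;=\; L\,|w|^{2/L},
\end{equation*}
with equality precisely when all $|u_l|$ coincide. This lower bound is attained: for $w\neq 0$ set $u_l = |w|^{1/L}$ for $l<L$ and $u_L = \mathrm{sign}(w)\,|w|^{1/L}$ (so the product is $w$); for $w=0$ take every $u_l = 0$. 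Therefore $m(w) = L|w|^{2/L}$, and summing over coordinates gives $\c{R}_{\P_{diag}}(\w) = L\sum_d |\w[d]|^{2/L} = L\norm{\w}_{2/L}^{2/L}$, which is a monotonic function of $\norm{\w}_{2/L}$ as claimed.

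I do not anticipate any serious obstacle here. The argument is essentially a one-variable AM-GM computation replicated coordinate-by-coordinate; the only minor point worth flagging is the boundary case $\w[d]=0$, which is handled by the all-zero tuple and is consistent with the convention $|0|^{2/L}=0$. Structurally, this is exactly why the diagonal parameterization admits a closed form where the fully connected and convolutional cases require more work: $\P_{diag}$ induces no cross-coordinate interaction, so no argument beyond scalar AM-GM is needed.
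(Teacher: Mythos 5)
Your proof is correct and follows essentially the same strategy as the paper's: an AM-GM lower bound applied per coordinate, combined with an explicit construction attaining equality (you place the sign on $u_L$, the paper on $\bar\u_1$, which is immaterial). Your explicit reduction to $D$ independent scalar problems is a slightly cleaner framing of what the paper does implicitly, but it is the same argument.
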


Finally, for full width linear convolutional networks parameterized by $\u=[\u_l\in \bR^D]_{l=1}^L$, recall the following representation of $\w=\P_{conv}(\u)$ in Fourier from Lemma~\ref{lem:fft-conv}. 
\[\hat{\w}=\text{diag}(\hat\u_1)\ldots\text{diag}(\hat\u_{L-1})\hat\u_L,\]
where $\hat{\w},\hat{\u}_l\in\b{C}^D$ are Fourier basis representation of $\w,\u_l\in\bR^D$, respectively. 
Extending the result of diagonal networks for the complex vector spaces, we get the following characterization of $\c{R}_{\P_{conv}}(\w)$ for linear convolutional networks.
\begin{restatable}{lemma}{lemcn} For a depth--$L$  convolutional network with parameters   $\u=[\u_l\in\bR^D]_{l-1}^{L}$, we have 
\[\c{R}_{\P_{conv}}(\w)=\min_{\u:\P_{conv}(\u)=\w} \norm{\u}_2^2=L\|{\hat\w}\|_{\nicefrac{2}{L}}^{\nicefrac{2}{L}}=\textrm{monotone}(\norm{\hat{\w}}_{\nicefrac{2}{L}}).\]
\end{restatable}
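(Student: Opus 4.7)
The plan is to pass to the Fourier basis via Lemma~\ref{lem:fft-conv}, where the constraint $\P_{conv}(\u)=\w$ decouples into $D$ scalar product constraints, and then reduce the problem to a per-frequency AM--GM calculation analogous to the diagonal network case. The crucial observation is that the unnormalized DFT we use (with the $1/\sqrt{D}$ scaling) is unitary, so Parseval's identity gives $\|\u_l\|_2^2 = \|\hat\u_l\|_2^2$ for each layer. Consequently
\begin{equation*}
\|\u\|_2^2 \;=\; \sum_{l=1}^L \|\u_l\|_2^2 \;=\; \sum_{l=1}^L \|\hat\u_l\|_2^2 \;=\; \sum_{d=0}^{D-1}\sum_{l=1}^L |\hat\u_l[d]|^2,
\end{equation*}
and by Lemma~\ref{lem:fft-conv} the equality constraint $\w = \P_{conv}(\u)$ is equivalent to $\prod_{l=1}^L \hat\u_l[d] = \hat\w[d]$ for every $d$.

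Next I would lower bound each inner sum separately. For each coordinate $d$, AM--GM applied to the nonnegative reals $|\hat\u_l[d]|^2$ under the magnitude constraint $\prod_l |\hat\u_l[d]| = |\hat\w[d]|$ yields
\begin{equation*}
\sum_{l=1}^L |\hat\u_l[d]|^2 \;\geq\; L\,\Bigl(\prod_{l=1}^L |\hat\u_l[d]|^2\Bigr)^{1/L} \;=\; L\,|\hat\w[d]|^{2/L},
\end{equation*}
with equality when $|\hat\u_l[d]| = |\hat\w[d]|^{1/L}$ for all $l$. Summing over $d$ gives the lower bound $\|\u\|_2^2 \geq L\|\hat\w\|_{2/L}^{2/L}$.

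For tightness, I would exhibit an explicit minimizer and verify that it corresponds to a \emph{real} vector $\u_l\in\bR^D$, which is the one nontrivial wrinkle. Set $|\hat\u_l[d]| = |\hat\w[d]|^{1/L}$ and pick phases $\phi_{\hat\u_l[d]} = \phi_{\hat\w[d]}/L$ (for $d$ with $\hat\w[d]\neq 0$; otherwise set one factor to zero). Then $\prod_l \hat\u_l[d] = \hat\w[d]$ as required. Since $\w$ is real we have the conjugate symmetry $\hat\w[D-d] = \hat\w[d]^*$, hence $|\hat\w[D-d]| = |\hat\w[d]|$ and $\phi_{\hat\w[D-d]} = -\phi_{\hat\w[d]}$, so the constructed $\hat\u_l$ inherits the same conjugate symmetry and therefore corresponds to a real $\u_l$ via the inverse DFT.

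The main obstacle I anticipate is the bookkeeping at the degenerate frequencies, namely $d=0$ and $d=D/2$ (when $D$ is even), where $\hat\w[d]$ must be real, and coordinates where $\hat\w[d]=0$. At the real frequencies one must choose $\hat\u_l[d]\in\bR$ with the correct sign (take $|\hat\w[d]|^{1/L}$ with one factor absorbing the sign); at the zero frequencies one simply sets a single $\hat\u_l[d]=0$ and any values for the others, which attains both sides of the AM--GM bound (both are $0$). None of these cases tighten the lower bound, so the infimum is attained and equals $L\|\hat\w\|_{2/L}^{2/L}$, giving the claimed formula and its monotone-in-$\|\hat\w\|_{2/L}$ form.
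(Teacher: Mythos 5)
Your proof is correct and follows essentially the same route as the paper: pass to the Fourier domain via Lemma~\ref{lem:fft-conv}, use unitarity of the DFT (Parseval) to equate $\|\u\|_2^2$ with $\|\hat\u\|_2^2$, apply per-frequency AM--GM for the lower bound, and exhibit an explicit feasible $\hat{\bar\u}$ achieving it. The one place you diverge is the minimizer construction: you split the phase $\phi_{\hat\w}[d]$ equally as $\phi_{\hat\w}[d]/L$ across layers, which forces the degenerate-frequency bookkeeping you flag at $d=0$ and $d=D/2$ (and also requires choosing a branch of the argument carefully so that $\phi_{\hat\u_l}[D-d]=-\phi_{\hat\u_l}[d]$ holds), whereas the paper absorbs the entire phase into the first filter $\hat{\bar\u}_1[d]=\e^{\ci\phi_{\hat\w}[d]}|\hat\w[d]|^{1/L}$ and keeps all other layers real, so conjugate symmetry of $\hat{\bar\u}_l$ is inherited verbatim from that of $\hat\w$ with no special cases. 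Your explicit check that the constructed $\hat\u_l$ are conjugate-symmetric, hence correspond to real filters $\u_l\in\bR^D$, is a worthwhile addition the paper leaves implicit.
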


\section{Discussion}
In this paper, we characterized the implicit bias of gradient descent on linear convolutional networks.  We showed that even in the case of  linear activations and a full width convolution, wherein the convolutional network defines the exact same model class as fully connected networks, merely changing to a convolutional parameterization  introduces radically different, and very interesting, bias when training with gradient descent.  Namely, training a convolutional representation with gradient descent implicitly biases  towards sparsity in the frequency domain representation of linear predictor.  

For convenience and simplicity of presentation, we studied one dimensional circular convolutions.  Our results can be directly extended to higher dimensional input signals and convolutions, including the two-dimensional convolutions common in image processing and computer vision.  We also expect similar results for convolutions with zero padding instead of circular convolutions, although this requires more care with analysis of the edge effects.

A more significant way in which our setup differs from usual convolutional networks is that we use full width convolutions, while in practice it is common to use convolutions with bounded width, much smaller then the input dimensionality.  This setting is within the scope of Theorem~\ref{thm:metathm}, as the linear transformation is still homogeneous.  However, understanding the implied bias in the predictor space, i.e.~understanding $\c{R}_{\P}(\w)$ requires additional work. It will be very interesting to see if restricting the width of the convolutional network gives rise to further interesting behaviors.

Another important direction for future study is understanding the implicit bias for networks with multiple outputs.  For both fully connected and convolutional networks, we looked at networks with a single output.  With $C>1$ outputs, the network implements a linear transformation $\x\mapsto\w \x$ where $\w\in\bR^{C\times D}$ is now a matrix. Results for matrix sensing in  \citet{gunasekar2018characterizing} imply that for two layer fully connected networks with multiple outputs, the implicit bias is to a maximum margin solution with respect to the nuclear norm $\norm{\w}_\star$.  This is already different from the implicit bias of a one-layer ``network'' (i.e.~optimizing $\w$ directly), which would be in terms of the Frobenius norm $\norm{\w}_F$ (from the result of \citet{soudry2017implicit}).  We suspect that with multiple outputs, as more layers are added, even fully connected networks  exhibit a shrinking sparsity penalty on the singular values of the effective linear matrix predictor $\w\in\bR^{C\times D}$.  Precisely characterizing these biases requires further study.

When using convolutions as part of a larger network, with multiple parallel filters, max pooling, and non-linear activations, the situation is of course more complex, and we do not expect to get the exact same bias.  However, we do expect the bias to be at the very least related to the sparsity-in-frequency-domain bias that we uncover here, and we hope our work can serve as a basis for further such study.  There are of course many other implicit and explicit sources of inductive bias---here we show that merely parameterizing transformations via convolutions and using gradient descent for training already induces sparsity in the frequency domain.

On a technical level, we provided a generic characterization for the bias of gradient descent on linear models parameterized as $\w=\P(\u)$ for a homogeneous polynomial $\P$.  The $\ell_2$ bias (in parameter space) we obtained is not surprising, but also should not be taken for granted -- \eg the result does not hold in general for non-homogeneous $\P$, and even with homogeneous polynomials, the characterization is not as crisp when other loss functions are used, \eg~with a squared loss and matrix factorization (a homogeneous degree two polynomial representation), the implicit bias is much more fragile \cite{gunasekar2017implicit,li2017algorithmic}.  Moreover, Theorem \ref{thm:metathm} only ensures convergence to first order stationary point in the parameter space, which is not sufficient for convergence to stationary points of the implied bias in the model space (eq. \eqref{eq:Rwmargin}). It is of interest for future work to strengthen this result to show either convergence to higher order stationary points or local minima in parameter space, or to directly show the convergence to stationary points of \eqref{eq:Rwmargin}. 

It would also be of interest to strengthen other technical aspects of our results: extend the results to loss functions with tight exponential tails (including logistic loss) and handle all datasets including the set of measure zero degenerate datasets---these should be possible following the techniques of \citet{soudry2017implicit,telgarsky2013margins,ji2018risk}. We can also calculate exact rates of convergence to  the asymptotic separator along the lines of \citet{soudry2017implicit,nacson2018convergence,ji2018risk} showing how fast the inductive bias from optimization kicks in and why it might be beneficial to continue optimizing even after the loss value $\c{L}(\w[t])$ itself is negligible. 
Finally, for logistic regression, \citet{ji2018risk} extend the results of asymptotic convergence of gradient descent classifier to the cases where the data is not strictly linearly separable. This is an important relaxation of our assumption on strict linear separability. More generally, for non-separable data, we would like a more fine grained analysis connecting the iterates $\w^{(t)}$ along the optimization path to the estimates along regularization path, $\hat{\w}(c)=\argmin_{\c{R}_{\P}(\w)\le c}\c{L}(\w)$, where an explicit regularization is added to the optimization objective.

{\small
\bibliographystyle{plainnat}
\bibliography{suriya}}
\clearpage
{
\appendix
\begin{center}\Large \textbf Appendix 
\end{center}
The proofs of the  theorems in the paper are organized as follows: In Appendix \ref{app-hom} we first give the proof for Theorem~\ref{thm:metathm}, which includes linear fully connected and full width convolutional networks  as special cases. This gives us some general results that can be special-cased to prove the stronger results for these networks in Section~\ref{sec:main}. 
In Appendix~\ref{app-fcn}, we prove Theorem~\ref{thm:fcn} on the implicit bias of  fully connected linear networks. In Appendix~\ref{app-conv}, we prove Theorem~\ref{thm:conv-net}--\ref{thm:conv-net-l} on the implicit bias of  linear convolutional networks. 
Finally, in  Appendix \ref{app:lem} we prove the lemmas in Section~\ref{sec:Rw} on computing the form of implicit bias of linear networks learned using gradient descent. 

Unless specified otherwise, $\norm{.}$ denotes the  Euclidean norm. We additionally use the notation  $\mathbf{v}\propto\mathbf{v'}$ to denote equality up to strictly positive scalar multipliers, \ie when $\mathbf{v}=\gamma\mathbf{v'}$ for some $\gamma>0$.

The following is a paraphrasing of Lemma~$8$ in \citet{gunasekar2018characterizing} and is used in multiple proofs.
\begin{lemma} \label{lem:grad-conv} [Lemma~$8$ in \cite{gunasekar2018characterizing}]
For almost all linearly separable dataset $\{\x_n,y_n\}_n$, consider any sequence $\w[t]$ that  minimizes the empirical objective in eq. \eqref{eq:lm}, \ie   $\c{L}(\w[t])\to0$. If \begin{inparaenum}[(a)] \item  $\bar{\w}^\infty:=\lim\limits_{t\to\infty}\frac{\w[t]}{\norm{\w[t]}}$ exists and has a positive margin, and \item $\bar{\z}^\infty:=\lim\limits_{t\to\infty}\frac{-\nabla_{\w}\c{L}(\w[t])}{\norm{\nabla_{\w}\c{L}(\w[t])}}$ exists\end{inparaenum}, then 
$\exists \{\alpha_n\ge 0\}_{n\in S} \st \bar{\z}^\infty=\sum_{n\in S}\alpha_n\,y_n\x_{n},$
 where $S=\{n:y_n\innerprod{\bar{\w}^\infty}{\x_n}=\min_n y_n\innerprod{\bar{\w}^\infty}{\x_{n}}\}$ are the indices of the data points with smallest margin to the limit direction $\bar \w^\infty$. 
\end{lemma}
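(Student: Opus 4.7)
The core idea is that the exponential loss produces a gradient that is a non-negative combination of the feature vectors, with weights that decay geometrically in the margins; as the iterate diverges along a positive-margin direction, only the smallest-margin (i.e.\ support) points survive after rescaling. Concretely, using $\ell(u,y)=\exp(-uy)$ one has
\begin{equation*}
-\nabla_{\w}\c{L}(\w[t]) \;=\; \sum_{n=1}^N \exp\!\bigl(-\gamma_n(t)\bigr)\, y_n \x_n, \qquad \gamma_n(t):=y_n\innerprod{\w[t]}{\x_n},
\end{equation*}
so the negative gradient already lies in the conic hull of $\{y_n\x_n\}_n$; the task reduces to showing that after normalization only indices in $S$ contribute.

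\paragraph{Identifying the dominant indices.} From $\c{L}(\w[t])\to 0$ we get $\gamma_n(t)\to\infty$ for every $n$, hence $\norm{\w[t]}\to\infty$. Set $\gamma^*:=\min_n y_n\innerprod{\bar\w^\infty}{\x_n}>0$. Dividing by $\norm{\w[t]}$ and using $\w[t]/\norm{\w[t]}\to\bar\w^\infty$ yields $\gamma_n(t)/\norm{\w[t]} \to y_n\innerprod{\bar\w^\infty}{\x_n}$, which equals $\gamma^*$ for $n\in S$ and is strictly larger for $n\notin S$. Consequently, for any $n'\in S$ and $n\notin S$,
\begin{equation*}
\gamma_n(t)-\gamma_{n'}(t) \;=\; \norm{\w[t]}\bigl(y_n\innerprod{\bar\w^\infty}{\x_n}-\gamma^*\bigr) + o(\norm{\w[t]}) \;\to\; +\infty,
\end{equation*}
so $\exp(-\gamma_n(t))/\exp(-\gamma_{n'}(t))\to 0$: the non-support coefficients are negligible compared to the support ones.

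\paragraph{Rescaling and taking the limit.} Let $\mu(t):=\max_n \exp(-\gamma_n(t))$ and $c_n(t):=\exp(-\gamma_n(t))/\mu(t)\in[0,1]$. By the previous paragraph, $c_n(t)\to 0$ for $n\notin S$. By compactness of $[0,1]^N$, pass to a subsequence along which $c_n(t)\to\tilde c_n$ for every $n$, with $\tilde c_n=0$ for $n\notin S$, $\tilde c_n\in[0,1]$ for $n\in S$, and $\max_n\tilde c_n=1$. Then along this subsequence,
\begin{equation*}
\frac{-\nabla_{\w}\c{L}(\w[t])}{\mu(t)} \;=\; \sum_{n}c_n(t)\,y_n\x_n \;\longrightarrow\; \sum_{n\in S}\tilde c_n\, y_n\x_n.
\end{equation*}
Crucially, this limit vector is nonzero: taking its inner product with $\bar\w^\infty$ gives $\gamma^*\sum_{n\in S}\tilde c_n\ge\gamma^*>0$, because positive margin makes every $\innerprod{\bar\w^\infty}{y_n\x_n}\ge\gamma^*$ and at least one $\tilde c_n=1$. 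Normalizing, the subsequential limit of $-\nabla_{\w}\c{L}(\w[t])/\norm{\nabla_{\w}\c{L}(\w[t])}$ is the unit vector obtained by normalizing $\sum_{n\in S}\tilde c_n y_n\x_n$; since $\bar\z^\infty$ exists as a full-sequence limit by assumption (b), it must agree with this value, giving the desired representation $\bar\z^\infty=\sum_{n\in S}\alpha_n y_n\x_n$ with $\alpha_n\ge 0$.

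\paragraph{Main obstacle.} The subtle point is that for two indices $n,n'\in S$ the difference $\gamma_n(t)-\gamma_{n'}(t)$ is only known to be $o(\norm{\w[t]})$; it need not be bounded, so the support-coefficient ratios $c_n(t)/c_{n'}(t)$ may oscillate. This is exactly why we normalize by the maximum and pass to a subsequence in which the $c_n$'s converge individually. The positive-margin condition is then what rescues Step~3 by ruling out the only degeneracy that would break the argument, namely a cancellation $\sum_{n\in S}\tilde c_n y_n\x_n=0$ with $\tilde c_n\ge 0$ not all zero; the ``almost all datasets'' proviso presumably handles residual non-generic degeneracies (e.g.\ ties producing ambiguous $S$) but is not needed for the linear-combination conclusion itself.
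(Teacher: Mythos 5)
Your proof is correct: the paper itself does not prove this lemma (it imports it verbatim as Lemma~8 of \citet{gunasekar2018characterizing}), and your argument — writing $-\nabla_{\w}\c{L}(\w[t])=\sum_n \e^{-\gamma_n(t)}y_n\x_n$, showing the non-minimal-margin coefficients are exponentially dominated because $\gamma_n(t)-\gamma_{n'}(t)\to\infty$ for $n\notin S$, $n'\in S$, normalizing by the largest coefficient, passing to a convergent subsequence, and using the positive margin to rule out cancellation before matching the subsequential limit with the assumed full-sequence limit $\bar{\z}^\infty$ — is essentially the same argument used there. Your closing remarks are also accurate: the subsequence extraction correctly handles possibly unbounded gaps among support-vector exponents, and the ``almost all datasets'' qualifier is not needed for this particular conclusion.
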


\section{Homogeneous Polynomial Parameterization: Proof of Theorem~\ref{thm:metathm}}\label{app-hom}
\metathm*
\begin{proof}  $\u[t]$ are the sequence gradient descent iterates from eq. \eqref{eq:gd} for minimizing $\c{L}_{\P}(\u)$ in eq \eqref{eq:lmu} with exponential loss over the model class of $\c{B}=\{\P(\u):\u\in\bR^P\}$, where $\P$ is a homogeneous polynomial function. 
We first introduce some notation.
\begin{asparaenum}
\item  From  the assumption in theorem, we have that  $\bar{\u}^\infty=\lim\limits_{t\to\infty}\frac{\u[t]}{\norm{\u[t]}}$. Denoting  $g(t)=\norm{\u[t]}$, we have that for some $\bdelta[t]_{\u}\to 0$, the following representation of $\u[t]$ holds. 
\begin{align}
\u[t]&=\bar{\u}^\infty g(t)+\bdelta[t]_{\u}\,g(t).\label{eq:u}
\end{align}
\item Let $\w[t]=\P(\u[t])$ denote the sequence of linear predictors for this network induced by the gradient descent iterates. We can see that $\w[t]$ converges in direction too using the following arguments: homogeneity of $\P$ implies that $\P(\nicefrac{\u[t]}{\norm{\u[t]}})=\nicefrac{\P(\u[t])}{\norm{\u[t]}^\nu}$ for some $\nu$. Hence, $\frac{\w[t]}{\norm{\w[t]}}=\frac{\P(\nicefrac{\u[t]}{\norm{\u[t]}})}{\norm{\P(\nicefrac{\u[t]}{\norm{\u[t]}})}}\overset{t\to\infty}{\rightarrow}\frac{\P(\bar \u^\infty)}{\norm{\P(\bar \u^\infty)}}:=\bar{\w}^\infty$. 

\item  $\z[t]=-\nabla_{\w}\c{L}(\w[t])=\sum_n\exp(-\innerprod{\w[t]}{y_n\x_n})y_n\x_n$. Since we assume that $\z[t]$ converges in direction, let $\bar{\z}^\infty=\lim\limits_{t\to\infty}\frac{\z[t]}{\norm{\z[t]}}$. Denoting $p(t)=\norm{\z[t]}$, for some $\bdelta[t]_{\z}\to0$, we can  write $\z[t]$ as,
\begin{align}
\z[t]=\bar{\z}^\infty p(t)+\bdelta[t]_{\z}\,p(t),
\label{eq:z}
\end{align}
\item  Let $\nabla_{\u}\P\left(\u[t]\right)\in\bR^{P\times D}$ denote the Jacobian of $\P(\u)$, \ie $\nabla_{\u}\P\left(\u[t]\right)[p,d]=\pdv{(\P\left(\u[t]\right)[d])}{\u{[p]}}$. 
If $\P:\bR^P\to \bR^D$ is a homogeneous polynomial of degree $\nu>0$, then  $\nabla_{\u}\P:\bR^P\to\bR^{P\times D}$ is a homogeneous polynomial of degree $\nu-1$.  Using eq. \eqref{eq:u}, we have 
\[\nabla_{\u}\P(\bar\u^\infty)=
\lim\limits_{t\to\infty}\nabla_{\u}\P\left(\frac{\u[t]}{g(t)}\right)=\lim\limits_{t\to\infty}\frac{\nabla_{\u}\P(\u[t])}{g(t)^{\nu-1}}\]
Thus, $\exists \bdelta[t]_{1}\to0$, such that 
\begin{align}
\nabla_{\u}\P\left(\u[t]\right)&=\nabla_{\u}\P\left(\bar\u^\infty\right)\,g(t)^{\nu-1}+\bdelta[t]_{1}\,g(t)^{\nu-1}.
\label{eq:pu}
\end{align}
\item Finally, from the definition of $\nabla_{\u}\P(\u)$, we have $\nabla_{\u}\c{L}_\P(\u[t])=\nabla_{\u}\P\left(\u[t]\right)\nabla_{\w}\c{L}(\w[t])$, and hence from eq. \eqref{eq:gd}, 
\begin{equation}
\Delta\u[t]:=\u[t+1]-\u[t]=\eta_t \nabla_{\u}\P\left(\u[t]\right)\z[t]
\label{eq:dw1}
\end{equation}
\end{asparaenum}

Using the assumptions in the theorem along with our argument above for convergence of $\w[t]$ in direction, we satisfy the conditions of Lemma~\ref{lem:grad-conv}, which will be crucially used in our proof. 

\paragraph{KKT conditions for first order stationary points} 
We want show that  there exists a positive scaling of $\bar\u^\infty$, denoted as $\tilde{\u}^\infty=\gamma\bar\u^\infty$ for some $\gamma>0$,  
such that $\tilde{\u}^\infty$ is a first order stationary point of the explicitly regularized problem in eq. \eqref{eq:l2bias}. Towards this we show that $\tilde{\u}^\infty$ satisfy the following first order KKT conditions of eq. \eqref{eq:l2bias} 
\begin{equation}
\begin{split}
& \forall n,\,y_n\innerprod{\x_n}{\P(\u)} \ge 1,\\
&\exists \{\alpha_n\}_{n=1}^N \st \forall n, \alpha_n\ge0 \tand \alpha_n=0, \forall n\notin {S}:=\{n\in[N]:y_{n}\innerprod{\x_n}{\P(\u)}=1\},\\
& \u = \nabla_{\u}\P(\u)\left[\sum_n\alpha_n\,y_n\x_{n}\right].
\end{split}
\label{eq:kkt-l2}
\end{equation}

\textbf{Primal feasibility.} 
We showed earlier that if $\u[t]$ converges in direction, then $\w[t]=\P(\u[t])$ converges in direction to $\bar \w^\infty=\lim\limits_{t\to\infty}\frac{\w[t]}{\norm{\w[t]}}\propto\P(\bar\u^\infty)$. Further, from the assumptions in the theorem,  we have that 
 $\bar{\w}^\infty$ satisfies $\forall n$, $y_n\innerprod{\x_n}{\bar{\w}^\infty}>0$, which also implies $\min_ny_n\innerprod{\x_n}{\P(\bar{\u}^\infty)}>0$ since $\bar{\w}^\infty\propto \P(\bar\u^\infty)$. 
Now, if $\P$ is homogeneous of of degree $\nu$, then for $\gamma=\left({\min_ny_n\innerprod{\x_n}{\P(\bar{\u}^\infty)}}\right)^{-\nicefrac{1}{\nu}}$, $\tilde{\u}^\infty=\gamma \bar\u^\infty$ satisfies $\min_n y_n\innerprod{\x_n}{\P(\tilde{\u}^\infty)}= 1$.

\paragraph{Showing other KKT conditions for $\tilde\u^\infty$.} 
The crux of the proof of  Theorem~\ref{thm:metathm} involves showing the existence of $\{\alpha_n\ge0\}_n$ such that the stationarity and complementary slackness conditions in eq. \eqref{eq:kkt-l2} are satisfied.  This crucially relies on a key lemma (Lemma~\ref{lem:grad-conv}) showing that the gradient in the space of linear predictors $\nabla_{\w}{\c{L}}(\w[t])$ are dominated by positive linear combinations of support vectors of the asymptotic predictor $\bar\w^\infty$.

Let $S_\infty=\{n:y_n\innerprod{\P(\tilde{\u}^\infty)}{\x_n}=1\}$ denote the indices of support vectors for  $\P(\tilde{\u}^\infty)$, which are also the support vectors of $\bar\w^\infty$, since by homogeneity of $\P$, $\bar\w^\infty\propto\P(\bar{\u}^\infty)\propto\P(\tilde{\u}^\infty)$. 
Thus, from Lemma~\ref{lem:grad-conv}, we have  $\bar{\z}^\infty=\lim\limits_{t\to\infty}\frac{\z[t]}{\norm{\z[t]}}=\sum_{n\in S_\infty} {\alpha}_ny_n\x_n$ for some $\{{\alpha}_n\}_{n\in S_\infty}$ such that ${\alpha}_n\ge 0$. We propose a positive scaling of this $\{\alpha_n\}_{n=1}^N$ as our candidate dual certificate, which satisfies both dual feasibility and complementary slackness.

To prove the theorem, the remaining step is to show that $\tilde{\u}^\infty\propto \nabla_{\u}\P(\tilde\u^\infty)\bar\z^\infty$. Since $\tilde{\u}^\infty=\gamma \bar\u^\infty$ and $\P$ is homogeneous, this condition  is equivalent to showing that $\bar{\u}^\infty\propto \nabla_{\u}\c{P}(\bar{\u}^\infty)\bar\z^\infty$.

\paragraph{Showing that $\bar{\u}^\infty\propto \nabla_{\u}\P(\bar{\u}^\infty)\bar\z^\infty$.}
 
Substituting for  $\z[t]$ and $\nabla_{\u}\P(\u[t])$  from  eqs. \eqref{eq:z} and \eqref{eq:pu}, respectively, in the gradient descent updates (eq. \eqref{eq:dw1}),  we have the following:
\begin{equation}
\begin{split}
\u[t+1]-\u[t]&=\eta_t\nabla_{\u}\P\left(\u[t]\right)\z[t]\\
&=\eta_t\left(\nabla_{\u}\P\left(\bar{\u}^\infty\right)\,g(t)^{\nu-1}+\bdelta[t]_{1}\,g(t)^{\nu-1}\right)\left(\bar{\z}^\infty p(t)+\bdelta[t]_{\z}\,p(t)\right)\\
&\overset{(a)}=\left(\eta_t{p(t)g(t)^{\nu-1}}\right)[\nabla_{\u}\P\left(\bar \u^\infty\right)\bar{\z}^\infty +\bdelta[t]],
\end{split}
\label{eq:slackness}
\end{equation}
where in $(a)$ $\bdelta[t]=\nabla_{\u}\P\left(\bar{\u}^\infty\right)\bdelta[t]_{\z}+\bdelta[t]_{1}\bdelta[t]_{\z}+\bdelta[t]_{1}\bar{\z}^\infty\to0$.

Summing over $t$, we have 
\begin{equation}
\u[t]-\u[0]=\nabla_{\u}\P\left(\bar \u^\infty\right)\bar{\z}^\infty\sum_{u<t}{\eta_u p(u)g(u)^{\nu-1}}+\sum_{u<t}\bdelta[u]{\eta_up(u)g(u)^{\nu-1}},
\label{eq:last1}
\end{equation}

We want to argue that the first term, \ie $\nabla_{\u}\P\left(\bar \u^\infty\right)\bar{\z}^\infty$, is the dominant term. Towards this we state and prove the following intermediate claim
\begin{claim} $\norm{\nabla_{\u}\P\left(\bar \u^\infty\right)\bar{\z}^\infty}>0$ and $\sum_{u<t}{\eta_up(u)g(u)^{\nu-1}}\to\infty$.
\end{claim}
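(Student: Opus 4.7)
The plan is to tackle the two assertions independently.

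For the divergence $\sum_{u<t}\eta_u p(u)g(u)^{\nu-1}\to\infty$, the strategy is a short contradiction. Write $S(t):=\sum_{u<t}\eta_u p(u)g(u)^{\nu-1}$; since each summand is nonnegative, $S(t)$ is monotone, so if it fails to diverge it is bounded by some $M$. Because $\bdelta[u]\to 0$ is in particular uniformly bounded by some $B$, the error term $E(t):=\sum_{u<t}\bdelta[u]\eta_u p(u)g(u)^{\nu-1}$ appearing in eq.~\eqref{eq:last1} then satisfies $\norm{E(t)}\le BM$. Substituting into eq.~\eqref{eq:last1} would force $\u[t]$ to remain bounded, contradicting $g(t)=\norm{\u[t]}\to\infty$. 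Divergence of $g(t)$ itself follows from the standing assumptions: $\c{L}_\P(\u[t])\to 0$ with exponential loss on a dataset that $\bar\w^\infty$ separates with positive margin forces $\norm{\w[t]}\to\infty$, and polynomiality of $\P$ transfers this to $\norm{\u[t]}$.

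For the non-vanishing $\norm{\nabla_{\u}\P(\bar\u^\infty)\bar\z^\infty}>0$, the key idea is to test the vector against $\bar\u^\infty$ itself and exploit Euler's identity for homogeneous functions. The object $\nabla_{\u}\P(\bar\u^\infty)\bar\z^\infty$ is exactly the gradient at $\bar\u^\infty$ of the scalar map $\u\mapsto g(\u):=\innerprod{\bar\z^\infty}{\P(\u)}$, and $g$ is a homogeneous polynomial of degree $\nu$ because $\P$ is. Euler's identity then gives
\[
\innerprod{\bar\u^\infty}{\nabla_{\u}\P(\bar\u^\infty)\bar\z^\infty}\;=\;\nu\,\innerprod{\bar\z^\infty}{\P(\bar\u^\infty)}.
\]
I would then invoke Lemma~\ref{lem:grad-conv} to write $\bar\z^\infty=\sum_{n\in S_\infty}\alpha_n y_n\x_n$ with $\alpha_n\ge 0$ and $S_\infty$ the support vectors of $\bar\w^\infty\propto\P(\bar\u^\infty)$. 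The positive-margin hypothesis yields $y_n\innerprod{\x_n}{\P(\bar\u^\infty)}>0$ for every $n$, while $\norm{\bar\z^\infty}=1$ prevents all $\alpha_n$ from vanishing; hence the right-hand side above is strictly positive, and so is the left-hand side, giving the desired nonzeroness.

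The main care point is the Euler-identity step: $\P$ is vector-valued, so one must be explicit that contracting with $\bar\z^\infty$ reduces the problem to a scalar homogeneous polynomial in $\u$, after which the identity applies in its standard form. Once this reduction is done, Lemma~\ref{lem:grad-conv} converts the question into a manifestly positive sum of margins weighted by nonnegative dual variables. The boundedness argument for $S(t)\to\infty$ is by contrast essentially a one-line contradiction from $g(t)\to\infty$, so I expect the homogeneity/Euler computation to be the conceptual heart of the proof.
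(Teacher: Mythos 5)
Your proposal matches the paper's proof essentially step for step: the nonvanishing part is established via Euler's homogeneous function identity contracted against $\bar\u^\infty$ together with Lemma~\ref{lem:grad-conv} and the positive-margin hypothesis, and the divergence of $\sum_{u<t}\eta_u p(u)g(u)^{\nu-1}$ is obtained by the same boundedness contradiction against $\norm{\u[t]}\to\infty$ using eq.~\eqref{eq:last1}. The only cosmetic difference is that you explicitly phrase the Euler step as reducing to the scalar map $\u\mapsto\innerprod{\bar\z^\infty}{\P(\u)}$, whereas the paper states the vector-valued version $\nabla_{\u}\P(\u)^\top\u=\nu\P(\u)$ directly; these are the same identity.
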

\begin{proof}
First, it is straight forward to check that for any scalar valued homogeneous polynomial $f:\bR^P\to\bR$ of degree $\nu$, we have $\innerprod{\u}{\nabla_{\u}{f}(\u)}=\nu {f}(\u)$, where for $p=1,2\ldots, P$,  $\nabla_{\u}{f}(\u)[p]=\dv{{f}(\u)}{{\u}[p]}$ (this is also known as the Euler's homogeneous function theorem). Extending this to our vector valued homogeneous function $\P:\bR^P\to\bR^D$, we have that for all $\u$, the Jacobian $\nabla_{\u}\P(\u)\in\bR^{P\times D}$ satisfies $\nabla_{\u}\P(\u)^\top \u=\nu \P(\u)$. 

Moreover, we have that for the limit direction $\bar\u^\infty$, the margin of the corresponding classifier is strictly positive, \ie $\min_n y_n\innerprod{\P(\bar\u^\infty)}{\x_n}>0$. Now from Lemma~\ref{lem:grad-conv}, using that $\bar\z^\infty=\sum_{n\in S_\infty}\alpha_n y_n\x_n$ for $\alpha_n\ge0$ (and not all zero since $\bar{\z}^\infty$ is unit norm), we immediately get the following
\[\bar\u^{\infty^\top}\nabla_{\u}\P(\bar\u^\infty)\bar\z^\infty=\nu\P(\bar\u^\infty)^\top \bar\z^\infty=\nu\sum_n\alpha_n y_n\innerprod{\x_n}{\P(\u^\infty)}>0\implies\nabla_{\u}\P(\bar\u^\infty)\bar\z^\infty \neq 0.\]  

To prove the second part, we note the following
\begin{compactitem}
\item since $\bdelta[t]\to0$ in eq. \eqref{eq:last1}, $\exists t_0$ such that $\forall t>t_0$, $\norm{\bdelta[t]}\le 1$, and since all the incremental updates to gradient descent are finite, we have that $\sup_{t}\norm{\bdelta[t]}<\infty$,
\item since $p(t)=\norm{\z[t]}$ and $g(t)=\norm{\u[t]}$ are positive, we have that $b_t=\sum_{u<t}\eta_u p(u) g(u)^{\nu-1}$ is monotonic increasing. 
\end{compactitem}
Thus, if $\lim\sup_{t\to\infty} b_t=\infty$ then $\lim_{t\to\infty} b_t=\infty$. 
 On contrary, if $\lim\sup_{t\to\infty} b_t=C<\infty$, then from eq. \eqref{eq:last1}, for large $t$ we get, $\norm{\u[t]}\le\norm{\u[0]}+\norm{\nabla\P(\bar\u^\infty)\bar \z^\infty}C +\left(\sup_{t}\norm{\bdelta[t]}\right) C<\infty$ which contradicts $\norm{\u[t]}\to\infty$.
\end{proof}
From above claim, the sequence $b_t=\sum_{u<t}\eta_u p(u) g(u)^{\nu-1}$ is monotonic increasing and diverging. Thus, for $a_t=\sum_{u<t}\bdelta[u]\eta_u p(u) g(u)^{\nu-1}$, using Stolz-Cesaro theorem~(Theorem~\ref{thm:stolzcesaro}), we have  
\begin{flalign}
\nonumber\lim_{t\to\infty}\frac{a_t}{b_t}=\lim_{t\to\infty}\frac{\sum_{u<t}\bdelta[u]\eta_u p(u) g(u)^{\nu-1}}{\sum_{u<t}\eta_u p(u) g(u)^{\nu-1}}=\lim_{t\to\infty}\frac{a_{t+1}-a_{t}}{b_{t+1}-b_{t}}=\lim_{t\to\infty}\bdelta[t]=0.\\
\implies \text{for }\bdelta[t]_2\to0,\; \text{we have }\sum_{u<t}\bdelta[u]\eta_u p(u) g(u)^{\nu-1}=\bdelta[t]_2\sum_{u<t}\eta_u p(u) g(u)^{\nu-1}.
\label{eq:last2}
\end{flalign}

Substituting eq. \eqref{eq:last2} in eq. \eqref{eq:last1}, we have 
\begin{flalign}
\u[t]&\overset{(a)}=\left[\nabla_{\u}\P\left(\bar \u^\infty\right)\bar{\z}^\infty+{\bdelta[t]_3}\right]\left[\sum_{u<t}\eta_u p(u) g(u)^{\nu-1}\right]\\
\implies \frac{\u[t]}{\norm{\u[t]}}&=\frac{\nabla_{\u}\P\left(\bar \u^\infty\right)\bar{\z}^\infty+\bdelta[t]_3}{\norm{\nabla_{\u}\P\left(\bar \u^\infty\right)\bar{\z}^\infty+\bdelta[t]_3}}\overset{(b)}\to \frac{\nabla_{\u}\P\left(\bar \u^\infty\right)\bar{\z}^\infty}{\norm{\nabla_{\u}\P\left(\bar \u^\infty\right)\bar{\z}^\infty}}\\
\implies\bar \u^\infty&= \lim_{t\to\infty}\frac{\u[t]}{\norm{\u[t]}}=\frac{\nabla_{\u}\P\left(\bar \u^\infty\right)\bar{\z}^\infty}{\norm{\nabla_{\u}\P\left(\bar \u^\infty\right)\bar{\z}^\infty}}\propto\nabla_{\u}\P\left(\bar \u^\infty\right)\bar{\z}^\infty,
\label{eq:stationarity-meta}
\end{flalign}
where in $(a)$ we absorbed  the diminishing terms into  $\bdelta[t]_3=\bdelta[t]_2+\u[0]/\sum_{u<t}\eta_up(u)g(u)^{\nu-1}\to0$, $(b)$ follows since we proved in the claim above that $\nabla_{\u}\P\left(\bar \u^\infty\right)\bar{\z}^\infty\neq 0$  and hence dominates $\bdelta[t]_3$. 

We have  shown that $\bar \u^\infty=\bar{\gamma}\nabla_{\u}\P\left(\bar \u^\infty\right)\bar{\z}^\infty$ for a positive scalar $\bar{\gamma}$, which completes the proof.
\end{proof}


\section{Linear Fully Connected Networks: Proof of Theorem~\ref{thm:fcn}}\label{app-fcn}
\thmfcn*
\begin{proof} Recall that for fully connected networks of any depth $L>0$ with parameters  
${\u=[\u_l\in\bR^{D_{l-1}\times D_l}]_{l-1}^{L}}$, the equivalent  linear predictor given by $\P_{full}(\u)=\u_1\u_2\ldots\u_L$ is a homogeneous polynomial of degree $L$. 

Let $\u[t]=[\u[t]_l\in\bR^{D_{l-1}\times D_l}]_{l=1}^L$ denote the iterates of individual matrices $\u_l$ along the gradient descent path, and $\w[t]=\P_{full}(\u[t])$ denote the corresponding sequence of linear predictors. 

We first introduce the following notation. 
\begin{compactenum}
\item Let $\bar\u^\infty=\lim\limits_{t\to \infty}\frac{\u[t]}{\norm{\u[t]}}$ denote the limit direction of the parameters, with component matrices in each layer denoted as $\bar\u^\infty=[\bar\u_l^\infty]$. Specializing \eqref{eq:u} for fully connected networks, we have:
\begin{align}
\u[t]_l&=\bar{\u}_l^\infty g(t)+\bdelta[t]_{\u_l}\,g(t), \label{eq:ufcn}
\end{align}
where $g(t)=\norm{\u[t]}$ and $\bdelta[t]_{\u_l}\to 0$. 
\item For $0<l_1< l_2\le L$, denote $\u[t]_{l_1:l_2}=\u[t]_{l_1} \u[t]_{l_1+1}\ldots\u[t]_{l_2}$ and $\bar\u_{l_1:l_2}^\infty=\bar\u_{l_1}^\infty \bar\u_{l_1+1}^\infty\ldots\bar\u_{l_2}^\infty$. Using eq. \eqref{eq:ufcn}, we can check by induction on $l_2-l_1$ that $\lim\limits_{t\to\infty}\frac{\u[t]_{l_1:l_2}}{g(t)^{l_2-l_1+1}}=\bar\u^\infty_{l_1:l_2}$, and hence $\exists\bdelta[t]_{\u_{l_1:l_2}}\to0$ such that the following holds,
\begin{align}
\u[t]_{l_1:l_2}&=\bar\u_{l_1:l_2}^\infty\, g(t)^{l_2-l_1+1}+\bdelta[t]_{\u_{l_1:l_2}}\,g(t)^{l_2-l_1+1}. 
\label{eq:ulfcn}
\end{align}
\item Let $\z[t]=-\nabla_{\w}\c{L}(\w[t])$.  Again repeating eq. \eqref{eq:z} for fully connected networks, we have for some $\bdelta[t]_{\z}\to0$ and $p(t)=\norm{\z[t]}$, 
\begin{equation}
\z[t]={\bar\z}^\infty p(t)+\bdelta[t]_{\z}\,p(t).
\label{eq:fcn-z}
\end{equation}
\item From Lemma~\ref{lem:grad-conv}, we have that $\exists \{\alpha_n\}_{n\in S_\infty}$ such that $\bar{\z}^\infty=\sum_{n\in S_\infty}\alpha_n\,y_n\x_n$, where $S_\infty$ are support vectors of $\bar{\w}^\infty=\lim\limits_{t\to\infty}\frac{\w[t]}{\norm{\w[t]}}\propto\P_{full}(\bar{\u}^\infty)$.
\end{compactenum}

The proof of Theorem~\ref{thm:fcn} is fairly straight forward from using Lemma \ref{lem:grad-conv} and the intermediate results in the proof of Theorem~\ref{thm:metathm}. 

\paragraph{Showing KKT conditions for $\tilde{\w}^\infty\propto\P_{full}(\bar{\u}^\infty)$.}
Using our notation described above, we have $\bar{\u}^\infty_{1:L}=\P_{full}(\bar{\u}^\infty)$.
In the following arguments we show that a positive scaling $\tilde{\w}^\infty=\gamma \bar{\u}^\infty_{1:L}$ satisfies the following KKT conditions for the optimality of $\ell_2$ maximum margin  problem in eq. \eqref{eq:gd-fcn}:
\begin{equation}
\begin{split}
\exists \{\alpha_n\}_{n=1}^N\quad\st\quad& \forall n,\,y_n\innerprod{\x_n}{\w} \ge 1, \w = \sum_n\alpha_n\,y_n\x_{n},\\
& \forall n, \alpha_n\ge0 \tand \alpha_n=0, \forall i\notin {S}:=\{i\in[N]:y_{n}\innerprod{\x_n}{\w}=1\}.
\end{split}
\label{eq:kkt-fcn}
\end{equation}

As we saw in proof of Theorem~\ref{thm:metathm}, since $\bar{\u}^\infty_{1:L}=\P_{full}(\bar{\u}^\infty)$ has strictly positive margin, using homogeneity of $\P_{full}$, we can scale $\bar{\u}^\infty_{1:L}$ to get $\tilde{\w}^\infty=\gamma \bar{\u}^\infty_{1:L}$ with unit margin, \ie $\forall n,\,y_n\innerprod{\x_n}{\tilde{\w}^\infty} \ge 1$.
For  dual variables, we again use a positive scaling of $\alpha_n$ from Lemma~\ref{lem:grad-conv}, such that $\bar{\z}^\infty=\sum_{n\in S_\infty}\alpha_n\,y_n\x_n$. In order to prove the theorem, we need to show that $\tilde{\w}^\infty\propto \bar{\z}^\infty$ or equivalently $\bar{\u}^\infty_{1:L}\propto\bar{\z}^\infty$.

Recall that in the proof of Theorem~\ref{thm:metathm}, we showed a version of stationarity in the parameter space in eq. \eqref{eq:stationarity-meta}, repeated below.
\begin{equation}
\bar\u^\infty\propto\nabla{\u}\P(\bar{\u}^\infty)\bar\z^\infty. 
\label{eq:p-stat-fcn}
\end{equation} 

This case in particular includes $\P_{full}$ which is homogeneous with $\nu=L$. We special case the result fully connected network. In particular, for the parameters of the first layer $\u_1$, we have $\P(\u)=\u_1\u_{2:L}$, where $\u_1\in\bR^{d\times d_{1}}$ and $\u_{2:L}\in\bR^{d_1\times 1}$. 
This implies, for any $\z$, $ \nabla_{\u_1}\P(\u)\z=\z\u_{2:L}^{\top}$. Using this along with eq. \eqref{eq:p-stat-fcn}, we  get the following expression for some positive scalar $\bar{\gamma}$
\begin{equation}
\bar\u_1^\infty=\bar{\gamma}\,\nabla_{\u_1}\P(\bar\u^\infty)\bar\z^\infty=\bar{\gamma}\,\bar{\z}^\infty\bar\u_{2:L}^{\infty^\top}\implies \bar\u_{1:L}^\infty=\bar\u_1^\infty\bar\u_{2:L}^\infty=\bar{\gamma}\,\bar{\z}^\infty\cdot \norm{\bar\u_{2:L}^\infty}^2\propto \bar{\z}^\infty.
\end{equation}

Since $\bar\u_{1:L}^\infty\propto\tilde\w^\infty$, we have shown that $\tilde\w^\infty\propto \bar{\z}^\infty$, which completes our proof of Theorem~\ref{thm:fcn}. 
\end{proof}

\section{Linear Convolutional  Networks: Proof of Theorem~\ref{thm:conv-net}--\ref{thm:conv-net-l}}\label{app-conv}
Recall that $L$--layer linear convolutional networks have parameters 
$\u=[\u_l\in\bR^{D}]_{l-1}^{L}$. We first recall some complex numbers terminology and properties

\begin{compactenum}
\item Complex vectors $\hat\z\in\b{C}^D$ are represented in polar form as $\hat\z=|\hat\z|\e^{\ci\boldsymbol{\phi}_{\hat\z}}$, where $|\hat\z|\in\bR^D_+$ and $\boldsymbol{\phi}_{\hat\z}\in[0,2\pi)^D$ are the vectors with magnitudes and phases, respectively, of components $\hat\z$.
\item For $\hat\z=|\hat\z|\e^{\ci\boldsymbol{\phi}_{\hat\z}}\in\b{C}^D$, the complex conjugate vector is denoted by $\hat\z^*=|\hat\z|\e^{-\ci\boldsymbol{\phi}_{\hat\z}}$.
\item The complex inner product for $\hat{\x},\hat{\w}\in\b{C}^D$ is given by $\innerprod{\hat\x}{\hat\w}=\sum_d \hat\x{[d]}\hat\w^*[d]=\hat\x^\top \hat{\w}^*$.
\item  Let $\c{F}\in\mathbb{C}^{D\times D}$ denote the discrete Fourier transform matrix with $\c{F}[d,p]=\frac{1}{\sqrt{D}}\omega^{dp}_D$ where recall that $\omega_D=\e^{-\frac{2\pi\ci}{D}}$ is the $D^\text{th}$ complex root of unity. Thus,  for any $\z\in\bR^D$,  the representation in Fourier basis is given by $\hat{\z}=\c{F}\z$. $\c{F}$ and its complex conjugate matrix $\c{F}^*$ also satisfy: $\c{F}\c{F}^*=\c{F}^*\c{F}=I$, $\c{F}=\c{F}^\top$ and $\c{F}^*={\c{F}^*}^\top$.
 \end{compactenum}
 
Before getting into full proofs of Theorem~\ref{thm:conv-net-l}--\ref{thm:conv-net}, we also prove the two lemmas (Lemma~\ref{lem:fft-conv} and Lemma~\ref{lem:fft-gd}) that establish equivalence of dynamics of gradient descent on full dimensional convolutional networks to those on linear diagonal networks (Figure~\ref{fig:dn}), albeit with complex valued parameters. This makes the analysis of the of convolutional networks  simpler and more intuitive. 

We begin by proving Lemma~\ref{lem:fft-conv} which shows the equivalence of representation between convolutional networks and diagonal networks. 
\lemfft*
\begin{proof} 
First, we state the following properties which follow immediately from definitions:
\begin{asparaenum}
\item For $\x,\w\in\bR^D$, 
\begin{equation}
\innerprod{\x}{\w}=\x^\top\w=\x^\top\c{F}\c{F}^*\w=\hat{\x}^\top\hat{\w}^*=\innerprod{\hat\x}{\hat\w}, 
\label{eq:cn-xb}
\end{equation} 
where recall that the complex inner product is given by $\innerprod{\hat\x}{\hat\w}=\hat{\x}^\top\hat{\w}^*$. 
\item 
We next show the following property
\begin{equation} 
\c{F}(\h\star\u)=(\c{F}\h)\odot (\c{F}^*\u)=\hat{\h}\odot \hat\u^*,
\label{eq:cn-conv}
\end{equation}
where $\odot$ denotes the Hadamard product (elementwise product), \ie $(a\odot b)[d]= a[d]b[d]$. 

The above equation follows from simple manipulations of definitions: recall that  $(\c{F}\z)[d]=\frac{1}{\sqrt{D}}\sum_{p=0}^{D-1}\z{[p]}\omega_D^{pd}$ and  $\h\star\u$ defined in eq. \eqref{eq:conv} as $(\h\star\u)[d]=\frac{1}{\sqrt{D}}\sum_{k=0}^{D-1}\u{[k]}\,\h\left[(d+k)\text{ mod } D\right]$.

\begin{align}
\nonumber
\hat{\h}\odot \hat\u^*[d]&=\hat{\h}[d]\hat\u^*[d]=\frac{1}{D}\sum_{k=0}^{D-1}\sum_{k'=0}^{D-1} \u{[k]}\h{[k']}\omega_D^{(k'-k)d}\overset{(a)}=\frac{1}{D}\sum_{k=0}^{D-1}\sum_{k'=0}^{D-1} \u{[k]}\h{[k']}\omega_D^{\left((k'-k)\text{ mod D}\right)d}\\
&\overset{(b)}= \frac{1}{\sqrt{D}}\sum_{p=0}^{D-1}\left[\frac{1}{\sqrt{D}}\sum_{k=0}^{D-1} \u{[k]}\h{[(p+k)\text{ mod }D]}\right]\omega_D^{pd}=\left(\c{F}(\h\star\u)\right)[d],
\end{align}
where $(a)$ follows as $\omega_D^D=1$ and in $(b)$ we used the change of variables $p=(k'-k)\text{ mod }D$ (recall our use of modulo operator as $a\text{ mod }D=a-D\left\lfloor\frac{a}{D}\right\rfloor$).
\end{asparaenum}

Recall from eq. \eqref{eq:cn} the output of an $L$-layer convolutional network is given by \[\hat{y}(\x)=\left((((\x\star\u_1)\star\u_2)\ldots)\star\u_{L-1}\right)^\top \u_L=\innerprod{\x}{\w}.\]  
Denote $\h_{L-1}(\x)=(((\x\star\u_1)\star\u_2)\ldots)\star\u_{L-1}$. By iteratively using eq. \eqref{eq:cn-conv}, we have 
\begin{equation}
\c{F}\h_{L-1}(\x)=\c{F}\x\odot\c{F}^*\u_1\odot\c{F}^*\u_2\ldots\odot\c{F}^*\u_{L-1}.
\label{eq:fhl}
\end{equation}
Thus, on one hand using the above equation we have, 
\begin{equation}
\begin{split}
\hat{y}(\x)&=\h_{L-1}(x)^\top\u_L=\h_{L-1}(\x))^\top \c{F}\c{F}^*\u_L=\left(\c{F}\h_{L-1}(\x)\right)^\top(\c{F}^*\u_L)\\
&\overset{(a)}=\left(\c{F}(\x)\right)^\top\left(\c{F}^*\u_1\odot\c{F}^*\u_2\ldots\odot\c{F}^*\u_{L}\right)\overset{(b)}=\innerprod{\hat\x}{\c{F}\u_1\odot\c{F}\u_2\ldots\odot\c{F}\u_{L}},
\end{split}
\end{equation}
where $(a)$ follows from substituting for $\c{F}\h_{L-1}(\x)$ from eq. \eqref{eq:fhl} and noting that for any $\{\z_l\in\bR^D\}$, $(\z_1\odot\z_2\odot\ldots\z_{L-1})^\top\z_L=\z_1^\top(\z_2\odot\z_3\odot\ldots\z_{L})$, and $(b)$ uses the definition of complex inner product $\innerprod{\hat{\x}}{\hat{\w}}=\hat{\x}^\top\hat{\w}^*$. 

Now further using  eq. \eqref{eq:cn-xb} in above equation, we have 
\begin{equation}
\begin{split}
&\innerprod{\x}{\P_{conv}(\u)}=\innerprod{\hat\x}{\c{F}\P_{conv}(\u)}=\hat{y}(\x)\\
\implies& \innerprod{\hat\x}{\c{F}\P_{conv}(\u)}=\innerprod{\hat\x}{\c{F}\u_1\odot\c{F}\u_2\ldots\odot\c{F}\u_{L}}.
\end{split}
\label{eq:cn-fft-conv}
\end{equation}
Thus, for $\w=\P_{conv}(\u)$, we have shown that $\hat\w=\c{F}\P_{conv}(\u)=\hat\u_1\odot\hat\u_2\ldots\odot\hat\u_{L}=\text{diag}(\hat\u_1)\text{diag}(\hat\u_2)\ldots\text{diag}(\hat\u_{L-1})\hat\u_{L}$. 
\end{proof}
For $\hat\u=[\hat\u_l\in\b{C}^D]_{l=1}^L$, let $\P_{diag}(\hat\u)=\text{diag}(\hat\u_1)\text{diag}(\hat\u_2)\ldots\text{diag}(\hat\u_{L-1})\hat\u_{L}=\hat\u_1\odot\hat\u_2\ldots\odot\hat\u_{L}$ denote the equivalent parameterization of convolutional network in Fourier domain. 

The above lemma shows that optimizing $\c{L}_{\P_{conv}}(\u)$ in eq. \eqref{eq:lmu} is equivalent to the following minimization problem in terms of representation, 
\begin{equation}
\min_{\hat{\u}}\hat{\c{L}}_{\P_{diag}}(\hat\u):=\sum_{n=1}^N\ell(\innerprod{\hat{\x}_n}{\P_{diag}(\hat\u)},y_n)
\label{eq:lmuhat}
\end{equation}

The following lemma further shown that not only the representations of ${\P_{conv}}(\u)$ and $\P_{diag}(\hat\u)$ are equivalent, but there corresponding gradient descent updates for problems in eq. \eqref{eq:lmu} and eq. \eqref{eq:lmuhat} are also equivalent up to Fourier transformations. 
\begin{lemma}\label{lem:fft-gd} Consider the gradient descent iterates $\u[t]=[\u[t]_l]_{l=1}^L$ from eq. \eqref{eq:gd} for minimizing $\c{L}_{\P_{conv}}$ in eq. \eqref{eq:lmu} over full dimensional linear convolutional networks. For all $l$, the incremental update directions,  $\Delta\u[t]_l:={\u[t+1]_l-\u[t]_l}=-\eta_t\nabla_{\u_l}\c{L}_{\P_{conv}}(\u[t])$ satisfy the following, 
\begin{equation}
\c{F}\Delta\u[t]_l={\hat\u^{(t+1)}_l-\hat\u^{(t)}_l}=-\eta_t\nabla_{\hat{\u}_l}\hat{\c{L}}_{\P_{diag}}(\hat\u^{(t)}), 
\end{equation}
where $\hat\u^{(t)}=\left[\hat\u^{(t)}_l\right]_{l=1}^L$ are the Fourier transformations of  $\u[t]=[\u[t]_l]_{l=1}^L$, respectively. 
\end{lemma}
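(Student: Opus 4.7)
The plan is to establish the pointwise identity $\c{F}\nabla_{\u_l}\c{L}_{\P_{conv}}(\u)=\nabla_{\hat\u_l}\hat{\c{L}}_{\P_{diag}}(\hat\u)$ at every $\u$, after which the lemma follows immediately by applying $\c{F}$ to the gradient descent update $\Delta\u[t]_l=-\eta_t\nabla_{\u_l}\c{L}_{\P_{conv}}(\u[t])$ and using linearity of $\c{F}$.

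The first step is to rewrite $\c{L}_{\P_{conv}}$ in the Fourier domain. The identity $\c{F}\P_{conv}(\u)=\P_{diag}(\c{F}\u)=\hat\u_1\odot\hat\u_2\odot\cdots\odot\hat\u_L$ from the proof of Lemma~\ref{lem:fft-conv}, combined with $\c{F}^*\c{F}=I$, gives $\P_{conv}(\u)=\c{F}^*\P_{diag}(\hat\u)$. Substituting into $\c{L}_{\P_{conv}}$ and invoking eq.~\eqref{eq:cn-xb} yields $\c{L}_{\P_{conv}}(\u)=\sum_n\ell(\innerprod{\hat\x_n}{\P_{diag}(\hat\u)},y_n)=\hat{\c{L}}_{\P_{diag}}(\hat\u)$, so the two losses agree as real-valued functions of the real parameters $\u$ with $\hat\u=(\c{F}\u_1,\ldots,\c{F}\u_L)$.

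Next I would compute both sides of the claimed gradient identity directly and check that they match. Differentiating the Fourier-domain expression $\sum_n\ell(\sum_d\hat\x_n[d]\hat\u_1^*[d]\cdots\hat\u_L^*[d],y_n)$ component-wise in the Wirtinger sense, i.e. identifying $\nabla_{\hat\u_l}$ with $\partial_{\hat\u_l^*}$ as is natural for a real-valued function of complex arguments, produces $\nabla_{\hat\u_l}\hat{\c{L}}_{\P_{diag}}(\hat\u)=\sum_n\ell'(\cdot)\,\hat\x_n\odot\bigodot_{m\ne l}\hat\u_m^*$. On the real side, chain-ruling through $\P_{conv}(\u)=\c{F}^*\P_{diag}(\c{F}\u)$ in the $\u_l$-coordinates, whose Jacobian is $\c{F}$, gives $\nabla_{\u_l}\c{L}_{\P_{conv}}(\u)=\sum_n\ell'(\cdot)\,\c{F}\!\left(\hat\x_n^*\odot\bigodot_{m\ne l}\hat\u_m\right)$. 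Applying $\c{F}$ once more introduces a factor $\c{F}^2$, which is the discrete reversal (parity) operator $\z[k]\mapsto\z[-k\bmod D]$; since $\u_m$ and $\x_n$ are real, $\hat\u_m^*[k]=\hat\u_m[-k\bmod D]$ and $\hat\x_n^*[k]=\hat\x_n[-k\bmod D]$, so the reversal exactly undoes the conjugations and reproduces $\sum_n\ell'(\cdot)\,\hat\x_n\odot\bigodot_{m\ne l}\hat\u_m^*$, matching the Fourier-domain gradient.

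The main obstacle is the bookkeeping around complex gradients: the $\c{F}^2$ parity factor that arises from applying $\c{F}$ to the real-space gradient must be reconciled with the conjugation implicit in the Wirtinger definition of $\nabla_{\hat\u_l}$. This reconciliation rests entirely on the conjugate symmetry $\hat\z^*[k]=\hat\z[-k\bmod D]$ of Fourier transforms of real signals, so although the calculation looks delicate, it collapses to one well-known identity. Beyond this point the argument is an unadorned chain-rule computation, made possible by the diagonalization of circular convolution under $\c{F}$ already proved in Lemma~\ref{lem:fft-conv}.
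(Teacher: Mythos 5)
Your proof is correct and follows essentially the paper's strategy: both rewrite $\innerprod{\x_n}{\P_{conv}(\u)}=\innerprod{\hat\x_n}{\P_{diag}(\hat\u)}$ via Lemma~\ref{lem:fft-conv}, differentiate in $\u_l$, and compare with the Wirtinger-type gradient in the Fourier domain. The one place you depart is in the final step. You express the real-space gradient as $\c{F}\bigl(\hat\x_n^*\odot\bigodot_{m\ne l}\hat\u_m\bigr)$, which forces you to handle $\c{F}^2$ as the index-reversal operator and invoke conjugate symmetry $\hat\z^*[k]=\hat\z[(-k)\bmod D]$ of Fourier coefficients of real signals. The paper instead notes directly that $\hat\u_l^*=\c{F}^*\u_l$ (since $\u_l$ is real and $\c{F}^*$ is symmetric), so the real-space gradient comes out as $\c{F}^*\bigl(\hat\x_n\odot\bigodot_{m\ne l}\hat\u_m^*\bigr)$, and applying $\c{F}$ cancels immediately via $\c{F}\c{F}^*=I$ -- no reversal operator and no appeal to conjugate symmetry needed. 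The two expressions for the real-space gradient are complex conjugates of each other, and since that gradient is a real vector they are equal, so your route is valid; it is just a slightly longer road to the same identity. Your proposal is also a touch more explicit than the paper about what $\nabla_{\hat\u_l}$ means (the $\partial_{\hat\u_l^*}$ Wirtinger derivative), which is worth making precise since the paper leaves this implicit.
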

The above lemma shows that Fourier transformation of the gradient descent iterates $\u[t]=[\u[t]_l]_{l=1}^L$ for $\c{L}_{\P_{conv}}$ in eq. \eqref{eq:lmu} are equivalently obtained by gradient descent on the complex parameters $\hat{\u}$ for minimizing $\hat{\c{L}}_{\P_{diag}}$ in eq. \eqref{eq:lmuhat}
\begin{proof} 
We use the notation $\underset{l'\neq l}\odot\hat{\u}_{l'}=\hat\u_1\odot\hat\u_2\ldots \hat\u_{l-1}\odot\hat\u_{l+1}\ldots\odot\hat\u_L$ to denote Hadamard product across all parameters $\hat\u_{l'}$ with  $l'\neq l$.

For any $\u=[\u_l]_{l=1}^L$, using eq. \eqref{eq:cn-fft-conv}, we have the following for all $l$, 
\begin{equation}
\innerprod{\x}{\P_{conv}(\u)}=\innerprod{\hat\x}{\P_{diag}(\hat\u)}=\hat\x^\top\left(\hat{\u}_1^*\odot\hat{\u}_2^*\odot\ldots\hat{\u}_L^*\right)=
\hat{\u}_{l}^{*\top}\Big[\big(\underset{l'\neq l}\odot\hat{\u}^*_{l'}\big)\odot\hat\x\Big].
\end{equation}
Using the above equation we have, 
\begin{equation}
\begin{split}
&\ell(\innerprod{\x}{\P_{conv}(\u)},y_n)=\ell\left({\u}_{l}^\top \;\c{F}^*\Big[\big(\underset{l'\neq l}\odot\hat{\u}^*_{l'}\big)\odot\hat\x\Big],y_n\right)\\
\implies &\nabla_{\u_l}\ell(\innerprod{\x}{\P_{conv}(\u)},y_n)\overset{(a)}=\ell^\prime(\innerprod{\x}{\P_{conv}(\u)},y_n)\c{F}^*\left[\left(\underset{l'\neq l}\odot\hat\u^*_l\right)\odot\hat\x\right]\\
&\quad=\c{F}^*\left[\ell^\prime(\innerprod{\hat\x}{\P_{diag}(\hat\u)},y_n)\left(\underset{l'\neq l}\odot\hat\u^*_l\right)\odot\hat\x\right]
=\c{F}^*\nabla_{\hat\u_l}\ell(\innerprod{\hat\x}{\P_{diag}(\hat\u)},y_n).
\end{split}
\end{equation}
where in $(a)$ we use $\ell'(\hat{y},y)=\pdv{\ell(\hat{y},y)}{\hat{y}}$ and the remaining equalities simply follow from manipulation of derivatives. 
From above equation, we have the following:
\begin{equation}
\begin{split}
\c{F}\Delta\u[t]_l&=-\eta_t\c{F}\nabla_{\u_l}\c{L}_{\P_{conv}}(\u[t])=-\eta_t\c{F}\sum_{n=1}^N\nabla_{\u_l}\ell(\innerprod{\x_n}{\P_{conv}(\u[t])},y_n)\\
&=-\eta_t\c{F}\c{F}^*\sum_{n=1}^N\nabla_{\hat\u_l}\ell(\innerprod{\hat\x_n}{\P_{diag}(\hat{\u}^{(t)})},y_n)=-\eta_t\nabla_{\hat\u_l}\hat{\c{L}}_{\P_{diag}}(\hat{\u}^{(t)}).
\qedhere
\end{split}
\end{equation}
\end{proof}

\subsection{Proof of Theorem~\ref{thm:conv-net}--\ref{thm:conv-net-l}}
\thmcn*
\thmcnl*
For the gradient descent iterates $\u[t]=[\u[t]_l]_{l=1}^L$ from eq. \eqref{eq:gd} denote the sequence of corresponding linear predictors as $\w[t]=\P_{conv}(\u[t])$. Let $\hat\w^{(t)}=\c{F}\w[t]$ and $\hat\u^{(t)}_l=\c{F}\u[t]_l$ denote the Fourier transforms of $\w[t]$ and $\u[t]_l$, respectively, and let $\hat\u^{(t)}=\left[\hat\u^{(t)}_l\right]_{l=1}^L$. 

Summarizing the results so far, we have $\hat\w^{(t)}=\hat\u^{(t)}_1\odot\hat\u^{(t)}_2\ldots\odot\hat\u^{(t)}_L$ (from Lemma~\ref{lem:fft-conv}) and $\Delta\hat\u^{(t)}_l:=\hat\u_l^{(t+1)}-\hat\u_l^{(t)}=-\eta_t\nabla_{\hat\u_l}\hat{\c{L}}_{\P_{diag}}(\hat{\u}^{(t)})$ (from Lemma~\ref{lem:fft-gd}).

We use the following observations/notations
\begin{compactenum}
\item 
Let $\bar\u^\infty=\lim\limits_{t\to \infty}\frac{\u[t]}{\norm{\u[t]}}$. 
Denote the Fourier transform of $\bar\u^\infty=[\bar\u_l^\infty]$  as $\hat{\bar\u}^\infty=[\hat{\bar\u}_l^\infty]$. 

Taking Fourier transforms of eq. \eqref{eq:u} which are also applicable here, we have:
\begin{align}
\hat\u^{(t)}_l&=\hat{\bar{\u}}_l^\infty g(t)+\hat{\bdelta}^{(t)}_{\u_l}\,g(t), \label{eq:ucn}
\end{align}
where $g(t)=\norm{\u[t]}=\norm{\hat{\u}^{(t)}}$  and $\hat{\bdelta}^{(t)}_{\u_l}\to 0$. 

\item Denote the negative gradients with respect to $\w[t]$ as $\z[t]=-\nabla_{\w}\c{L}(\w[t])$ and let 
$\hat{\z}^{(t)}=\c{F}\z[t]$. 
 From the assumption of Theorem~\ref{thm:conv-net}-\ref{thm:conv-net-l}, $\lim\limits_{t\to\infty}\frac{\z[t]}{\norm{\z[t]}}$ exists. Let $\bar\z^\infty=\lim_{t\to\infty}\frac{\z[t]}{\norm{\z[t]}}$. Denote  $\hat{\bar\z}^\infty=\c{F}\bar\z^\infty$. We get the following by taking Fourier transform of eq. \eqref{eq:z}
\begin{equation}
\hat{\z}^{(t)}=\hat{\bar\z}^\infty p(t)+\hat{\bdelta}^{(t)}_{\z}\,p(t),
\label{eq:cn-z}
\end{equation}
where $p(t)=\norm{\z[t]}=\norm{\hat{\z}^{(t)}}$ and  $\hat{\bdelta}^{(t)}_{\z}\to0$. 
\item From Lemma~\ref{lem:grad-conv}, we have that $\exists \{\alpha_n\}_{n\in S_\infty}$ such that 
$\lim\limits_{t\to\infty}\frac{\z[t]}{\norm{\z[t]}}=\sum\limits_{n\in S_\infty}\alpha_n\,y_n\x_n$. Thus, 
\begin{equation}
\hat{\bar\z}^\infty=\sum_{n\in S_\infty}\alpha_n\,y_n\hat{\x}_n.
\label{eq:cn-z1}
\end{equation}
\end{compactenum}

\paragraph{KKT conditions for optimality} 
We want to show that a positive scaling of $\bar{\w}^\infty\propto\P_{conv}(\bar{\u}^\infty)$, denoted by $\tilde{\w}^\infty=\gamma\P_{conv}(\bar{\u}^\infty)$ is a first order stationary point of eq. \eqref{eq:gd-cn-l}, repeated below,
\begin{equation*}
\min_{\w}\; \norm{\hat{\w}}_{\nicefrac{2}{L}}\st \forall n,\, y_n\innerprod{\w}{\x_n}\ge1.
\end{equation*}

Recall the KKT conditions discussed in Section~\ref{sec:main}. The first order stationary points, or sub-stationary points, of \eqref{eq:gd-cn-l} are the set of  feasible predictors $\w$ such that $\exists \{\alpha_n\ge 0\}_{n=1}^N$ satisfying the following:  
$\forall n$, $y_n\innerprod{\x_n}{\w}>1\implies \alpha_n=0$, and
 \begin{equation}
 \sum_n\alpha_ny_n\hat{\x}_n\in \partial^{\circ} \norm{\hat{\w}}_p,
 \label{eq:label3-r}
 \end{equation} 
where $\partial^\circ$ denotes the local sub-differential (or Clarke's sub-differential) operator defined as 
$\partial^\circ f(\w)=\text{conv}\{\b{v}:\exists (\z_k)_k\st \z_k\to\w\tand \nabla f(\z_k)\to \b{v}\}.$

For $p=1$ and $\hat{\w}$ represented in polar form as $\hat{\w}=|\hat{\w}|\e^{\ci\boldsymbol{\phi}_{\hat{\w}}}\in\mathbb{C}^D$,  $\norm{\hat{\w}}_p$ is convex and the  local sub-differential is indeed the global sub-differential given by,
  \begin{equation}
\partial^\circ\norm{\hat{\w}}_1=\{\z:\forall d,\; |\z{[d]}|\le 1 \tand \hat{\w}{[d]}\neq0\implies \z{[d]}= \e^{\ci\boldsymbol{\phi}_{\hat{\w}}[d]}\}.
\label{eq:label2-r}
\end{equation} 

For $p<1$, the local sub-differential of  $\norm{\hat{\w}}_p$ is given by, \begin{equation}
\forall p<1,\quad \partial^\circ\norm{\hat{\w}}_p=\{\z:\hat{\w}{[d]}\neq0\implies \z{[d]}=p\; \e^{\ci\boldsymbol{\phi}_{\hat{\w}}[d]}\;|{\hat{\w}}[d]|^{p-1}\}.
\label{eq:label1-r}
\end{equation}

\paragraph{Showing KKT conditions for $\tilde{\w}^\infty\propto\P_{conv}(\bar{\u}^\infty)$.}
As we showed proof of Theorem~\ref{thm:metathm}, since $\P_{conv}(\bar{\u}^\infty)$ has strictly positive margin, using homogeneity of $\P_{conv}$, we can scale $\P_{conv}(\bar{\u}^\infty)$ to get $\tilde{\w}^\infty=\gamma \P_{conv}(\bar{\u}^\infty)$ with unit margin, \ie $\forall n,\,y_n\innerprod{\x_n}{\tilde{\w}^\infty} \ge 1$.
For  dual variables, we again use a positive scaling of $\alpha_n$ from Lemma~\ref{lem:grad-conv}, such that $\bar{\z}^\infty=\sum_{n\in S_\infty}\alpha_n\,y_n\x_n$.

In order to prove the theorem, we need to show that for some positive scalar $\bar{\gamma}$, $ \bar{\gamma}\hat{\bar{\z}}^\infty\in \partial^\circ\norm{\hat{\w}}_{\nicefrac{2}{L}}$, \ie satisfies the conditions in eq. \eqref{eq:label2-r} and \eqref{eq:label1-r}, for $L=2$ and $L>2$, respectively.  

We start from the stationarity condition in the parameter space in eq. \eqref{eq:stationarity-meta} of Theorem~\ref{thm:metathm}. For some positive scalar $\bar{\gamma}$, we have
\begin{equation}
\bar\u^\infty=\bar{\gamma}\nabla{\u}\P_{conv}(\bar{\u}^\infty)\bar\z^\infty. 
\label{eq:c-p-stat}
\end{equation} 
We will now special case the above equation for fully width convolutional networks. 

From Lemma~\ref{lem:fft-conv}, we have that for all $\u=[\u_{l}\in\bR^D]$, we have $\P_{conv}(\u)=\c{F}^*\P_{diag}(\c{F}\u)$ where $\c{F}$ and $\c{F}^*$ denote discrete Fourier matrix and its inverse in appropriate dimensions. 
Let $\{e_d\}_{d=1}^D$ denote the standard basis in $\bR^D$. We first note that for all $l=1,2,\ldots,L$ and for all $d=1,2,\ldots,D$, the following holds
\begin{align}
\P_{conv}(\u)[d]&=e_d^\top\c{F}^*\P_{diag}(\c{F}\u)=
e_d^\top\c{F}^*\left(\odot_{l'=1}^{L-1}\hat{\u}_{l'}\right)\\
&=e_d^\top\c{F}^*\left(\prod_{l'\neq l}\diag(\hat{\u}_{l'})\right)\c{F}{\u}_l=\innerprod{\u_l}{\c{F}^*\left(\prod_{l'\neq l}\diag(\hat{\u}^*_{l'})\right)\c{F}e_d}.\\
\implies& \nabla_{\u_l}\P_{conv}(\u)[:,d]=\c{F}^*\left(\prod_{l'\neq l}\diag(\hat{\u}^*_{l'})\right)\c{F}e_d. 
\end{align}
This implies, for $l=1,2,\ldots,L$ and any $\z\in \bR^D$, we have 
\begin{align}
\nabla_{\u_{l}}\P_{conv}(\u)\z=\sum_{d}\nabla_{\u_l}\P_{conv}(\u)[:,d]{\z}[d]=\c{F}^*\left(\prod_{l'\neq l}\diag(\hat{\u}^*_{l'})\right)\c{F}\z.
\end{align}

Substituting the above equation in eq. \eqref{eq:c-p-stat}, we have,
\begin{equation}
\begin{split}
\hat{\bar{\u}}_l^\infty&=\c{F}\bar{\u}_l^\infty=\bar{\gamma}\c{F}\nabla_{\u_l}\P_{conv}(\bar{\u}^\infty)\bar{\z}^\infty=\bar{\gamma}\left(\odot_{l'\ne l}\hat{\bar{\u}}_{l'}^{\infty*}\right)\odot \hat{\bar\z}^\infty,
\end{split}
\label{eq:cn-slackness-p}
\end{equation}
where $\hat{\bar\u}^{\infty*}_{l'}$ denotes the complex conjugate of $\hat{\bar\u}^{\infty}_{l'}$.

Let $\hat{\bar\w}^\infty=\P_{diag}(\hat{\bar{\u}}^\infty)$. The above equation, further implies, for all $l$
\begin{equation}
|\hat{\bar{\u}}_l^{\infty}|^2=\hat{\bar{\u}}_l^{\infty*}\odot\hat{\bar{\u}}_l^\infty=\bar{\gamma}\,\hat{\bar\w}^{\infty*}\odot \hat{\bar\z}^\infty=\bar{\gamma}|\hat{\bar\w}^\infty|\odot|\hat{\bar\z}^\infty|\e^{\ci(\phi_{\hat{\bar\z}^\infty}-\phi_{\hat{\bar\w}^\infty})}
\label{eq:cn_slack_pnorm}
\end{equation}

In eq. \eqref{eq:cn_slack_pnorm}, since the LHS is a real number, we have that for all $d$ such that $|\hat{\bar\w}^\infty[d]|>0$
\begin{equation}
\e^{\ci\phi_{\hat{\bar\z}^\infty[d]}}=\e^{\ci\phi_{\hat{\bar\w}^\infty [d]}}. 
\label{eq:cn-slackness-phase}
\end{equation}

Also, by multiplying the LHS of eq. \eqref{eq:cn_slack_pnorm} across all $l$ and taking $L$th root over positive scalars, we have for $d=0,1,\ldots,D-1$,
\begin{equation}
\left|\hat{\bar{\w}}^{\infty}[d]\right|^{\nicefrac{2}{L}}=\bar{\gamma}\left|\hat{\bar\w}^\infty[d]\right|\;|\hat{\bar\z}^\infty[d]|,
\label{eq:cn-slackness-mag}
\end{equation}

Finally, let $\gamma$ be a positive scaling of $\bar\w^\infty$ such that $\tilde{\w}^\infty=\gamma\bar\w^\infty$ has unit margin. Let $\hat{\tilde\w}^\infty=\c{F}\tilde{\w}^\infty=\gamma \hat{\bar\w}^\infty$. Since $\bar{\gamma}$ is arbitrary positive scalar, redefining as $\bar{\gamma}\gets\frac{2}{L}\gamma^{\nicefrac{2}{L}-1}\bar{\gamma}$, we have from eq. \eqref{eq:cn-slackness-phase}-\eqref{eq:cn-slackness-mag},
\begin{equation}
\forall d\st \Big|\hat{\tilde{\w}}^{\infty}[d]\Big|\neq0,\quad 
\bar{\gamma}\;\hat{\bar\z}[d]=\e^{\ci\phi_{\hat{\bar\w}}[d]} \left|\hat{\tilde{\w}}^{\infty}[d]\right|^{\nicefrac{2}{L}-1}
\label{eq:cn-slackness-all}
\end{equation}

\subsubsection{Case of $L>2$ or $p=\nicefrac{2}{L}<1$}
For $p=\nicefrac{2}{L}<1$, since $\hat{\bar\z}^\infty=\sum_{n\in S_\infty}\alpha_ny_n\hat\x_n$,  eq. \eqref{eq:cn-slackness-all} is indeed the first order stationarity condition for eq. \eqref{eq:gd-cn-l} as described in eq. \eqref{eq:label3} and \eqref{eq:label1}. 
\subsubsection{Case of $L=2$ or $p=\nicefrac{2}{L}=1$}

For the case of $p=1$, in addition to eq. \eqref{eq:cn-slackness-all}, we need to show that $\bar{\gamma}|\hat{\bar{\z}}^\infty|\le 1$. From  eq. \eqref{eq:cn-slackness-all}, for $L=2$ we have $\Big|\hat{\tilde{\w}}^{\infty}[d]\Big|\neq0\implies \bar{\gamma} |\hat{\bar{\z}}^\infty[d]|=1$. 

We need to further show that $\forall d\st \Big|\hat{\tilde{\w}}^{\infty}[d]\Big|\propto \Big|\hat{\bar{\w}}^{\infty}[d]\Big|=0$, $\bar{\gamma}|\hat{\bar{\z}}^\infty[d]|\le 1$. 
\subsubsection*{Showing  $\forall d, \Big|\hat{\bar{\w}}^{\infty}[d]\Big|=0\implies\bar{\gamma}|\hat{\bar{\z}}^\infty[d]|\le 1$}
Using Lemma~\ref{lem:fft-gd} for for the special case of $2$--layer linear convolutional network, for $\forall d$,
\begin{equation}
\begin{split}
\Delta\hat{\u}_1^{(t)}[d]&=\eta_t\hat{\z}^{(t)}[d]\,{{\hat{\u}^{(t)*}_{2}}}[d],\\
\Delta\hat{\u}_2^{(t)}[d]&=\eta_t\hat{\z}^{(t)}[d]\,{{\hat{\u}^{(t)*}_{1}}}[d].
\end{split}
\label{eq:cn-grad-2}
\end{equation}

\remove{
\paragraph{Simple case: real valued diagonal networks} For simplicity, we first prove this part of the result for real valued diagonal networks. The proof for complex variables is analogous. The corresponding update equations in eq. \eqref{eq:cn-grad-2} for real valued diagonal networks are given below:
\begin{equation}
\begin{split}
\Delta{\u}_1^{(t)}[d]&=\eta_t{\z}^{(t)}[d]\,{{{\u}^{(t)}_{2}}}[d],\\
\Delta{\u}_2^{(t)}[d]&=\eta_t{\z}^{(t)}[d]\,{{{\u}^{(t)}_{1}}}[d].
\end{split}
\label{eq:diag-grad-2}
\end{equation}
Now consider the dynamics of $\b{u}_d^{(t)}:=\text{sign}(\bar\u_1^\infty){\u}_1^{(t)}[d]+\text{sign}(\bar\u_2^\infty){\u}_2^{(t)}[d]$.

Since for $l=1,2$, $\u[t]_l/g(t)\to\bar{\u}_l^\infty$, we have the following:
\begin{equation}
\begin{split}
\lim_{t\to\infty}\frac{\b{u}_d^{(t)}}{g(t)}&=\text{sign}(\bar\u_1^\infty){\bar\u}_1^\infty[d]+\text{sign}(\bar\u_2^\infty){\bar\u}_2^{\infty}[d]=|{\bar\u}_1^\infty[d]|+|{\bar\u}_2^{\infty}[d]|
\overset{(a)}=2\sqrt{|\bar\w^\infty[d]|}
\end{split}
\label{eq:cn-uddiag}
\end{equation}
where $(a)$ follows from eq. \eqref{eq:cn_slack_pnorm} which implies that for all $l$, $|\bar\u_l^\infty[d]|$ have the same magnitude, and hence  $|\bar\u_l^\infty[d]|=|\bar\w^\infty[d]|^{1/L}$. 

Now looking at the dynamics of $\b{u}_d$, using eq. \eqref{eq:diag-grad-2} we have that 
\begin{equation}
\begin{split}
\b{u}_d^{(t+1)}&=\b{u}_d^{(t)}+\text{sign}(\bar\u_1^\infty)\eta_t\z[t][d]{\u}_2^{(t)}[d]+\text{sign}(\bar\u_2^\infty)\eta_t\z[t][d]{\u}_1^{(t)}[d]\\
&=\left(1+\text{sign}(\bar\u_1^\infty)\text{sign}(\bar\u_2^\infty)\z[t][d]\right)\b{u}_d^{(t)}\overset{(a)}=
(1+\text{sign}(\bar\z^\infty)\z[t][d])\b{u}_d^{(t)}\\
&\overset{(b)}=\left(1+\eta_t\left(|\bar{\z}^\infty[d]|+\bdelta[t][d]\right)p(t)\right)\b{u}_d^{(t)}
\end{split}
\end{equation}
where $(a)$ uses $\text{sign}(\bar\z^\infty)=\text{sign}(\bar\w^\infty)=\text{sign}(\bar\u_1^\infty)\text{sign}(\bar\u_2^\infty)$  from eq. \eqref{eq:cn-slackness-phase},  and $(b)$ follows from using $\z[t]=\bar{\z}^\infty p(t)+\bdelta[t]_zp(t)$ for some $\bdelta[t]_z\to0$ (from convergence in direction of $\z[t]$).

We prove our theorem by looking the following new quantity: For any $d,d'$, denote $\kappa_{d,d'}^{(t)}=\left|\frac{\b{u}_d^{(t)}}{\b{u}_{d'}^{(t)}}\right|$. 
 In the remainder of the proof, we will show that  $|\bar{\z}^\infty[d]|>|\bar{\z}^\infty[d']|\implies \kappa_{d,d'}^{(t)}\to\infty$. 
Along with eq. \eqref{eq:cn-ud}, this would imply  that  $\lim\limits_{t\to\infty}\kappa_{d,d'}^{(t)}=\sqrt{\frac{|\bar\w^\infty[d]|}{|\bar\w^\infty[d']|}}=\infty$. 
Hence, for any $d,d'$ with $\bar\w^\infty[d]=0$ and $\bar\w^\infty[d']\neq0$, we have $\bar{\gamma}|\bar\z^\infty[d]|\le\bar{\gamma}|\bar\z^\infty[d']|=1$ (last equality follows from eq.\eqref{eq:cn-slackness-mag}). 

\textit{Showing $|\bar{\z}^\infty[d]|>|\bar{\z}^\infty[d']|\implies \kappa_{d,d'}^{(t)}\to\infty$:} For any $2\epsilon>0$, let $|\bar{\z}^\infty[d]|-|\bar{\z}^\infty[d']|=2\epsilon>0$. We note that the since the loss $\c{L}(\w)$ is minimized, norm of the gradient $p(t)=\norm{\z[t]}\to0$. Hence, for any finite step size sequence $\{\eta_t\}$ and hence for large enough $t_1$, $\forall t\ge t_1$ and $\forall d$, $1+\eta_t\left(\left|\bar{\z}^\infty[d]\right|p(t)+\bdelta[t][d]p(t)\right)\in[0.5,1.5]$ and the following inequalities hold,
\begin{align}
\kappa_{d,d'}^{(t+1)}&=\left|\frac{\b{u}_d^{(t+1)}}{\b{u}_{d'}^{(t+1)}}\right|=\left|\frac{\left(1+\eta_t\left(|\bar{\z}^\infty[d]|+\bdelta[t][d]\right)p(t)\right)}{\left(1+\eta_t\left(|\bar{\z}^\infty[d']|+\bdelta[t][d']\right)p(t)\right)} \frac{\b{u}_d^{(t)}}{\b{u}_{d'}^{(t)}}\right|\\
&\overset{(a)}=\frac{\left(1+\eta_t\left(|\bar{\z}^\infty[d]|+\bdelta[t][d]\right)p(t)\right)}{\left(1+\eta_t\left(|\bar{\z}^\infty[d']|+\bdelta[t][d']\right)p(t)\right)} \kappa_{d,d'}^{(t)}\\
&\overset{(b)}\ge \left(1+\eta_t\left(|\bar{\z}^\infty[d]|-|\bdelta[t][d]|\right)p(t)\right)\left(1-\eta_t\left(|\bar{\z}^\infty[d']|+|\bdelta[t][d']|\right)p(t)\right)\kappa_{d,d'}^{(t)}\\
&\overset{(c)}\ge \left(1+\eta_t\left(2\epsilon+\delta(t)\right)p(t)\right)\kappa_{d,d'}^{(t)},
\end{align}
where in $(a)$ follows since  $1+\eta_t\left(|\bar{\z}^\infty[d]|p(t)+\bdelta[t][d]p(t)\right)>0.5$,  $(b)$ follows from using $\nicefrac{1}{(1+x)}\ge (1-x)$ for $x< 1$, and finally in $(c)$, we absorbed all $o(p(t))$ terms as $\delta(t)p(t)$ for $\delta(t)\to0$ and used $|\bar{\z}^\infty[d]|-|\bar{\z}^\infty[d']|=2\epsilon>0$. 

Since $\delta(t)\to0$, for large enough $t_2$ and $t\ge t_2$, we have $\delta(t)<\epsilon$. Thus, for all $t\ge \max\{t_1,t_2\}$, 
\begin{equation}
\kappa_{d,d'}^{(t+1)}\ge \left(1+\eta_t\epsilon p(t)\right)\kappa_{d,d'}^{(t)}.
\label{eq:kt}
\end{equation}

Further,  from the conditions of the theorem, we have the following for $l=1,2$:
\begin{asparaenum}
\item For almost all initializations, $|\u[0]_l[d]|>0$ for all $d$, and further, for step sizes $\{\eta_t\}$ smaller than the local Lipschitz constant, $\forall t^\prime<\infty$ and $\forall d$, we also have $|\u[t^\prime]_l[d]|>0$. 
\item Since $\u[t]_l/g(t)\to\bar{\u}_l^\infty$, $\exists t_3$ such that $\forall t\ge t_3$ and $\forall d$, $\text{sign}(\u[t]_l[d])=\text{sign}(\bar{\u}_l^\infty [d])$. 
\end{asparaenum}

Thus, for  $t_0=\max\{t_1,t_2,t_3\}$, using the above observations, we have that $\kappa_{d,d'}^{(t_0)}=\frac{\text{sign}(\bar\u_1^\infty){\u}_1^{(t_0)}[d]+\text{sign}(\bar\u_2^\infty){\u}_2^{(t_0)}[d]}{\text{sign}(\bar\u_1^\infty){\u}_1^{(t_0)}[d']+\text{sign}(\bar\u_2^\infty){\u}_2^{(t_0)}[d']}=\frac{|{\u}_1^{(t_0)}[d]|+|{\u}_2^{(t_0)}[d]|}{|{\u}_1^{(t_0)}[d']|+|{\u}_2^{(t_0)}[d']|}>0$. 
Now, using eq. \eqref{eq:kt}, for all $t\ge t_0$, 
\begin{align}
\kappa_{d,d'}^{(t+1)}\ge 
(1+\eta_t\epsilon p(t))\kappa_{d,d'}^{(t)}=\left(\prod_{u=t_0}^t(1+\eta_u\epsilon p(u))\right)\kappa_{d,d'}^{(t_0)} \tand \kappa_{d,d'}^{(t_0)}>0.
\label{eq:prodk}
\end{align}

Finally, we show the following claim:
\begin{claim} For any finite $t_0$, finite step-sizes $\{\eta_t\}$, and any $\epsilon>0$, we have $\prod_{u=t_0}^t(1+\eta_u\epsilon p(u))\to\infty$.
\end{claim}
\begin{proof} Let $\mu=\max_{d} |\bar\z^\infty[d]|+\max_{t>t_0}|\bdelta[t][d]|<\infty$. 
From the definition, we have that  \[|\b{u}^{(t+1)}_d|\le(1+\mu\eta_t p(t))|\b{u}^{(t)}_d|\le |\b{u}^{(t_0)}_d|\prod_{u=t_0}^t(1+\mu \eta_up(u))\le |\b{u}^{(t_0)}_d|\exp(\sum_{u=t_0}^t\mu\eta_u p(u)).\] 
Moreover, we have $\b{u}^{(t)}_d\to\infty$ for at least one $d$, and for any finite step sizes and  finite $t_0$, $|\b{u}^{(t_0)}_d|<\infty$. This then implies that for some  $\mu<\infty$, $\exp(\sum_{u=t_0}^t\mu\eta_u p(u))\to\infty\implies \sum_{u=t_0}^t \eta_up(u)\to\infty$. 
Thus, for any $\epsilon>0$, we also have $\prod_{u=t_0}^t(1+\epsilon \eta_up(u))\ge \epsilon\sum_{u=t_0}^t \eta_u p(u)\to\infty$.
\end{proof}


From eq. \eqref{eq:prodk} and above claim, we conclude that for all $d,d'$, if $|\bar{\z}^\infty[d]|>|\bar{\z}^\infty[d']|$, then $\kappa_{d,d'}^{(t)}\to\infty$. 
This completes the proof real valued diagonal networks. 
The complex valued networks (convolutions) can be handled in an analogous way as below.

\subsection{Complex valued diagonal networks (convolutional networks)}}
Recall: for $l=1,2$, $\frac{\hat{\u}_l^{(t)}}{g(t)}\to\hat{\bar{\u}}_l^\infty$, $\frac{\hat{\z}^{(t)}}{p(t)}\to\hat{\bar{\z}}^\infty$, $\hat{\w}^{(t)}=\hat{\u}_1^{(t)}\odot \hat{\u}_2^{(t)}$ and $\frac{\hat{\w}^{(t)}}{g(t)^2}\to\hat{\bar\w}^\infty=\hat{\bar\u}_1^{\infty}\odot \hat{\bar\u}_2^{\infty}$. 

Further, from  eq. \eqref{eq:cn_slack_pnorm}, we have  $\forall d$, $|\hat{\bar\u}_1^\infty[d]|^2=|\hat{\bar\u}_2^\infty[d]|^2$, and hence  
\begin{equation}
|\hat{\bar\u}_1^\infty[d]|=|\hat{\bar\u}_2^\infty[d]|=\sqrt{|\hat{\bar\w}^\infty[d]|}.
\label{eq:cnt1}
\end{equation}

From the convergence of complex numbers, we have the following:
\begin{compactenum}
\item $\forall d$ such that $|\hat{\bar\z}^\infty[d]|\neq0$, we have
\begin{equation}
\frac{|\hat{\z}^{(t)}[d]|}{p(t)}\to|\hat{\bar\z}^\infty[d]| \text{ and } \e^{\ci\phi_{\hat{\z}^{(t)}[d]}}\to\e^{\ci\phi_{\hat{\bar\z}^\infty[d]}}.
\end{equation}
\item $\forall d$ such that $|\hat{\bar\w}^\infty[d]|\neq0$, we have $|\hat{\bar\u}_1^\infty[d]|,|\hat{\bar\u}_2^\infty[d]|\neq 0$, and the following holds
\begin{equation}
\begin{split}
\text{for }l=1,2,\quad&\frac{|\hat{\u}^{(t)}_l[d]|}{g(t)}\to|\hat{\bar\u}_l^\infty[d]| \;\text{ and }\; \e^{\ci\phi_{\hat{\u}^{(t)}_l[d]}}\to \e^{\ci\phi_{\hat{\bar\u}_l^\infty[d]}}\\
&\frac{|\hat{\w}^{(t)}[d]|}{g(t)^2}\to|\hat{\bar\w}^\infty[d]| \;\text{ and }\; \e^{\ci\phi_{\hat{\w}^{(t)}[d]}}\to \e^{\ci\phi_{\hat{\bar\w}^\infty[d]}}=\e^{\ci\phi_{\hat{\bar\u}_1^{\infty}[d]}}\cdot\e^{\ci\phi_{\hat{\bar\u}_2^{\infty}[d]}}\\
&\bar{\gamma}|\hat{\bar\z}^\infty[d]|=1\;\text{ and }\;\e^{\ci\phi_{\hat{\bar\z}^\infty[d]}}=\e^{\ci\phi_{\hat{\bar\w}^\infty[d]}},
\end{split}
\label{eq:cnt2}
\end{equation}
where the last equation follows from eq. \eqref{eq:cn-slackness-phase}.
\item $\forall d$ such that $|\hat{\bar\w}^\infty[d]|=0$, from eq. \eqref{eq:cnt1}, we have $|\hat{\bar\u}_1^\infty[d]|=|\hat{\bar\u}_2^\infty[d]|=0$. 
\end{compactenum}

In the remainder of the proof, we only consider $d$ with $|\hat{\bar\z}^\infty[d]|\neq0$. 

Consider $\b{u}_d^{(t)}$ defined below,
\begin{equation}
\b{u}_d^{(t)}:=\hat{\u}_1^{(t)}[d]\cdot\e^{-\ci\phi_{\hat{\bar\z}^\infty[d]}}+\hat{\u}_2^{(t)*}[d].
\label{eq:cn-ud}
\end{equation}

Since for $l=1,2$, $\u[t]_l/g(t)\to\bar{\u}_l^\infty$, we have the following:
\begin{align}
\nonumber \lim_{t\to\infty}\frac{\b{u}_d^{(t)}}{g(t)}&=\hat{\bar\u}_1^\infty[d]\cdot \e^{-\ci\phi_{\hat{\bar\z}^\infty[d]}}+\hat{\bar\u}_2^{\infty*}[d]
\overset{(a)}=\left\{\begin{array}{ll}
0&\text{if }|\hat{\bar\w}^\infty[d]|=0\\
\e^{-\ci\phi_{\hat{\bar\u}_2^\infty[d]}}\left[|\hat{\bar\u}_1^\infty[d]|+|\hat{\bar\u}_2^\infty[d]|\right]&\text{if }|\hat{\bar\w}^\infty[d]|>0
\end{array}\right.\\
&\overset{(b)}=\left\{\begin{array}{ll}
0&\text{if }|\hat{\bar\w}^\infty[d]|=0\\
2\e^{-\ci\phi_{\hat{\bar\u}_2^\infty[d]}}\sqrt{|\hat{\bar\w}^\infty[d]|}&\text{if }|\hat{\bar\w}^\infty[d]|>0
\end{array}\right.,
\label{eq:cn-udlim}
\end{align}
where $(a)$ follows from using $\e^{\ci\phi_{\hat{\bar\z}^\infty[d]}}=\e^{\ci\phi_{\hat{\bar\w}^\infty[d]}}=\e^{\ci\phi_{\hat{\bar\u}_1^\infty[d]}}\cdot \e^{\ci\phi_{\hat{\bar\u}_2^\infty[d]}}$ whenever $\bar\w^\infty[d]\neq0$ (from eq. \eqref{eq:cnt2}), and $(b)$ follows from  eq. \eqref{eq:cnt1}. 

\begin{asparaenum}[Step 1.]
\item \textit{Dynamics of $\b{u}_d^{(t)}$: }
Now looking at the dynamics of $\b{u}_d$, using eq. \eqref{eq:cn-grad-2} we have that 
\begin{equation*}
\begin{split}
\b{u}_d^{(t+1)}&=\b{u}_d^{(t)}+\e^{-\ci\phi_{\hat{\bar\z}^\infty[d]}}\cdot\eta_t\hat{\z}^{(t)}[d]\,{\hat\u}_2^{(t)*}[d]+\eta_t\hat{\z}^{(t)*}[d]\,{\hat\u}_1^{(t)}[d]\\
&=\b{u}_d^{(t)}+\eta_t|\hat{\z}^{(t)}[d]|\left[\e^{\ci\left(\phi_{\hat{\z}^{(t)}[d]}-\phi_{\hat{\bar\z}^\infty[d]}\right)}
{\hat\u}_2^{(t)*}[d]+\e^{-\ci\left(\phi_{\hat{\z}^{(t)}[d]}-\phi_{\hat{\bar\z}^\infty[d]}\right)}\cdot{\hat\u}_1^{(t)}[d]\cdot\e^{-\ci\phi_{\hat{\bar\z}^\infty[d]}}\right]
\end{split}
\end{equation*}

Additionally, since $\e^{\ci\phi_{\hat{\z}^{(t)}[d]}}\to\e^{\ci\phi_{\hat{\bar\z}^\infty[d]}}$, we can write $\e^{\pm\ci\left(\phi_{\hat{\z}^{(t)}[d]}-\phi_{\hat{\bar\z}^\infty[d]}\right)}=1+\bdelta^{(t)}_{1,d}\pm\ci\bdelta^{(t)}_{2,d}$ where $\bdelta^{(t)}_{1,d},\bdelta^{(t)}_{2,d}\to0$ are real scalars. Substituting in above equation and rearranging the terms, we have
\begin{align}
\nonumber \b{u}_d^{(t+1)}&=\left[1+\eta_t|\hat{\z}^{(t)}[d]|(1+\bdelta^{(t)}_{1,d})\right]\b{u}_d^{(t)}+\ci\bdelta^{(t)}_{2,d}\eta_t|\hat{\z}^{(t)}[d]|\left[
{\hat\u}_2^{(t)*}[d]-{\hat\u}_1^{(t)}[d]\cdot\e^{-\ci\phi_{\hat{\bar\z}^\infty[d]}}\right]\\
&\overset{(a)}{:=}\left[1+\eta_t|\hat{\z}^{(t)}[d]|(1+\bdelta^{(t)}_{1,d})\right]\b{u}_d^{(t)}+\eta_t|\hat{\z}^{(t)}[d]|\boldsymbol{\tau}_d^{(t)},
\label{eq:cn-ud+}
\end{align}
where in $(a)$ we define $\boldsymbol{\tau}_d^{(t)}=\ci\bdelta^{(t)}_{2,d}\left[
{\hat\u}_2^{(t)*}[d]-{\hat\u}_1^{(t)}[d]\cdot\e^{-\ci\phi_{\hat{\bar\z}^\infty[d]}}\right]$.

The following intermediate lemma is proved in Appendix~\ref{app:cnlem}.
\begin{restatable}{lemma}{lemcnlemma} \label{lem:cn-lemma}
Consider $\boldsymbol{\tau}_d^{(t)}$ in eq. \eqref{eq:cn-ud+}. For all $d$ such that $\hat{\bar\z}^\infty[d]\neq 0$, $\b{u}_d^{(t)}\to\infty$ and $\frac{\boldsymbol{\tau}_d^{(t)}}{\b{u}_d^{(t)}}\to0$.
\end{restatable}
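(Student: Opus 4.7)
I would split the argument according to whether $|\hat{\bar{\w}}^\infty[d]|>0$ or $|\hat{\bar{\w}}^\infty[d]|=0$, in both cases under the standing hypothesis $|\hat{\bar{\z}}^\infty[d]|\neq 0$. The easy case follows by reading off existing limits; the hard case requires a genuine dynamical argument.

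\textbf{Case A ($|\hat{\bar{\w}}^\infty[d]|>0$).} Here both conclusions of the lemma can be read off directly. Since $g(t)=\|\u[t]\|\to\infty$ (the exponential loss on separable data is attained only in the limit $\|\w[t]\|\to\infty$, and the diagonal-network Fourier representation transfers this divergence to $g(t)$), and \eqref{eq:cn-udlim} gives $\b{u}_d^{(t)}/g(t)\to 2\e^{-\ci\phi_{\hat{\bar{\u}}_2^\infty[d]}}\sqrt{|\hat{\bar{\w}}^\infty[d]|}\neq 0$, I obtain $|\b{u}_d^{(t)}|=\Theta(g(t))\to\infty$. For the ratio, the triangle inequality gives $|\boldsymbol{\tau}_d^{(t)}|\le|\bdelta^{(t)}_{2,d}|(|\hat{\u}_1^{(t)}[d]|+|\hat{\u}_2^{(t)}[d]|)$, and since $|\hat{\u}_l^{(t)}[d]|=O(g(t))$ while $|\bdelta^{(t)}_{2,d}|\to 0$, this yields $|\boldsymbol{\tau}_d^{(t)}|=o(g(t))=o(|\b{u}_d^{(t)}|)$.

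\textbf{Case B ($|\hat{\bar{\w}}^\infty[d]|=0$).} Here both $|\hat{\u}_l^{(t)}[d]|/g(t)\to 0$, so divergence of $|\b{u}_d^{(t)}|$ cannot be read off from the global scaling. My strategy combines three ingredients. First, I would establish $\sum_u \eta_u|\hat{\z}^{(u)}[d]|=\infty$: since $|\hat{\z}^{(u)}[d]|/p(u)\to|\hat{\bar{\z}}^\infty[d]|>0$, this reduces to $\sum_u \eta_u p(u)=\infty$, which in turn follows (as in the Claim preceding the lemma) from the fact that at least one coordinate $d^*$ with $|\hat{\bar{\w}}^\infty[d^*]|>0$ satisfies $|\b{u}_{d^*}^{(t)}|\to\infty$, handled by Case A. Second, I would use the per-coordinate conservation identity obtained by squaring and subtracting the two updates in \eqref{eq:cn-grad-2},
\begin{equation*}
|\hat{\u}_1^{(t+1)}[d]|^2-|\hat{\u}_2^{(t+1)}[d]|^2=(1-\eta_t^2|\hat{\z}^{(t)}[d]|^2)\bigl(|\hat{\u}_1^{(t)}[d]|^2-|\hat{\u}_2^{(t)}[d]|^2\bigr),
\end{equation*}
showing that the gap $\bigl||\hat{\u}_1^{(t)}[d]|-|\hat{\u}_2^{(t)}[d]|\bigr|$ stays bounded, so the two magnitudes are comparable in the long run. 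Third, I would invoke the phase-convergence assumption ($\e^{\ci\phi_{\hat{\w}^{(t)}[d]}}\to\e^{\ci\phi_{\hat{\bar{\w}}^\infty[d]}}$ even when $|\hat{\bar{\w}}^\infty[d]|=0$) to control the relative phase $\phi_{\hat{\u}_1^{(t)}[d]}+\phi_{\hat{\u}_2^{(t)}[d]}-\phi_{\hat{\bar{\z}}^\infty[d]}$ between the two summands defining $\b{u}_d^{(t)}$, preventing asymptotic cancellation. Once $|\b{u}_d^{(t)}|$ is known to be comparable to $|\hat{\u}_1^{(t)}[d]|+|\hat{\u}_2^{(t)}[d]|$, the recursion \eqref{eq:cn-ud+} reduces to the scalar form $\b{u}_d^{(t+1)}=(1+\eta_t|\hat{\z}^{(t)}[d]|(1+o(1)))\b{u}_d^{(t)}$ to leading order, so $|\b{u}_d^{(t)}|\to\infty$ via the divergence of $\sum_u \eta_u|\hat{\z}^{(u)}[d]|$, while $|\boldsymbol{\tau}_d^{(t)}|/|\b{u}_d^{(t)}|$ is $O(|\bdelta^{(t)}_{2,d}|)\to 0$.

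\textbf{Main obstacle.} The hardest step is ruling out asymptotic cancellation in $\b{u}_d^{(t)}$ in Case B, where both summands shrink relative to $g(t)$ and the clean alignment $\phi_{\hat{\bar{\u}}_1^\infty[d]}+\phi_{\hat{\bar{\u}}_2^\infty[d]}=\phi_{\hat{\bar{\z}}^\infty[d]}$ that was automatic in Case A (via \eqref{eq:cn-slackness-phase}) is not available. This is precisely where the coordinate-wise phase assumption is needed, and turning it into a quantitative lower bound $|\b{u}_d^{(t)}|\gtrsim|\hat{\u}_1^{(t)}[d]|+|\hat{\u}_2^{(t)}[d]|$ asymptotically is the most technical piece; I expect it to require tracking $|\b{u}_d^{(t)}|^2$ together with its ``anti-sum'' counterpart $|\hat{\u}_1^{(t)}[d]\e^{-\ci\phi_{\hat{\bar{\z}}^\infty[d]}}-\hat{\u}_2^{(t)*}[d]|^2$, whose coupled dynamics feed through the same factors as \eqref{eq:cn-ud+} but with the phase error $\bdelta^{(t)}_{2,d}$ driving the cross-term.
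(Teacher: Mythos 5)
Your Case A coincides with the paper's (the paper reads off the limits of the two factors in the exact ratio formula, eq.~\eqref{eq:tdud}; you read off equivalent $\Theta(g(t))$ and $o(g(t))$ bounds). Your Case B shares the conservation identity with the paper's Step~1, and $\sum_u\eta_up(u)=\infty$ is also used by the paper via the Claim in the surrounding theorem proof, so those two ingredients are sound. Where you part ways with the paper is precisely the step you flag as the ``main obstacle,'' and there the plan as written is incomplete: the sentence ``once $|\b{u}_d^{(t)}|$ is known to be comparable to $|\hat{\u}^{(t)}_1[d]|+|\hat{\u}^{(t)}_2[d]|$\ldots\ so $|\b{u}_d^{(t)}|\to\infty$'' presupposes exactly the no-cancellation lower bound it is meant to produce. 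The coupled $(\b{u}_d^{(t)},\text{anti-sum})$ recursion can in principle be made to work, but it needs a genuine bootstrap: you must show the ratio of anti-sum to $\b{u}_d^{(t)}$ is eventually contracting regardless of its initial value, using that the off-diagonal coupling is of order $\mathrm{Im}\bigl(\hat{\z}^{(t)}[d]\e^{-\ci\phi_{\hat{\bar\z}^\infty[d]}}\bigr)=o(|\hat{\z}^{(t)}[d]|)$ while the gap between the diagonal factors is of order $2\eta_t|\hat{\z}^{(t)}[d]|$.

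The paper's route is different at this point: it never couples $|\b{u}_d^{(t)}|^2$ with the anti-sum. Instead, its Step~2 establishes the phase inequality $\cos\bigl(\phi_{\hat{\bar\z}^\infty[d]}-\phi_{\hat{\bar\w}^\infty[d]}\bigr)\ge 0$ by a contradiction argument on the recursion for $|\hat{\w}^{(t)}[d]|^2$ derived from eq.~\eqref{eq:cn-beta}: a strictly negative cosine would make $|\hat{\w}^{(t)}[d]|^2$ eventually non-increasing, contradicting its divergence. Together with Step~1, this lower-bounds the denominator of eq.~\eqref{eq:tdud} by $1$, which simultaneously yields $\boldsymbol{\tau}_d^{(t)}/\b{u}_d^{(t)}\to0$ and, since at least one $|\hat{\u}^{(t)}_l[d]|$ diverges, $|\b{u}_d^{(t)}|\to\infty$. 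That is a one-sided estimate rather than a two-variable dynamical argument, which is why it closes more cleanly; I would steer you toward it. One caveat applies to both routes and is worth flagging: in Case B the divergence $|\hat{\w}^{(t)}[d]|\to\infty$ is needed (implicitly by your Step~2, and explicitly in the paper's Steps~1 and~2), yet it is not immediate from the global scaling, since $|\hat{\w}^{(t)}[d]|/g(t)^2\to|\hat{\bar\w}^\infty[d]|=0$ there. Whichever route you take, that divergence deserves a dedicated justification rather than being taken for granted.
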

Using the above lemma, we have $\bdelta^{(t)}_{3,d}\to0$ such that $\boldsymbol{\tau}_d^{(t)}=\bdelta^{(t)}_{3,d}\b{u}_d(t)$. Additionally, since $\frac{|\hat{\z}^{(t)}[d]|}{p(t)}\to|\hat{\bar\z}^\infty[d]|$, there exists $\bdelta^{(t)}_{4,d}\to0$ such that $|\hat{\z}^{(t)}[d]|=|\hat{\bar\z}^\infty[d]|p(t)+\bdelta^{(t)}_{4,d} p(t)$. Substituting these representations in  eq. \eqref{eq:cn-ud+}, we have the following dynamics for $\b{u}_d(t)$,

\begin{equation}
\begin{split}
\b{u}_d^{(t+1)}&=\left[1+\eta_t p(t)\left(|\hat{\bar\z}^\infty[d]|+\bdelta^{(t)}_{4,d}\right)\left(1+\bdelta^{(t)}_{1,d}+\bdelta^{(t)}_{3,d}\right)\right]\b{u}_d^{(t)}\\
&\overset{(a)}{:=}\left[1+\eta_t p(t)\left(|\hat{\bar\z}^\infty[d]|+\bdelta^{(t)}_{d}\right)\right]\b{u}_d^{(t)},
\end{split}
\label{eq:cn-ud++}
\end{equation}
where in $(a)$ we have accumulated all diminishing terms into $\bdelta^{(t)}_{d}=\bdelta^{(t)}_{4,d}\left(1+\bdelta^{(t)}_{1,d}+\bdelta^{(t)}_{3,d}\right)+|\hat{\bar\z}^\infty[d]|\left(\bdelta^{(t)}_{1,d}+\bdelta^{(t)}_{3,d}\right)\to0$. 

\item \textit{Remainder of the proof: }We now prove our theorem by looking the following quantity: For any $d,d'$ with $\hat{\bar\z}^\infty[d],\hat{\bar\z}^\infty[d']\neq 0$, define $\kappa_{d,d'}^{(t)}=\left|\frac{\b{u}_d^{(t)}}{\b{u}_{d'}^{(t)}}\right|$. 

We will show that  whenever $|\hat{\bar{\z}}^\infty[d]|>|\hat{\bar{\z}}^\infty[d']|$, we  get $\kappa_{d,d'}^{(t)}\to\infty$. 
Along with eq. \eqref{eq:cn-udlim}, this would imply  that  $\lim\limits_{t\to\infty}\kappa_{d,d'}^{(t)}=\sqrt{\frac{|\hat{\bar{\w}}^\infty[d]|}{|\hat{\bar{\w}}^\infty[d']|}}=\infty$. 
Hence, for any $d,d'$ with $\hat{\bar{\w}}^\infty[d]=0$ and $\hat{\bar{\w}}^\infty[d']\neq0$, we have $\bar{\gamma}|\hat{\bar\z}^\infty[d]|\le\bar{\gamma}|\hat{\bar\z}^\infty[d']|$. Moreover from eq.\eqref{eq:cn-slackness-mag}), we  know that $\bar{\gamma}|\hat{\bar{\z}}^\infty[d']|=1$ for all $d'$ with $\hat{\bar{\w}}^\infty[d']\neq0$. This implies $\forall d$, $\bar{\gamma}|\hat{\bar\z}^\infty[d]|\le1$ and concludes the proof.

\textit{Showing $|\hat{\bar{\z}}^\infty[d]|>|\hat{\bar{\z}}^\infty[d']|\implies \kappa_{d,d'}^{(t)}\to\infty$:} 

For any $2\epsilon>0$, let $|\hat{\bar{\z}}^\infty[d]|-|\hat{\bar{\z}}^\infty[d']|=2\epsilon>0$. We note that the since the loss $\c{L}(\w[t])\to0$, norm of the gradient $p(t)=\norm{\z[t]}=\norm{\hat{\z}^{t}}\to0$. Hence, for any finite step size sequence $\{\eta_t\}$, there exists $t_1$ such that $\forall t\ge t_1$ and $\forall d$, $\eta_tp(t)\left(|\hat{\bar{\z}}^\infty[d]|+|\bdelta[t]_d|\right)< 0.5$ and the following inequalities hold,
\begin{align}
\kappa_{d,d'}^{(t+1)}&=\left|\frac{\b{u}_d^{(t+1)}}{\b{u}_{d'}^{(t+1)}}\right|=\left|\frac{\left(1+\eta_t\left(|\hat{\bar{\z}}^\infty[d]|+\bdelta[t]_d\right)p(t)\right)}{\left(1+\eta_t\left(|\hat{\bar{\z}}^\infty[d']|+\bdelta[t]_d\right)p(t)\right)} \frac{\b{u}_d^{(t)}}{\b{u}_{d'}^{(t)}}\right|\\
&\ge \frac{\left(1+\eta_t\left(|\hat{\bar{\z}}^\infty[d]|-|\bdelta[t]_d|\right)p(t)\right)}{\left(1+\eta_t\left(|\hat{\bar{\z}}^\infty[d']|+|\bdelta[t]_{d'}|\right)p(t)\right)} \kappa_{d,d'}^{(t)}\\
&\overset{(a)}\ge \left(1+\eta_t\left(|\hat{\bar{\z}}^\infty[d]|-|\bdelta[t]_d|\right)p(t)\right)\left(1-\eta_t\left(|\hat{\bar{\z}}^\infty[d']|+|\bdelta[t]_{d'}|\right)p(t)\right)\kappa_{d,d'}^{(t)}\\
&\overset{(c)}\ge \left(1+\eta_t\left(2\epsilon+\bdelta^{(t)}_{d,d'}\right)p(t)\right)\kappa_{d,d'}^{(t)},
\end{align}
where in $(a)$ follows from using $\nicefrac{1}{(1+x)}\ge (1-x)$ for $x< 1$ since $\eta_tp(t)\left(|\hat{\bar{\z}}^\infty[d]|+|\bdelta[t]_d|\right)< 0.5$ for all $t\ge t_1$, and  in $(c)$, we absorbed all $o(p(t))$ terms as $\bdelta^{(t)}_{d,d'}p(t)$ for $\bdelta^{(t)}_{d,d'}\to0$ and used $|\hat{\bar{\z}}^\infty[d]|-|\hat{\bar{\z}}^\infty[d']|=2\epsilon>0$. 

Since $\bdelta^{(t)}_{d,d'}\to0$, for large enough $t_2$ and $t\ge t_2$, we have $|\bdelta^{(t)}_{d,d'}|<\epsilon$. Thus, for all $t\ge \max\{t_1,t_2\}$, 
\begin{equation}
\kappa_{d,d'}^{(t+1)}\ge \left(1+\eta_t\epsilon p(t)\right)\kappa_{d,d'}^{(t)}.
\label{eq:kt}
\end{equation}

Further,  from the conditions of the theorem, for almost all initializations, $|\hat{\u}^{(0)}_l[d]|>0$ for all $d$. For step sizes $\{\eta_t\}$ smaller than the local Lipschitz constant, for all finite $t^\prime<\infty$, we also have $|\u[t^\prime]_l[d]|>0$. Moreover from Lemma~\ref{lem:cn-lemma}, we have that $|\b{u}^{(t)}_d|,|\b{u}^{(t)}_{d'}|\to \infty$ and hence $\exists t_3$ such that $\forall t\ge t_3$, $|\b{u}^{(t)}_d|>0$, but for any finite $t'<\infty$, $|\b{u}^{(t')}_{d'}|<\infty$. Thus, for  $t_0=\max\{t_1,t_2,t_3\}$, using the above observations, we have that $\kappa_{d,d'}^{(t_0)}=\left|\frac{\b{u}^{(t_0)}_d}{\b{u}^{(t_0)}_{d'}}\right|>0$. 

Now, using eq. \eqref{eq:kt}, for all $t\ge t_0$, 
\begin{align}
\kappa_{d,d'}^{(t+1)}\ge 
(1+\eta_t\epsilon p(t))\kappa_{d,d'}^{(t)}=\left(\prod_{u=t_0}^t(1+\eta_u\epsilon p(u))\right)\kappa_{d,d'}^{(t_0)} \tand \kappa_{d,d'}^{(t_0)}>0.
\label{eq:prodk}
\end{align}

Finally, we show the following claim:
\begin{claim} For any finite $t_0$, finite step-sizes $\{\eta_t\}$, and any $\epsilon>0$, we have $\prod_{u=t_0}^t(1+\eta_u\epsilon p(u))\to\infty$.
\end{claim}
\begin{proof} Let $\mu=\max_{d} |\hat{\bar\z}^\infty[d]|+\max_{t>t_0}|\bdelta[t]_d|<\infty$. 
From eq. \eqref{eq:cn-ud++}, we have that for all $d$, \[|\b{u}^{(t+1)}_d|\le(1+\mu\eta_t p(t))|\b{u}^{(t)}_d|\le |\b{u}^{(t_0)}_d|\prod_{u=t_0}^t(1+\mu \eta_up(u))\le |\b{u}^{(t_0)}_d|\exp(\sum_{u=t_0}^t\mu\eta_u p(u)).\] 
Moreover, we have $\b{u}^{(t)}_d\to\infty$ for at least one $d$, and for any finite step sizes and  finite $t_0$, $|\b{u}^{(t_0)}_d|<\infty$. This then implies that for some  $\mu<\infty$, $\exp(\sum_{u=t_0}^t\mu\eta_u p(u))\to\infty\implies \sum_{u=t_0}^t \eta_up(u)\to\infty$. 
Thus, for any $\epsilon>0$, we also have $\prod_{u=t_0}^t(1+\epsilon \eta_up(u))\ge \epsilon\sum_{u=t_0}^t \eta_u p(u)\to\infty$.
\end{proof}


From eq. \eqref{eq:prodk} and above claim, we conclude that for all $d,d'$, if $|\hat{\bar{\z}}^\infty[d]|>|\hat{\bar{\z}}^\infty[d']|$, then $\kappa_{d,d'}^{(t)}\to\infty$. 
\end{asparaenum}
This completes the proof of the theorem. \mybox
\subsubsection{Proof of Lemma~\ref{lem:cn-lemma}}\label{app:cnlem}
\lemcnlemma*
\begin{proof}

Recalling $\boldsymbol{\tau}_d^{(t)}$ from eq. \eqref{eq:cn-ud+} and $\b{u}_d^{(t)}$ from eq. \eqref{eq:cn-ud}, we have the following:
\begin{equation}
\frac{\boldsymbol{\tau}_d^{(t)}}{\b{u}_d^{(t)}}=\ci\bdelta^{(t)}_{2,d}\frac{
{\hat\u}_2^{(t)*}[d]-{\hat\u}_1^{(t)}[d]\cdot\e^{-\ci\phi_{\hat{\bar\z}^\infty[d]}}}{\hat{\u}_1^{(t)}[d]\cdot\e^{-\ci\phi_{\hat{\bar\z}^\infty[d]}}+\hat{\u}_2^{(t)*}[d].}=\ci\bdelta^{(t)}_{2,d}\frac{1-\frac{|\hat\u_1^{(t)}[d]|}{|\hat\u_2^{(t)}[d]|}\cdot\e^{-\ci\phi_{\hat{\bar\z}^\infty[d]}+\ci\phi_{\hat{\w}^{(t)}[d]}}}{1+\frac{|\hat\u_1^{(t)}[d]|}{|\hat\u_2^{(t)}[d]|}\cdot\e^{-\ci\phi_{\hat{\bar\z}^\infty[d]}+\ci\phi_{\hat{\w}^{(t)}[d]}}}
\label{eq:tdud}
\end{equation}

For all $d$ if $\hat{\bar\w}^\infty[d]=\hat{\bar\u}_1^\infty[d]\cdot\hat{\bar\u}_2^\infty[d]\neq0$, the it is straightforward to see that $\frac{|\hat\u_1^{(t)}[d]|}{|\hat\u_2^{(t)}[d]|}=\frac{|\hat\u_1^{(t)}[d]|/g(t)}{|\hat\u_2^{(t)}[d]|/g(t)}\to\frac{|\hat{\bar\u}_1^{\infty}[d]|}{|\hat{\bar\u}_2^{\infty}[d]|}=1$ (from eq. \eqref{eq:cnt1}), and also that $\e^{-\ci\phi_{\hat{\bar\z}^\infty[d]}+\ci\phi_{\hat{\w}^{(t)}[d]}}\to\e^{-\ci\phi_{\hat{\bar\z}^\infty[d]}+\ci\phi_{\hat{\bar\w}^\infty[d]}}=1$ (from eq. \eqref{eq:cnt2}). 
This along with eq. \eqref{eq:tdud} gives us $\frac{\boldsymbol{\tau}_d^{(t)}}{\b{u}_d^{(t)}}\to0$.

 Moreover, since $|\hat\w^{(t)}[d]|\to\infty$, we have $|\hat\u_2^{(t)}[d]|$ or $|\hat\u_2^{(t)}[d]|\to\infty$. Further, using $\e^{-\ci\phi_{\hat{\bar\z}^\infty[d]}+\ci\phi_{\hat{\w}^{(t)}[d]}}\to1$, we  have $|\b{u}_d^{(t)}|=|\hat\u_2^{(t)}[d]|+|\hat\u_1^{(t)}[d]|\e^{-\ci\phi_{\hat{\bar\z}^\infty[d]}+\ci\phi_{\hat{\w}^{(t)}[d]}}\to\infty$.

We now only need to show that these results also hold for $d$ such that $\hat{\bar\w}^\infty[d]=0$. Recall from the assumptions of the theorem that even when $\hat{\bar\w}^\infty[d]=0$, $\exists \phi_{\hat{\bar\w}^\infty[d]}\in[0,2\pi)$ such that $\e^{\ci\phi_{\hat{\w}^{(t)}[d]}}\to\e^{\ci\phi_{\hat{\bar\w}^\infty[d]}}$. We now prove the lemma by showing the following steps for  $d$ such that $\hat{\bar\w}^\infty[d]=0$. :
\begin{compactenum}[Step 1.]
\item Show $\frac{|{\hat{\u}^{(t)}_1[d]}|}{|{\hat{\u}^{(t)}_2[d]}|}\to1$.
\item Show $\text{Re}(\e^{-\ci\phi_{\hat{\bar\z}^\infty[d]}+\ci\phi_{\hat{\bar\w}^{\infty}[d]}})=2\cos(\phi_{\hat{\bar\z}^\infty[d]}-\phi_{\hat{\bar\w}^{\infty}[d]})\ge 0$.
\end{compactenum}

\paragraph{Proof of lemma assuming Step 1 and Step 2 hold}

The above steps would imply that in eq. \eqref{eq:tdud},
\begin{compactitem}
\item the denominator satisfies 
\begin{equation}
\begin{split}
\left|1+\frac{|\hat\u_1^{(t)}[d]|}{|\hat\u_2^{(t)}[d]|}\cdot\e^{-\ci\phi_{\hat{\bar\z}^\infty[d]}+\ci\phi_{\hat{\w}^{(t)}[d]}}\right|\to&\left|1+\e^{-\ci\phi_{\hat{\bar\z}^\infty[d]}+\ci\phi_{\hat{\bar\w}^{\infty}[d]}}\right|\\
&\ge
\left|1+\text{Re}(\e^{-\ci\phi_{\hat{\bar\z}^\infty[d]}+\ci\phi_{\hat{\bar\w}^\infty[d]}})\right|\ge1.
\end{split}
\label{eq:cn-denom}
\end{equation}
\item the numerator satisfies
\begin{equation}
\left|\bdelta^{(t)}_{2,d}\left(1-\frac{|\hat\u_1^{(t)}[d]|}{|\hat\u_2^{(t)}[d]|}\cdot\e^{-\ci\phi_{\hat{\bar\z}^\infty[d]}+\ci\phi_{\hat{\w}^{(t)}[d]}}\right)\right|\le |\bdelta^{(t)}_{2,d}|\left|1+\frac{|\hat\u_1^{(t)}[d]|}{|\hat\u_2^{(t)}[d]|}\right|\to0.
\label{eq:cn-num}
\end{equation}
\end{compactitem}
These eqs. along with eq. \eqref{eq:tdud} in turn prove the lemma, \textit{i.e.},  $\frac{\boldsymbol{\tau}_d^{(t)}}{\b{u}_d^{(t)}}\to0$ and  $|\b{u}_d^{(t)}|\to\infty$. 

\paragraph{Showing Step 1 and Step 2}
\begin{asparaenum}[\textbf{Step} 1.]
\item \textit{Show $\frac{|{\hat{\u}^{(t)}_1[d]}|}{|{\hat{\u}^{(t)}_2[d]}|}\to1$.}

From the dynamics of $\hat{\u}^{(t)}_l[d]$ from eq. \eqref{eq:cn-grad-2}, we have the following,
\begin{equation}
\begin{split}
|\hat{\u}^{(t+1)}_1[d]|^2&=|\hat{\u}^{(t)}_1[d]|^2+\eta_t\hat{\z}^{(t)}[d]\cdot\hat{\w}^{(t)*}[d]+\eta_t\hat{\z}^{(t)*}[d]\cdot\hat{\w}^{(t)}[d]+\eta_t^2|\hat{\z}^{(t)}[d]|^2|\hat{\u}^{(t)}_2[d]|^2\\
|\hat{\u}^{(t+1)}_2[d]|^2&=|\hat{\u}^{(t)}_2[d]|^2+\eta_t\hat{\z}^{(t)}[d]\cdot\hat{\w}^{(t)*}[d]+\eta_t\hat{\z}^{(t)*}[d]\cdot\hat{\w}^{(t)}[d]+\eta_t^2|\hat{\z}^{(t)}[d]|^2|\hat{\u}^{(t)}_1[d]|^2
\end{split}
\label{eq:cn-ulnorm}
\end{equation}
Note that since $|\hat{\z}^{(t)}[d]|^2\to0$ and $\eta_t$ are finite, we have that $\exists t_1$ such that for all $t\ge t_1$, $\eta_t|\hat{\z}^{(t)}[d]|^2\le1$. From the above equation, we have the following for $t\ge t_1$,
\begin{equation}
\begin{split}
\left||\hat{\u}^{(t+1)}_1[d]|^2-|\hat{\u}^{(t+1)}_2[d]|^2\right|&=\left|\left(1-\eta_t^2|\hat{\z}^{(t)}[d]|^2\right)\left(|\hat{\u}^{(t)}_1[d]|^2-\hat{\u}^{(t)}_2[d]|^2\right)\right|\\
&\overset{(a)}{=}\left(\prod_{u=t_1}^t\left(1-\eta_u^2|\hat{\z}^{(u)}[d]|^2\right)\right)\left||\hat{\u}^{(t_1)}_1[d]|^2-\hat{\u}^{(t_1)}_2[d]|^2\right|\\
&\le \left||\hat{\u}^{(t_1)}_1[d]|^2-\hat{\u}^{(t_1)}_2[d]|^2\right|<\infty,
\end{split}
\end{equation}
where $(a)$ follows from iterating over $t$ and using $|\hat{\z}^{(t)}[d]|^2\le1$ for $t\ge t_1$. 

Since $|\hat{\w}^{(t)}[d]|=|\hat{\u}^{(t)}_1[d]|\cdot|\hat{\u}^{(t)}_2[d]|\to\infty$, at least one of $|\hat{\u}^{(t)}_1[d]|,|\hat{\u}^{(t)}_2[d]|$ must diverge. Without loss of generality, let $|\hat{\u}^{(t)}_2[d]|\to\infty$. Let $c(t):=|\hat{\u}^{(t)}_1[d]|^2-|\hat{\u}^{(t)}_2[d]|^2$ with $|c(t)|<\infty$. We  have 
\begin{equation}
\frac{|\hat{\u}^{(t)}_1[d]|^2}{|\hat{\u}^{(t)}_2[d]|^2}=1+\frac{c(t)}{|\hat{\u}^{(t)}_2[d]|^2}\overset{(a)}\to 1,
\end{equation}
where the convergence in $(a)$ follows since $|c(t)|<\infty$ (from eq. \eqref{eq:cn-ulnorm}) and $|\hat{\u}^{(t)}_2[d]|\to\infty$.

\item \textit{Show $\text{Re}(\e^{-\ci\phi_{\hat{\bar\z}^\infty[d]}+\ci\phi_{\hat{\bar\w}^{\infty}[d]}})=2\cos(\phi_{\hat{\bar\z}^\infty[d]}-\phi_{\hat{\bar\w}^{\infty}[d]})\ge 0$.} 

Note that from Step 1 above, we have that $\frac{|\hat{\u}^{(t)}_1[d]|^2}{|\hat{\u}^{(t)}_2[d]|^2}\to1$, which implies $\frac{|\hat{\u}^{(t)}_1[d]|^2+|\hat{\u}^{(t)}_2[d]|^2}{2|\hat{\w}^{(t)}[d]|}=\frac{|\hat{\u}^{(t)}_1[d]|^2+|\hat{\u}^{(t)}_2[d]|^2}{2|\hat{\u}^{(t)}_1[d]|\cdot|\hat{\u}^{(t)}_2[d]|}\to 1$. Thus, there exists $\bdelta[t]_{1,d}\to0$, such that 
\begin{equation}
|\hat{\u}^{(t)}_1[d]|^2+|\hat{\u}^{(t)}_2[d]|^2=2|\hat{\w}^{(t)}[d]|\cdot(1+\bdelta[t]_{1,d}).
\label{eq:cnt3}
\end{equation}
Also, from eq. \eqref{eq:cn-z}, there exists $\bdelta[t]_{2,d}\to0$, such that 
\begin{equation}
\hat{\z}^{(t)}[d]=\hat{\bar{\z}}^\infty[d] p(t)+\bdelta[t]_{2,d}p(t)\text{, with }p(t)=\norm{\hat{\z}^{(t)}}\to 0.
\label{eq:cnt4}
\end{equation}

Using the above representations, along with eq. \eqref{eq:cn-grad-2}, we have the following,
\begin{align}
\nonumber\hat{\w}^{(t+1)}[d]&=\hat{\w}^{(t)}[d]+\eta_t\hat{\z}^{(t)}[d]\left[|\hat{\u}^{(t)}_1[d]|^2+|\hat{\u}^{(t)}_2[d]|^2+\eta_t{\hat{\z}^{(t)}}[d]\cdot\hat{\w}^{(t)*}[d]\right]\\
\nonumber&\overset{(a)}=\hat{\w}^{(t)}[d]+2\eta_tp(t)|\hat{\w}^{(t)}[d]|\left(\hat{\bar{\z}}^\infty[d] +\bdelta[t]_{2,d}\right)\left[1+\bdelta[t]_{1,d}+\nicefrac{1}{2}\eta_t\hat{\z}^{(t)}[d]\e^{-\ci\phi_{\hat{\w}^{(t)}[d]}}\right]\\
&\overset{(b)}{:=}\hat{\w}^{(t)}[d]+2\eta_tp(t)|\hat{\w}^{(t)}[d]|\left[\hat{\bar{\z}}^\infty[d] +\bdelta[t]_{3,d}\right],
\label{eq:cn-beta}
\end{align}
where $(a)$ follows from substituting eqs. \eqref{eq:cnt3}-\eqref{eq:cnt4}, and $(b)$ follows from using $|\hat{\z}^{(t)}[d]|\le p(t)\to0$ and defining $\bdelta[t]_{3,d}=\bdelta[t]_{2,d}\left[1+\bdelta[t]_{1,d}+\nicefrac{1}{2}\eta_t\hat{\z}^{(t)}[d]\e^{-\ci\phi_{\hat{\w}^{(t)}[d]}}\right]+\hat{\bar{\z}}^\infty[d]\left[\bdelta[t]_{1,d}+\nicefrac{1}{2}\eta_t\hat{\z}^{(t)}[d]\e^{-\ci\phi_{\hat{\w}^{(t)}[d]}}\right]\to0$. 

Denote $\Delta_d=\phi_{\hat{\bar\w}^{\infty}[d]}-\phi_{\hat{\bar{\z}}^\infty[d]}$. Additionally,  from the assumption in the theorem, we have $\e^{\ci\phi_{\hat{\w}^{(t)}[d]}}\to\e^{\ci\phi_{\hat{\bar\w}^{\infty}[d]}}$, hence there exists $\bdelta[t]_{4,d}\to0$ such that $\e^{\ci\phi_{\hat{\w}^{(t)}[d]}-\ci\phi_{\hat{\bar{\z}}^\infty[d]}}=\e^{\ci\Delta_d}(1+\bdelta[t]_{4,d})$. 

Now, from the above equation, for any $t_0$ and $t\ge t_0$, we derive the updates for $|\hat{\w}^{(t)}[d]|$,
\begin{align}
\nonumber|\hat{\w}^{(t+1)}[d]|^2&=|\hat{\w}^{(t)}[d]|^2\bigg(\e^{\ci\phi_{\hat{\w}^{(t)}[d]}}+2\eta_tp(t)\left[\hat{\bar{\z}}^\infty[d] +\bdelta[t]_{3,d}\right]\bigg)\bigg(\e^{-\ci\phi_{\hat{\w}^{(t)}[d]}}+2\eta_tp(t)\left[\hat{\bar{\z}}^{\infty*}[d] +\boldsymbol{\delta}^{(t)*}_{3,d}\right]\bigg)\\
\nonumber & \overset{(a)}=|\hat{\w}^{(t)}[d]|^2\left[1+2\eta_tp(t)\left(|\hat{\bar{\z}}^{\infty}[d]|\left(\e^{\ci\Delta_d}(1+\bdelta[t]_{4,d})+\e^{-\ci\Delta_d}(1+\bdelta^{(t)*}_{4,d})\right)+\bdelta[t]_{5,d}\right)\right]\\
\nonumber&\overset{(b)}{=}|\hat{\w}^{(t)}[d]|^2\left[1+4\eta_tp(t)\left(|\hat{\bar{\z}}^\infty[d]|\cos(\Delta_d) +\bdelta[t]_{6,d}\right)\right]\\
\nonumber&\overset{(c)}=|\hat{\w}^{(t_0)}[d]|^2\left[\prod_{u=t_0}^t \left(1+4\eta_up(u)\left(|\hat{\bar{\z}}^\infty[d]|\cos(\Delta_d) +\bdelta[u]_{6,d}\right)\right)\right]\\
&\overset{(d)}\le |\hat{\w}^{(t_0)}[d]|^2\exp(\sum_{u=t_0}^t4\eta_up(u)\left(|\hat{\bar{\z}}^\infty[d]|\cos(\Delta_d) +\bdelta[u]_{6,d}\right)),
\label{eq:cnbetamag}
\end{align}
where in $(a)$ we used $\e^{\ci\phi_{\hat{\w}^{(t)}[d]}-\ci\phi_{\hat{\bar{\z}}^\infty[d]}}=\e^{\ci\Delta_d}(1+\bdelta[t]_{4,d})$ and collected all $o(p(t))$ terms into $\bdelta[t]_{5,d}=\nicefrac{1}{2}\e^{\ci\phi_{\hat{\w}^{(t)}[d]}}\boldsymbol{\delta}^{(t)*}_{3,d}+2p(t)\hat{\bar{\z}}^\infty[d]\left[\hat{\bar{\z}}^{\infty*}[d] +\boldsymbol{\delta}^{(t)*}_{3,d}\right]+\bdelta[t]_{3,d}\bigg(\e^{-\ci\phi_{\hat{\w}^{(t)}[d]}}+2p(t)\left[\hat{\bar{\z}}^{\infty*}[d] +\boldsymbol{\delta}^{(t)*}_{3,d}\right]\bigg)\to0$ (since $p(t),\boldsymbol{\delta}^{(t)}_{3,d}\to0$); in $(b)$ we defined  $\bdelta[t]_{6,d}=\nicefrac{1}{2}\boldsymbol{\delta}^{(t)*}_{4,d}\e^{\ci\Delta_d}+\nicefrac{1}{2}\boldsymbol{\delta}^{(t)}_{3,d}\e^{-\ci\Delta_d}+\boldsymbol{\delta}^{(t)}_{5,d}\to0$; $(c)$ is obtained by iterating over $t$; and $(d)$ follows from using $(1+x)\le \exp(x)$.

If possible, let $\cos(\Delta_d)=-2\epsilon<0$. Since $|\bdelta[t]_{6,d}|\to0$, and for finite step sizes $\eta_tp(t)\to0$,  $\exists t_0$ such that for all $t\ge t_0$, $|\bdelta[t]_{6,d}|<\epsilon|\hat{\bar{\z}}^\infty[d]|$ and $\exp(-4\epsilon|\hat{\bar{\z}}^\infty[d]|\eta_tp(t))\le 1$. From eq.  \eqref{eq:cnbetamag}, we now have
\begin{align}
\nonumber|\hat{\w}^{(t+1)}[d]|^2\le |\hat{\w}^{(t_0)}[d]|^2\exp(-4\epsilon|\hat{\bar{\z}}^\infty[d]|\sum_{u=t_0}^t\eta_up(u))\le |\hat{\w}^{(t_0)}[d]|^2.
\label{eq:cnbetamag}
\end{align}
Finally, for any finite step sizes and finite $t_0$, we have $|\hat{\w}^{(t_0)}[d]|^2<\infty$ and this creates a contradiction since the LHS in the above equation diverges, $|\hat{\w}^{(t+1)}[d]|^2\to\infty$. Hence, in order for the updates in eq. \eqref{eq:cnbetamag} to lead to a divergent $|\hat{\w}^{(t+1)}[d]|$, we necessarily require that $\cos(\e^{\ci\Delta_d})=\text{Re}(\e^{-\ci\phi_{\hat{\bar\z}^\infty[d]}+\ci\phi_{\hat{\bar\w}^{\infty}[d]}})=2\cos(\phi_{\hat{\bar\z}^\infty[d]}-\phi_{\hat{\bar\w}^{\infty}[d]})\ge 0$.
\end{asparaenum}
This completes the proof of the lemma.
\end{proof}

\section{Computing $\c{R}_{\P}(\w)$: Proofs of Lemmas in Section~\ref{sec:Rw}}\label{app:lem}
In this appendix we prove the lemmas in Section~\ref{sec:Rw} that compute the form of induced bias of linear networks in the space of predictors. Recall that for linear predictors parameterized as $\w=\P(\u)$, $\c{R}_{\P}(\w)=\min_{\u:\P(\u)=\w}\norm{\u}_2^2$. 
\lemfcn*
\begin{proof} Recall that for fully connected networks of any depth $L>0$ with parameters  
${\u=[\u_l\in\bR^{D_{l-1}\times D_l}]_{l-1}^{L}}$, the equivalent  linear predictor given by $\P_{full}(\u)=\u_1\u_2\ldots\u_L$. 

We first show that $\c{R}_{\P_{full}}(\w)\ge L\norm{\w}_2^{\nicefrac{2}{L}}$. \\Let  $\u^\star({\w})=[\u^\star_{l}({\w})]_{l=1}^L$ be the minimizer of $\min_{\u:\P_{full}(\u)=\w} \norm{\u}_2^2$, so that $\w=\P_{full}(\u^\star({\w}))=\u^\star_1({\w})\cdot\u^\star_2({\w})\ldots\u^\star_L({\w})$ and $\c{R}_{\P_{full}}(\w)=\norm{\u^\star({\w})}_2^2=\sum_{l=1}^L\norm{\u^\star_l({\w})}_2^2$.  We then have, 
\begin{align}
\nonumber \norm{\w}_2^{\nicefrac{2}{L}}&=\norm{\u^\star_1({\w})\cdot\u^\star_2({\w})\ldots\u^\star_L({\w})}^{\nicefrac{2}{L}}_2\le \norm{\u^\star_1({\w})}_2^{\nicefrac{2}{L}}\norm{\u^\star_2({\w})}_2^{\nicefrac{2}{L}}\ldots\norm{\u^\star_L({\w})}_2^{\nicefrac{2}{L}}\\
&\overset{(a)}\le \frac{1}{L}\sum_{l=1}^L \norm{\u^\star_l(\w)}_2^2\! =\frac{1}{L}\c{R}_{\P_{full}}(\w),
\label{eq:rw-fcn}
\end{align}
where $(a)$ follows as arithmetic mean is greater than the geometric mean. 

Next, we show that $\c{R}_{\P_{full}}(\w)\le L\norm{\w}_2^{\nicefrac{2}{L}}$.\\
Given any unit norm vectors $\b{z}_l\in\bR^{D_l}$ for $l=1,2,\ldots,L$, consider $\bar{\u}=[\bar{\u}_l]$, defined as  
\[\bar{\u}_l=\left\{\begin{array}{ll}
\norm{\w}_2^{\nicefrac{1}{L}}\frac{\w}{\norm{\w}_2}\b{z}_1^\top&\text{if }l=1\\
\norm{\w}_2^{\nicefrac{1}{L}} \b{z}_{l-1}\b{z}_l^\top& \text{if }l=2,3,\ldots,L-1\\
\norm{\w}_2^{\nicefrac{1}{L}} \b{z}_{L-1}&\text{if }l=L
\end{array}\right.\]


This ensures that $\P_{full}(\bar{\u})=\bar\u_1\bar\u_2\ldots\bar\u_L=\w$ and $\norm{\bar{\u}}_2^2=L\norm{\w}_2^{\nicefrac{2}{L}}$, and hence
\begin{equation}
\c{R}(\w)=\min_{\u:\P_{full}({\u})=\w}\norm{\u}_2^2\le \norm{\bar{\u}}_2^2=L\norm{\w}_2^{\nicefrac{2}{L}}. 
\label{eq:rw-fcn2}
\end{equation}

Combining  eq. \eqref{eq:rw-fcn} and eq. \eqref{eq:rw-fcn2}, we get $\c{R}_{\P_{full}}(\w)=L\norm{\w}_2^{2/L}$
\end{proof}

The proofs of the lemmas for computing $\c{R}_{\P}(\u)$ for diagonal and convolutional networks are similar to those of fully connected network.
\lemdn*
\begin{proof} Recall that for an $L$--layer linear diagonal networks with parameters 
${\u=[\u_l\in\bR^{D}]_{l-1}^{L}}$, the equivalent linear predictor is  given by $\P_{diag}(\u)=\text{diag}(\u_1)\text{diag}(\u_2)\ldots\text{diag}(\u_{L-1})\u_L$.

Let  $\u^\star(\w)=[\u^\star_l(\w)]_{l=1}^L$ be the minimizer of $\min_{\u:\P_{diag}(\u)=\w} \norm{\u}_2^2$, so that $\w=\P_{diag}(\u^\star(\w))$ and $\c{R}_{\P_{diag}}(\w)=\norm{\u^\star(\w)}_2^2$. We then have, 
\begin{align}
\nonumber\sum_{d=0}^{D-1} |\w{[d]}|^{\nicefrac{2}{L}}&=\sum_{d=0}^{D-1}\prod_{l=1}^L |\u_1^\star(\w)[d]|^{\nicefrac{2}{L}}\overset{(a)}\le \frac{1}{L} \sum_{d=0}^{D-1}\sum_{l=1}^L |\u_1^\star(\w)[d]|^2\\
&=\frac{1}{L}\norm{\u^\star(\w)}^2_2=\frac{1}{L} \c{R}_{\P_{diag}}(\w),
\label{eq:rw-dn}
\end{align}
where $(a)$ again follows as arithmetic mean is greater than the geometric mean. 

Similar to the case of fully connected networks, we now choose $\bar{\u}=[\bar{\u}_l]$ that satisfies $\P_{diag}(\bar{\u})=\w$ and $\norm{\bar{\u}}_2^2=L\norm{\w}_{\nicefrac{2}{L}}^{\nicefrac{2}{L}}$. This would ensure that, \[\c{R}_{\P_{diag}}(\w)=\min_{\u:\P_{diag}({\u})=\w}\norm{\u}_2^2 \le \norm{\bar{\u}}_2^2=L\norm{\w}_{\nicefrac{2}{L}}^{\nicefrac{2}{L}}.\]

We can check that these properties are satisfied by choosing $\bar{\u}$ as follows: for $d=0,1,\ldots D-1$, let  $\bar{\u}_1[d]=\text{sign}(\w[d])\,|\w[d]|^{\nicefrac{1}{L}}$ and $\bar{\u}_l[d]=|\w[d]|^{\nicefrac{1}{L}}$ for $l=2,3,\ldots,L$. 

Combining this argument with eq.~\ref{eq:rw-dn} concludes the proof. 
\end{proof}

For convolutional networks, the argument is the exactly the same as that for diagonal network adapted for complex vectors. 
\lemcn*
\begin{proof} 
Denote  the Fourier basis coefficients of  $\u_l\in\bR^{D}$  and $\w=\P_{conv}(\u)\in\bR^{D}$ in polar form as 
\[\hat\u_l=|\hat\u_l|\e^{\ci\boldsymbol{\phi}_{\hat\u_l}}\in\mathbb{C}^D,\quad \hat\w=|\hat\w|\e^{\ci\boldsymbol{\phi}_{\hat\w}}\in\mathbb{C}^D,\] where $|{\hat\u_l}|,|{\hat\w}|\in\bR^D_+$ and $\boldsymbol{\phi}_{\hat\u_l},\boldsymbol{\phi}_{\hat\w}\in[0,2\pi)^D$ are the vectors with magnitudes and phases, respectively, of $\hat\u_l,\hat\w$. 

From Lemma~\ref{lem:fft-conv}, the Fourier basis representation of  $\w=\P_{conv}(\u)$ is given by \[\hat{\w}=\text{diag}(\hat\u_1)\text{diag}(\hat\u_2)\ldots\text{diag}(\hat\u_{L-1})\hat\u_L=\P_{diag}(\hat{\u}),\] where we have overloaded the notation $\P_{diag}$ to denote the mapping of diagonal networks in complex vector fields, and $\hat{\u}=[\hat\u_l]_{l=1}^L$. We thus have for $d=0,1,\ldots,{D-1}$, 
\[|{\hat\w}[d]|=\prod_{l=1}^L|\hat\u_l[d]|,\quad\tand\quad\boldsymbol{\phi}_{\hat\w}[d]=\left(\sum_{l=1}^L\boldsymbol{\phi}_{\hat\u_l}[d]\right)\text{ mod }2\pi.\]

From orthonormality of discrete Fourier transformation, we have for all $\u$, $\norm{\u}_2^2=\norm{\hat{\u}}_2^2$. Thus,  

\begin{equation}
\c{R}_{\P_{conv}}(\w) =\min_{\u:\P_{conv}(\u)=\w} \norm{\u}_2^2=\min_{\hat\w:\hat{\w}=\P_\text{diag}(\hat{\u})}\norm{\hat\u}_2^2. 
\end{equation}

We can now  adapt the proof of diagonal networks here. 
Let  $\hat\u^\star(\w)=[\hat\u^\star_l(\w)\in\b{C}^D]_{l=1}^L$ be the minimizer of $\min_{\hat{\u}:\hat{\w}=\P_\text{diag}(\hat{\u})} \norm{\hat\u}_2^2$, so that $\hat{\w}=\P_{diag}(\hat{\u}^\star(\w))$  and $\c{R}_{\P_{conv}}(\w)=\norm{\hat{\u}^\star(\w)}_2^2$, and
\begin{align}
\nonumber\sum_{d=0}^{D-1} |\hat\w{[d]}|^{\nicefrac{2}{L}}&=\sum_{d}\prod_{l=1}^L |\hat\u_1^\star(\w)[d]|^{\nicefrac{2}{L}}\le \frac{1}{L} \sum_{d}\sum_{l=1}^L |\hat\u_1^\star(\w)[d]|^2\\
&=\frac{\norm{\hat{\u}^\star(\w)}_2^2}{L}=\frac{1}{L} \c{R}_{\P_{conv}}(\w).
\label{eq:rw-cn}
\end{align}

Similar to the diagonal networks,  we can choose the parameters in the Fourier domain   $\hat{\bar{\u}}=[\hat{\bar{\u}}_l\in\mathbb{C}^D]$ to ensure that $\P_{diag}(\hat{\bar{\u}})=\hat{\w}$  and $\norm{\hat{\bar{\u}}}_2^2=L\norm{\hat{\w}}_{\nicefrac{2}{L}}^{\nicefrac{2}{L}}$ as follows: for $d=0,1,\ldots D-1$, let  
\[\hat{\bar{\u}}_1[d]=\boldsymbol{\phi}_{\hat{\w}}[d]\,|\hat{\w}[d]|^{\nicefrac{1}{L}} \tand \hat{\bar{\u}}_l[d]=|\hat\w[d]|^{\nicefrac{1}{L}}, \forall l>1.\] 

This gives us \[\c{R}_{\P_{conv}}(\w)=\min\limits_{\u:\P_{diag}(\hat{\u})=\hat{\w}}\norm{\hat{\u}}_2^2\le\norm{\hat{\bar{\u}}}_2^2\le L\norm{\hat{\w}}_{\nicefrac{2}{L}}^{\nicefrac{2}{L}}.\]
Combining this with eq.~\ref{eq:rw-cn} concludes the proof. 
\end{proof}
\section{Background Results}
\begin{theorem}[Stolz--Cesaro theorem, proof in Theorem~$1.22$ of \citet{muresan2009concrete}] Assume that $\{a_k\}_{k=1}^{\infty}$  and $\{b_k\}_{k=1}^{\infty}$ are two sequences of real numbers such that $\{b_k\}_{k=1}^{\infty}$ is strictly monotonic and diverging (i.e., monotone increasing with $b_k\to\infty$, or monotone decreasing with $b_k\to-\infty$).
Additionally, if $\lim_{k\to\infty} \frac{a_{k+1}-a_k}{b_{k+1}-b_k}=L$ exists, then $\lim_{k\to\infty} \frac{a_{k}}{b_{k}}$ exists and is equal to $L$. 
\label{thm:stolzcesaro}
\end{theorem}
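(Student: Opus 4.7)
The plan is to give a standard epsilon-telescoping argument, which is the discrete analogue of L'Hopital's rule. The first step will be a reduction to a single case: if $b_k$ is strictly monotone decreasing with $b_k \to -\infty$, then replacing the pair $(a_k, b_k)$ by $(-a_k, -b_k)$ preserves both the hypothesis (the difference ratios are unchanged and still converge to $L$) and the conclusion ($a_k/b_k$ unchanged). So without loss of generality I may assume $b_k$ is strictly increasing with $b_k \to +\infty$; in particular $b_{k+1} - b_k > 0$ for every $k$, and $b_m > 0$ for all sufficiently large $m$.

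The heart of the argument is then a two-sided bound coming from the hypothesis combined with telescoping. Given any $\varepsilon > 0$, choose $N$ so that for all $k \ge N$,
\[
(L - \varepsilon)(b_{k+1} - b_k) \;<\; a_{k+1} - a_k \;<\; (L + \varepsilon)(b_{k+1} - b_k),
\]
which is legitimate precisely because $b_{k+1} - b_k > 0$. Summing these inequalities from $k = N$ to $k = m-1$ for any $m > N$ yields, after the sum telescopes on both sides,
\[
(L - \varepsilon)(b_m - b_N) \;<\; a_m - a_N \;<\; (L + \varepsilon)(b_m - b_N).
\]

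Dividing through by $b_m > 0$ (valid for $m$ large) and rearranging gives
\[
(L - \varepsilon)\!\left(1 - \tfrac{b_N}{b_m}\right) + \tfrac{a_N}{b_m} \;<\; \tfrac{a_m}{b_m} \;<\; (L + \varepsilon)\!\left(1 - \tfrac{b_N}{b_m}\right) + \tfrac{a_N}{b_m}.
\]
Since $a_N$ and $b_N$ are fixed constants while $b_m \to \infty$, the quantities $a_N/b_m$ and $b_N/b_m$ both tend to $0$ as $m \to \infty$. Taking $\liminf$ and $\limsup$ of this double inequality as $m \to \infty$ gives
$L - \varepsilon \le \liminf_m a_m/b_m \le \limsup_m a_m/b_m \le L + \varepsilon$, and since $\varepsilon > 0$ was arbitrary, both one-sided limits equal $L$, so $\lim_m a_m/b_m = L$.

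There is no real obstacle here; the only care needed is the sign bookkeeping in the reduction step and ensuring $b_m > 0$ before dividing, both of which are immediate consequences of the monotonicity and divergence of $\{b_k\}$. The argument is essentially a Cauchy-style squeeze, and the full detail can be found in Theorem~$1.22$ of \citet{muresan2009concrete}.
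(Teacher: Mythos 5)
Your proof is correct: the reduction to the increasing case via $(a_k,b_k)\mapsto(-a_k,-b_k)$ is sound, the telescoped two-sided bound is legitimate because $b_{k+1}-b_k>0$ for $k\ge N$, and the $\liminf$/$\limsup$ squeeze after dividing by $b_m>0$ finishes the argument. Note that the paper itself offers no proof of this statement---it is quoted purely as a background result with the proof deferred to the cited textbook---so there is nothing to compare against; your argument is the standard epsilon-telescoping proof and correctly establishes the result as stated.
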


}
\end{document}